\title{Disentangling Sampling and Labeling Bias for Learning in Large-Output Spaces}
\author{Ankit Singh Rawat}
\author{Aditya Krishna Menon}
\author{Wittawat Jitkrittum}
\author{Sadeep Jayasumana}
\author{Felix X. Yu}
\author{Sashank Reddi}
\author{Sanjiv Kumar}
\affil{Google Research, New York \\
{\small \texttt{\{ankitsrawat,adityakmenon,wittawat,sadeep,felixyu,sashank,sanjivk\}@google.com}}}
\newcommand{\headSymb}{\uparrow}
\newcommand{\tailSymb}{\downarrow}
\newcommand{\defEq}{\stackrel{.}{=}}
\newcommand{\indicator}[1]{\llbracket #1 \rrbracket}
\newcommand{\argmaxUnique}[2]{{\operatorname{argmax }\nolimits_{#1}}\, #2}
\renewcommand{\Pr}{\mathbb{P}}
\newcommand{\E}[2]{\underset{#1}{\mathbb{E}}\left[ #2 \right]}
\newcommand{\V}[2]{\underset{#1}{\mathbb{V}}\left[ #2 \right]}
\newcommand{\X}{{x}}
\newcommand{\Y}{{y}}
\newcommand{\NCal}{\mathscr{N}}
\newcommand{\XCal}{\mathscr{X}}
\newcommand{\YCal}{\mathscr{Y}}
\newcommand{\Real}{\mathbb{R}}
\newcommand{\precision}{\mathrm{Precision}@}
\newcommand{\recall}{\mathrm{Recall}@}
\newcommand{\expp}{{\sc Exp }}
\newtheorem{lemma}{Lemma}
\newtheorem{theorem}[lemma]{Theorem}
\newtheorem{proposition}[lemma]{Proposition}
\theoremstyle{definition}
\newtheorem{remark}{Remark}
\newcommand{\delicioussmall}{{\sc Delicious}}
\newcommand{\amazonsmall}{{\sc AmazonCat-13k}}
\newcommand{\wikilshtc}{{\sc WikiLSHTC-325k}}
\newcommand{\NegLabels}{\mathscr{N}}
\newcommand{\qDist}{q}
\newcommand{\qY}{q_{y}}
\newcommand{\qYPrime}{q_{y'}}
\newcommand{\changesNew}[1]{\textcolor{Black}{#1}}
\newcommand{\updates}[2]{\textcolor{Black}{#1}}
\newcommand{\omission}[1]{}
\begin{document}

\maketitle

\begin{abstract}
Negative sampling schemes enable efficient training given a large number of classes,
by
offering a means to approximate 
a computationally expensive loss function that takes all labels into account. 
In this paper, we 
present a new
connection between these schemes and loss modification techniques for countering \emph{label imbalance}. 
We show that different 
negative sampling schemes
\emph{implicitly}
trade-off
performance on dominant versus rare labels.
Further, we provide a unified means to 
explicitly
tackle both \emph{sampling bias}, arising from working with a subset of all labels, and \emph{labeling bias}, which is inherent to the data due to label imbalance. 
We empirically verify our findings on long-tail classification and retrieval benchmarks.
\end{abstract}

\section{Introduction}
\label{sec:intro}
Classification problems with a large number 
of labels
arise in language modelling~\citep{Mikolov:2013,Levy:2014}, 
recommender systems~\citep{Covington:2016,Xu:2016},
and information retrieval~\citep{Agrawal:2013,Prabhu:2014}.
Such 
\emph{large-output}
problems pose a core challenge:
losses
such as the softmax cross-entropy
can be prohibitive 
to optimise,
as they depend on the \emph{entire} set of labels. Several works have thus devised
\emph{negative sampling}
schemes for efficiently and effectively approximating such losses~\citep{Bengio:2008,Blanc:2018,Ruiz:2018,Bamler:2020}.

Broadly,
negative sampling techniques
\emph{sample} a subset of ``negative'' labels,
which are used to contrast against the observed ``positive'' labels.
One 
further applies a
suitable \emph{weighting} on these ``negatives'', which ostensibly corrects the 
\emph{sampling bias} introduced by 
the dependence on a random subset of labels.
Intuitively, 
such bias
assesses how closely a scheme
approximates the 
unsampled loss on the full label set.
This bias is well understood for \emph{sampled softmax} schemes (see, e.g.,~\citet{Bengio:2008});
surprisingly,
however,
far less is understood about other popular schemes, 
e.g.,
within-batch and uniform sampling (cf.~\S\ref{sec:background-neg-sampling}).

In this paper,
we systematically study the sampling bias of 
generic negative sampling schemes,
with two main findings.
First, we precisely characterise the \emph{implicit losses} optimised by such schemes (\S\ref{sec:sampling-bias}).
This reveals that, e.g., uniform and within-batch sampling
do \emph{not} faithfully approximate the unsampled loss;
thus, 
measured by the yardstick of sampling bias,
such schemes appear woefully sub-optimal.

Intriguingly,
however,
our analysis further reveals a novel \emph{labeling bias} perspective on negative sampling.
Specifically, 
we show that 
when the label distribution is skewed
---
as is common with a large number of labels~\citep{Jain:2016}
---
different
negative sampling schemes 
\emph{implicitly} trade-off 
performance on \emph{dominant versus rare} labels (\S\ref{sec:labelling-bias}).
Consequently,
while
certain schemes may incur a sampling bias, 
they may prove highly effective 
in
modeling rare labels.

Put together, our analysis
reveals that
negative sampling can account for \emph{both} sampling and labeling bias,
with the latter done \emph{implicitly}.
Note that the two concerns fundamentally differ:
the former results from working with a random subset of labels,
while 
the latter results from an inherent property of the data.
By disentangling these concerns,
we arrive at a unified means to \emph{explicitly} tackle both biases (\S\ref{sec:unified}).

In summary, our contributions are:
\begin{enumerate}[label=(\roman*),itemsep=0pt,topsep=0pt,leftmargin=16pt]
    \item we precisely characterise the effect of 
    generic negative sampling schemes
    by explicating
    the \emph{implicit} losses they optimise (\S\ref{sec:sampling-bias}; Theorem~\ref{thm:implicit-xent}).
    This shows that popular schemes
    such as uniform and within-batch negative sampling~\citep{Hidasi:2016,Wang:2017b}
    do \emph{not} faithfully approximate the unsampled loss.

    \item we 
    provide a novel connection between 
    negative sampling and long-tail learning,
    and show that
    negative sampling
    can
    \emph{implicitly}
    trade off performance on dominant versus rare labels
    (\S\ref{sec:long-tail-view}; Proposition~\ref{prop:loss-connection}). This 
    implies
    that,
    e.g., within-batch sampling
    can boost performance on rare labels
    despite not approximating the unsampled loss.
    Further, this yields a unified means to tailor sampling schemes to focus on
    particular label slices (\S\ref{sec:unified}).

    \item we empirically verify 
    our analysis of sampling schemes'
    implicit
    dominant versus rare label tradeoffs (\S\ref{sec:experiments}).
\end{enumerate}

\section{Background and notation}
\label{sec:background}
Consider a multiclass classification problem with instances $\XCal$, labels $\YCal = [ L ] \defEq \{ 1, 2, \ldots, L \}$,
and distribution $\Pr$ over $\XCal \times \YCal$. 
Given a sample $\{ ( x_n, y_n ) \}_{n = 1}^N \sim \Pr^{N}$, 
we seek a scorer $f \colon \XCal \to \Real^L$ minimising the 
misclassification error 
\begin{equation}
    \label{eqn:zero-one}
    L_{\rm err}( f ) = {\Pr}_{\X, \Y}\big( y \notin \argmaxUnique{y' \in [L]}{ f_{y'}( x ) } \big).
\end{equation}
To achieve this,
a common strategy is to 
minimise
a surrogate loss
$\ell \colon \YCal \times \Real^L \to \Real_+$,
where $\ell( y, f( x ) )$ is the loss incurred for predicting $f( x )$ when the true label is $y$. 
A ubiquitous choice is
the softmax cross-entropy,
\begin{align}
    \label{eqn:softmax-xent}
    \ell( y, f( x ) ) %&= -\log p_y(x) \\
    &= -f_y( x ) + \log\Big[ \sum\nolimits_{y' \in [L]} e^{f_{y'}( x )} \Big] =\log \Big[1 + \sum\nolimits_{y' \neq y}e^{f_{y'}(x) - f_y(x)} \Big].
\end{align}
Equivalently, this is the log-loss under the \emph{softmax distribution} $p_y( x ) \propto \exp( f_y( x ) )$
with \emph{logits} $f_y( x )$.
Alternately, one may use a \emph{decoupled} loss~\citep{zhang2004statistical}
\begin{equation}
    \label{eqn:decoupled}
    \ell( y, f( x ) ) = \phi( f_y( x ) ) + \sum\nolimits_{y' \neq y} \varphi( -f_{y'}( x ) ),
\end{equation}
where $\phi, \varphi \colon \Real \to \Real_+$
are \emph{margin losses} for binary classification,
e.g., hinge loss. In~\eqref{eqn:softmax-xent} and~\eqref{eqn:decoupled},
we may consider $y$ to be a ``positive'' label, and each $y' \neq y$ a ``negative'' label.

In \emph{large-output} settings where number of labels $L$ is large,
there are at least two additional considerations in designing a loss.
First, most labels may have only a few associated samples~\citep{HeGa09,Buda:2017}.
Second, computing the losses~\eqref{eqn:softmax-xent} and~\eqref{eqn:decoupled} may be prohibitively expensive
owing to their
linear complexity in $L$~\citep{Bengio:2008,Gutmann:2012,Reddi19a}.
These problems have been separately addressed in the literatures on \emph{long-tail learning} and \emph{negative sampling}.

\subsection{Long-tail learning methods}
\label{sec:background-long-tail}

In \emph{long-tail} learning settings,
$\pi_y \defEq \Pr( y )$ is highly skewed,
and so many labels have few associated samples.
Here, 
achieving low misclassification error~\eqref{eqn:zero-one} 
may mask poor performance on rare classes,
which is undesirable.
To cope with this problem,
existing approaches include 
post-processing model outputs~\citep{Fawcett:1996,Zhou:2006},
re-balancing the training data~\citep{Chawla:2002,Wallace:2011},
and modifying the loss function~\citep{Xie:1989,Morik:1999}.
One strategy is to 
encourage larger classification margins for rare labels.
Such approaches augment the softmax cross-entropy with
\emph{pairwise label margins} $\rho_{yy'}$~\citep{Menon:2020}:
\begin{align}
    \label{eqn:unified-margin-loss}
    \ell^{\rho}_{\mathrm{mar}}( y, f( x ) ) = \log\Big[ 1 + \sum\nolimits_{y' \neq y} {\rho_{y y'}} \cdot e^{f_{y'}( x ) - f_{y}( x )} \Big],
\end{align}
where $\log \rho_{y y'}$
represents the
desired gap or ``margin'' between scores for $y$ and $y'$.
For suitable $\rho_{yy'}$, this loss allows for greater emphasis on rare classes' predictions. For example,~\citet{Cao:2019} proposed 
an ``adaptive'' loss with $\rho_{yy'} \propto {\pi_{y}^{-1/4}}$.
This upweights rare ``positive'' labels $y$ to encourage a larger gap $f_{y}(x) - f_{y'}(x)$ for such labels.
\citet{Tan:2020} proposed
an ``equalised'' loss with
$\rho_{y y'} = F( \pi_{y'} )$,
for increasing $F \colon [0, 1] \to \Real_+$.
This downweights rare ``negatives'' $y'$, thus preventing $f_{y'}$ from being too small. Finally,~\citet{Menon:2020} proposed 
a ``logit adjusted'' loss with
$\rho_{yy'} = \frac{\pi_{y'}}{\pi_y}$.
This encourages larger margin between positives $y$ that are \emph{relatively} rare compared to negative $y'$.

Another popular strategy is to re-weight the individual samples, so that rare labels receive a higher penalty;
however, these are generally outperformed by margin approaches~\citep{Cao:2019}.
Further, we shall demonstrate the latter have a strong connection to negative sampling.

\subsection{Negative sampling methods}
\label{sec:background-neg-sampling}
\begin{table*}[!t]
    \centering
    \footnotesize
    \renewcommand{\arraystretch}{1.0}
    %\resizebox{0.99\linewidth}{!}{%
    \begin{tabular}{@{}ll@{}}
        \toprule
        {\bf Sampling distribution $q$} & {\bf Description} \\
        \midrule
        Current model probabilities & Model-based negatives \\        
        Uniform distribution & Uniform negatives \\
        Training distribution & Within-batch negatives \\
        \bottomrule
    \end{tabular}
    \qquad
    \begin{tabular}{@{}ll@{}}
        \toprule
        {\bf Weight $w_{yy'}$} & {\bf Description} \\
        \midrule
        ${1}/{(m \cdot \qYPrime)}$ & Importance weighting \\        
        $1/m$ & Constant weighting \\
        ${\qY}/{\qYPrime}$ & Relative weighting \\
        \bottomrule
    \end{tabular}
    \qquad    
    %}
    
    \caption{Summary of sampling distributions $\qDist \in \Delta_{[L]}$
    for an example $(x, y)$, 
    and weights $w_{yy'}$ on the negatives $y'$.
    Here, $m$ is the number of sampled negatives. Note that one may exclude the positive label $y$ either while sampling, or via weighting.
    }
    \label{tbl:summary-decoupled}
    \vspace{-\baselineskip}
\end{table*}

At a high level,
negative sampling proceeds as follows:
given an example $( x, y )$,
we 
draw an i.i.d.~sample 
$\NegLabels = \{ y'_1, \ldots, y'_m \}$ of $m$ ``negative'' labels from some distribution $\qDist \in \Delta_{[L]}$.
We now compute a loss using $\NegLabels$ in place of $[ L ]$,
applying suitable \emph{weights} $w_{yy'} \geq 0$.
For example, applied to the softmax cross-entropy in \eqref{eqn:softmax-xent},
we get
\begin{equation}
    \label{eqn:weighted-sampled-softmax}
    \ell( y, f( x ); \NegLabels, w ) = \log\Big[ 1 + \sum\nolimits_{y' \in \NegLabels} w_{yy'} \cdot e^{f_{y'}( x ) - f_{y}( x )} \Big].
\end{equation}
Similarly, the decoupled loss in~\eqref{eqn:decoupled} becomes
\begin{equation}
    \label{eqn:weighted-sampled-decoupled}
    \ell( y, f( x ); \NegLabels, w ) = \phi( f_y( x ) ) + \sum_{y' \in \NegLabels} w_{yy'} \cdot \varphi( -f_{y'}( x ) ).
\end{equation}

\begin{remark}
The distribution $\qDist$ may or may not include the positive label $y$ amongst the negative sample $\NegLabels$.
In the latter case, $\qY = 0$. See Appendix~\ref{app:additional-q-w} for further discussion.
\end{remark}

\begin{remark}
We write $q, w$ for brevity;
in general,
each of these may additionally depend on $(x,y)$.
\end{remark}

There are several canonical $(q, w)$ choices in the literature. For the sampling distribution $q$,
popular choices are:
\begin{itemize}[label={--},itemsep=0pt,topsep=0pt,leftmargin=12pt]
    \item 
    \textbf{Model-based}.~Here, $q$ is taken to be the
    current model probabilities;
    e.g., 
    when training $f \colon \XCal \to \Real^L$
    to minimise the softmax cross-entropy~\eqref{eqn:softmax-xent},
    $\qYPrime \propto \exp( f_{y'}( x ) )$,
    or an 
    efficient approximation thereof~\citep{Bengio:2008,Blanc:2018,Rawat:2019}.

    \item 
    \textbf{Uniform}.~Here,
    $q$ is uniform over all labels~\citep{Hidasi:2016,Wang:2017b},
    i.e.,
    $\qYPrime = \frac{1}{L}$
    for every $y' \in [L]$.

    \item 
    \textbf{Within-batch}.~Suppose we perform minibatch-based model training,
    so that we iteratively make updates based on a random subset $\{ ( x_b, y_b ) \}_{b = 1}^B$ of  training samples.
    In within-batch negative sampling, 
    we use all labels in the minibatch as our negatives, 
    i.e., $\NegLabels = \bigcup_{b=1}^{B} \{ y_b \}$~\citep{Chen:2017,Broscheit:2020}. In expectation, this is equivalent to $q = \pi$, the training label distribution.
\end{itemize}

For the weighting scheme $w_{yy'}$, popular choices are:
\begin{itemize}[label={--},itemsep=0pt,topsep=0pt,leftmargin=12pt]
    \item 
    \textbf{Importance weighting}.~Here,
    $w_{yy'} = \frac{1}{m \cdot \qYPrime}$; this choice is motivated from the importance weighting identity,
    as shall subsequently be explicated.
    
    \item
    \textbf{Constant}.~Here,
    $w_{yy'}$ is a constant for all 
    sampled negatives;
    typically, this is just $w_{yy'} \equiv \frac{1}{m}$.

    \item 
    \textbf{Relative weighting}.~Here,
    $w$
    is a variant of importance weighting,
    given by the density ratio 
    $w_{yy'} = {\qY}/{\qYPrime}$.
\end{itemize}

We summarize these common choices for $(q,w)$ in Table~\ref{tbl:summary-decoupled}. 
Particular combinations of $( q, w )$ yield common negative sampling schemes.
For example, the \emph{sampled softmax}~\citep{Bengio:2008,Blanc:2018}
employs a model-based sampling distribution $q$,
coupled with the importance weighting $w$.
% $w_{yy'} = \frac{1}{m \cdot \qYPrime}$.
Within-batch and uniformly sampled negatives are typically coupled with a constant weight~\citep{Hidasi:2016,Wang:2017b,Broscheit:2020}. \citet{YiRecSys19} employed within-batch sampling with relative weighting.
Appendix~\ref{app:additional-q-w} discusses further variants.

\omission{For example,
sampled softmax methods typically exclude the positive label $y$ from the sampling domain~\citep{Blanc:2018,Rawat:2019}.
However,
implementations in popular libraries\footnote{e.g., TensorFlow's \href{https://www.tensorflow.org/api_docs/python/tf/nn/sampled_softmax_loss}{\tt tf.nn.sampled\_softmax\_loss}.}
do 
\emph{not} explicitly exclude the positive label from sampling,
but rather set $w_{yy} = 0$,
and employ the relative weights $w_{yy'} = \frac{q_{y'}}{q_{y}}$.
\citet{Yi:2019} proposed to further not enforce that $w_{yy} = 0$.}

\subsection{Efficiency versus efficacy of negative sampling}

There are two primary considerations in assessing a negative sampling scheme.
The first is \emph{efficiency}:
to facilitate large-scale training,
it ought to be cheap to sample from the distribution $q$.
Uniform and within-batch negatives
are particularly favourable from this angle,
and have thus enjoyed wide practical use~\citep{Hidasi:2016,Yi:2019}. %~\citep{Hidasi:2016,Wang:2017,YiRecSys19,Broscheit:2020}.

The other side of the coin is \emph{efficacy}:
to ensure good model performance,
the weights $w$ ostensibly ought to yield a good approximation to the original (unsampled) loss.
Sampled softmax schemes have been extensively studied through this lens;
surprisingly, however, 
less attention has been spent on uniform and within-batch sampling,
despite their popularity.

We now study the efficacy of generic sampling schemes, 
with the aim of understanding their {\em sampling bias} 
(i.e., deviation from the unsampled loss). 
In a nutshell, we explicate the \emph{implicit loss} that a 
given sampling scheme optimises (\S\ref{sec:sampling-bias}).
Beyond elucidating their sampling bias,
this gives a novel 
\emph{labeling bias} view of negative sampling,
and connects them to the long-tail literature (\S\ref{sec:background-long-tail}).
In particular,
the implicit losses bear strong similarity with those designed to ensure balanced performance for long-tail settings. We thus propose a unified approach to design negative sampling schemes that overcome {sampling} \emph{and} {labeling} bias~(\S\ref{sec:labelling-bias}).

\section{The implicit losses of negative sampling}
\label{sec:sampling-bias}
Given a negative sampling scheme parameterised by  $( q, w )$, 
we now ask: what loss does this scheme \emph{implicitly} optimise? 
Comparing this {\em implicit loss} with the unsampled loss (e.g., the softmax cross-entropy~\eqref{eqn:softmax-xent}) quantifies the {\em sampling bias},
which we shall see is non-trivial for popular schemes.

\begin{table*}[!t]
    \centering
    \renewcommand{\arraystretch}{1.5}
    
    \resizebox{0.99\linewidth}{!}{
    \begin{tabular}{@{}llll@{}}
        \toprule
        \textbf{Sampling distribution} & \textbf{Weighting $w_{yy'}$} & \textbf{\changesNew{Implicit loss}} & \textbf{Comment} \\
        \toprule
        Uniform
        &
        Constant ($\frac{1}{m}$)
        & 
        $\log\left[ 1 + \frac{1}{L} \sum_{y' \neq y}{ e^{{f}_{y'}( x ) - {f}_{y}( x )} } \right]$ 
         &
        Softmax with downweighted negatives  \\
        Uniform &  
        Importance 
        %($\frac{L}{m}$)
        ($\frac{1}{m \cdot \qYPrime}$)
        & 
        $\log\left[ 1 + \sum_{y' \neq y}{ e^{{f}_{y'}( x ) - {f}_{y}( x )} } \right]$ 
         &
        Softmax cross-entropy \\
        Uniform &  
        Relative 
        %($1$)
        ($\frac{q_{y}}{q_{y'}}$)
        & 
        $\log\left[ 1 + \frac{m}{L} \sum_{y' \neq y} e^{{f}_{y'}( x ) - {f}_{y}( x )} \right]$ 
         &
        Softmax with downweighted negatives \\    
        Uniform &  
        ``Tail'' 
        ($\frac{\pi_{y'}}{m \cdot q_{y'} \cdot \pi_{y}}$)
        & 
        $\log\left[ 1 + \sum_{y' \neq y} \frac{\pi_{y'}}{\pi_{y}} \cdot e^{{f}_{y'}( x ) - {f}_{y}( x )} \right]$ 
         &
        Logit-adjusted loss of~\citet{Menon:2020}${\color{RoyalBlue}^\tailSymb}$ \\        
        \midrule
        Within-batch
        &
        Constant ($\frac{1}{m}$)
        & 
        $\log\left[ 1 + \sum_{y' \neq y}{ \pi_{y'} \cdot e^{{f}_{y'}( x ) - {f}_{y}( x )} } \right]$ 
         &
        Equalised loss of~\citet{Tan:2020}${\color{RoyalBlue}^\tailSymb}$ \\
        Within-batch &  
        Importance 
        ($\frac{1}{m \cdot \qYPrime}$)
        & 
        $\log\left[ 1 + \sum_{y' \neq y}{ e^{{f}_{y'}( x ) - {f}_{y}( x )} } \right]$ 
         &
        Softmax cross-entropy \\
        Within-batch &  
        Relative 
        ($\frac{q_{y}}{q_{y'}}$)
        & 
        $\log\left[ 1 + {m \cdot \pi_{y}} \cdot \sum_{y' \neq y} e^{{f}_{y'}( x ) - {f}_{y}( x )} \right]$ 
         &
        Softmax with upweighted head labels${\color{ForestGreen}^\headSymb}$ \\  
        Within-batch &  
        ``Tail'' 
        ($\frac{\pi_{y'}}{m \cdot q_{y'} \cdot \pi_{y}}$)
        & 
        $\log\left[ 1 + \sum_{y' \neq y} \frac{\pi_{y'}}{\pi_{y}} \cdot e^{{f}_{y'}( x ) - {f}_{y}( x )} \right]$ 
         &
        Logit-adjusted loss of~\citet{Menon:2020}${\color{RoyalBlue}^\tailSymb}$ \\
        \bottomrule
    \end{tabular}
    }

    \caption{Implicit losses 
    for
    sampled softmax cross-entropy
    $\log\left[ 1 + \sum_{y' \in \NegLabels} w_{yy'} \cdot e^{f_{y'}( x ) - f_{y}( x )} \right]$ 
    on
    example $(x, y)$. Here, negatives $\NegLabels = \{ y'_1, \ldots, y'_m \}$ are sampled from $q$, 
    %with $q_y = 0$,
    and weighted with $w_{yy'}$. Different $(q, w)$ 
    from Table~\ref{tbl:summary-decoupled},
    and a ``tail'' weighting from~\eqref{eqn:logit-weight},
    yield \changesNew{implicit losses} (cf.~Lemma~\ref{lemm:expected-sampled-loss}) which equate to
    pairwise margin losses~\eqref{eqn:unified-margin-loss} from the long-tail literature,
    e.g.,~\citet{Tan:2020,Menon:2020}. Losses denoted ${\color{ForestGreen}^\headSymb}$ (${\color{RoyalBlue}^\tailSymb}$) are expected to  benefit ``head'' (``tail'') classes,
    i.e.,
    yield better classification on data points with frequent (rare) labels. For the case of decoupled losses, see Table~\ref{tbl:sampling-summary-contrastive} (Appendix). 
    }
    \label{tbl:sampling-summary-v2}
    \vspace{-\baselineskip}
\end{table*}

%%%%%%%%%%%%%%%%%%%%%%%%%%%%%%%%%%%%%%%%%%%%%%%%%%%%%%%%%%%%%%%%%%
%%%%%%%%%%%%%%%%%
%%%%%%%%%%%%%%%%%%
%%%%%%%%%%%%%%%%%%%%%%%%%%%%%%%%%%%%%%%%%%%%%%%%%%%%%%%%%%%%%%%%%
\subsection{Implicit losses for generic negative samplers}
\label{sec:implicit-sampling}

Negative sampling schemes are characterised by the sampling distribution $q$ and weighting scheme $w$.
Each $(q, w)$ pair defines the sampled loss minimised during the training based on a random subset of negatives $\NCal$ (cf.~\eqref{eqn:weighted-sampled-softmax} and~\eqref{eqn:weighted-sampled-decoupled}). Such sampled losses are random variables,
owing to their dependence on $\NCal$. We may thus study their corresponding \emph{expected losses}, which we view
as the \emph{implicit loss} being optimised during training. The form of this loss helps us understand the effect of sampling on model performance. 

In the following, 
we assume for simplicity that $\qY = 0$,
so that the positive label $y$ is excluded from the negative sample. 
We first consider the setting of decoupled losses (cf.~\eqref{eqn:weighted-sampled-decoupled}),
and then the more challenging softmax cross-entropy.

%%%%%%%%%%%%%%%%%%%%%%%%%%%%%%%%%%%%%%%%%%%%%%%%%%%
%%%%%%%%%%%%%%%%%%%%%%%%%%%%%%%%%%%%%%%%%%%%%%%%%%%
\textbf{Decoupled losses}.
\label{sec:implicit-decoupled}
For a generic $(q, w)$, 
we now characterise the behavior of the sampled decoupled loss~\eqref{eqn:weighted-sampled-decoupled}.

\begin{lemma}
\label{lemm:contrastive-expectation}
Pick any $(q, w)$
where $q \in \Delta_{[L]}$ has $\qY = 0$. 
For the sampled decoupled loss $\ell$ in~\eqref{eqn:weighted-sampled-decoupled}
with $m \in \mathbb{Z}_+$ negatives, 
\begin{align}
\label{eq:implicit-decoupled}
    &\underset{\NegLabels \sim q^m}{\mathbb{E}}\left[\ell(y, f(x); \NegLabels, w) \right] = \underbrace{\phi( f_y(x) ) + 
    \sum_{y' \neq y} \rho_{y y'} \cdot \varphi( -f_{y'}( x ) )}_{:= \ell^{q, w}_{m}(y, f(x))}  \\
    &\underset{\NegLabels \sim q^m}{\mathbb{V}}\left[ \ell(y, f(x); \NegLabels, w) \right] = \sum_{y' \neq y} w_{yy'} \cdot \rho_{y y'} \cdot \varphi( -f_{y'}( x ) )^2  - \frac{1}{m} \cdot \Big(\sum_{y' \neq y} \rho_{y y'} \cdot \varphi( -f_{y'}( x ) ) \Big)^2, \label{eq:implicit-decoupled-var}
\end{align}
where 
$\rho_{y y'} \defEq m \cdot w_{yy'} \cdot \qYPrime$.
\end{lemma}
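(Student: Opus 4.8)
The plan is to treat the random negative sample $\NegLabels$ as an ordered tuple $y'_1, \dots, y'_m$ of i.i.d.\ draws from $q$, so that the sampled decoupled loss~\eqref{eqn:weighted-sampled-decoupled} becomes a deterministic term plus a sum of i.i.d.\ random variables: $\ell(y, f(x); \NegLabels, w) = \phi(f_y(x)) + \sum_{i=1}^m g_i$ with $g_i \defEq w_{y y'_i}\,\varphi(-f_{y'_i}(x))$. Everything then reduces to computing the first two moments of a single summand $g \defEq w_{y Y'}\,\varphi(-f_{Y'}(x))$ for $Y' \sim q$, using $\qY = 0$ to drop the $y' = y$ term from any sum over $[L]$. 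Since $(x,y)$ is fixed throughout, $w_{yy'}$ and $f_{y'}(x)$ are constants, so nothing is lost by letting $w$ depend on $(x,y)$.

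For the expectation, linearity gives $\E{\NegLabels \sim q^m}{\ell(y, f(x); \NegLabels, w)} = \phi(f_y(x)) + m \cdot \mathbb{E}_{Y' \sim q}[g]$, and $\mathbb{E}_{Y' \sim q}[g] = \sum_{y' \neq y} \qYPrime \, w_{yy'}\,\varphi(-f_{y'}(x))$. Substituting $\rho_{yy'} \defEq m \, w_{yy'}\,\qYPrime$ absorbs the factor $m$ and the density $\qYPrime$, yielding exactly~\eqref{eq:implicit-decoupled}. For the variance, the deterministic term $\phi(f_y(x))$ contributes nothing and the $g_i$ are i.i.d., so $\mathbb{V}_{\NegLabels \sim q^m}[\ell] = m \cdot \mathbb{V}_{Y' \sim q}[g] = m\big(\mathbb{E}[g^2] - \mathbb{E}[g]^2\big)$. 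Plugging in $\mathbb{E}[g^2] = \sum_{y' \neq y} \qYPrime\, w_{yy'}^2\,\varphi(-f_{y'}(x))^2$ and reusing the first moment above gives $m\sum_{y' \neq y} \qYPrime\, w_{yy'}^2\,\varphi(-f_{y'}(x))^2 - \frac1m\big(\sum_{y' \neq y}\rho_{yy'}\,\varphi(-f_{y'}(x))\big)^2$; finally the identity $w_{yy'}\,\rho_{yy'} = m\,w_{yy'}^2\,\qYPrime$ rewrites the first term as $\sum_{y' \neq y} w_{yy'}\,\rho_{yy'}\,\varphi(-f_{y'}(x))^2$, matching~\eqref{eq:implicit-decoupled-var}.

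I expect no genuine analytical obstacle: the whole argument is linearity of expectation plus additivity of variance over independent summands, together with the change of variables $\rho_{yy'} = m\,w_{yy'}\,\qYPrime$. The only point that needs care — and the only place the stated identities could break — is the sampling convention: the $g_i$ are independent precisely because the $m$ negatives are drawn i.i.d.\ (with replacement), and the sums collapse to $y' \neq y$ only because $\qY = 0$. Sampling without replacement would add negative covariance cross-terms to the variance, and allowing $\qY > 0$ would reinstate a $y' = y$ contribution, so I would flag both assumptions explicitly at the outset of the proof.
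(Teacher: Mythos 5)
Your proof is correct and follows essentially the same route as the paper's: write the sampled loss as the deterministic positive term plus a sum of $m$ i.i.d.\ summands, apply linearity of expectation and additivity of variance for i.i.d.\ variables, and substitute $\rho_{yy'} = m\,w_{yy'}\,\qYPrime$. Your explicit flagging of the with-replacement i.i.d.\ convention and the role of $\qY=0$ is a nice touch, and for what it's worth your expectation calculation is cleaner than the paper's, which carries a spurious extra factor of $m$ in its final displayed line before the substitution.
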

Here, one can view $\ell^{q, w}_{m}(y, f(x))$ as the implicit loss being optimized during the training.
Compared to the unsampled loss $\ell$ in~\eqref{eqn:decoupled},
we see that for general 
$( q, w )$, $\ell^{q, w}_{m} \neq \ell$
owing to the former involving non-constant weights $\rho_{yy'}$ on each negative.
Thus,
negative sampling introduces a {\em sampling bias} in general. Eliminating this bias 
requires that $\rho_{y y'} = 1$,
which
for a given sampling distribution $q$ requires that 
$w_{yy'} = \frac{1}{m \cdot \qYPrime}$,
i.e.,
the importance weighting $w$ in Table~\ref{tbl:summary-decoupled}.

%%%%%%%%%%%%%%%%%%%%%%%%%%%%%%%%%%%%%%%%%%%%%%%%%%%
%%%%%%%%%%%%%%%%%%%%%%%%%%%%%%%%%%%%%%%%%%%%%%%%%%%
\textbf{Softmax cross-entropy loss}.
\label{sec:implicit-xent}
For the softmax cross-entropy, a closed-form for the implicit loss is elusive owing to the non-linear $\log$ in \eqref{eqn:weighted-sampled-softmax}. 
Nonetheless, 
to draw qualitative conclusions about different negative sampling schemes,
in this setting
we treat the following natural upper bound on the expected loss $\mathbb{E}_{\NegLabels}\left[\ell(y, f(x); \NegLabels, w) \right]$ as the implicit loss.

\begin{lemma}
\label{lemm:expected-sampled-loss}
Pick any $(q, w)$
where $q \in \Delta_{[L]}$ has $\qY = 0$.
For the sampled softmax cross-entropy $\ell$ in~\eqref{eqn:weighted-sampled-softmax},
\begin{align}
    \label{eqn:expected-sampled-loss-special}
    &\E{\NegLabels \sim q^m}{ \ell( y, f( x ); \NegLabels, w ) } \leq  \underbrace{\log\Big[ 1 + \sum_{y' \neq y} \rho_{yy'} \cdot e^{f_{y'}( x ) - f_{y}( x )} \Big]}_{:=\ell^{q, w}_m(y, f(x))},
\end{align}
where $\rho_{yy'} \defEq {m \cdot q_{y'} \cdot w_{yy'}}$ for $m \in \mathbb{Z}_+$ negatives.
\end{lemma}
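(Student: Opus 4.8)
The plan is to bound the expectation by Jensen's inequality and then evaluate the resulting quantity by linearity of expectation. Since $t \mapsto \log t$ is concave, applying Jensen's inequality to the (positive) random variable $1 + \sum_{y' \in \NegLabels} w_{yy'} \cdot e^{f_{y'}( x ) - f_{y}( x )}$ immediately gives
\[
    \E{\NegLabels \sim q^m}{ \log\Big[ 1 + \sum_{y' \in \NegLabels} w_{yy'} \cdot e^{f_{y'}( x ) - f_{y}( x )} \Big] } \leq \log\Big[ \E{\NegLabels \sim q^m}{ 1 + \sum_{y' \in \NegLabels} w_{yy'} \cdot e^{f_{y'}( x ) - f_{y}( x )} } \Big].
\]
So the whole proof reduces to computing the expectation inside the $\log$ on the right-hand side.

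Next I would evaluate that inner expectation. Writing $\NegLabels = \{ y'_1, \ldots, y'_m \}$ with each $y'_j$ drawn i.i.d.\ from $q$, the sum $\sum_{y' \in \NegLabels}( \cdot )$ is read as $\sum_{j = 1}^m ( \cdot )$ (counting repeated draws with multiplicity), so linearity of expectation together with the fact that the $m$ draws are identically distributed yields
\[
    \E{\NegLabels \sim q^m}{ \sum_{y' \in \NegLabels} w_{yy'} \cdot e^{f_{y'}( x ) - f_{y}( x )} } = m \cdot \sum_{y' \in [L]} q_{y'} \cdot w_{yy'} \cdot e^{f_{y'}( x ) - f_{y}( x )}.
\]
Invoking the hypothesis $\qY = 0$, the $y' = y$ term drops out of the sum over $[L]$, and recalling $\rho_{yy'} \defEq m \cdot q_{y'} \cdot w_{yy'}$ rewrites the right-hand side as $\sum_{y' \neq y} \rho_{yy'} \cdot e^{f_{y'}( x ) - f_{y}( x )}$. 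Substituting this back into the Jensen bound (keeping the constant $1$ inside the $\log$) gives exactly the claimed upper bound $\ell^{q, w}_m( y, f( x ) )$.

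The argument has no real obstacle: it is a one-line application of Jensen's inequality plus linearity of expectation. The only points deserving care are (i) interpreting $\sum_{y' \in \NegLabels}$ as a sum over the $m$ i.i.d.\ draws \emph{with multiplicity}, so that linearity applies cleanly even when $\NegLabels$ contains repeats; and (ii) using $\qY = 0$ to turn the sum over all of $[L]$ into a sum over $y' \neq y$, matching the form in the statement. It is worth emphasizing that, unlike the decoupled case (Lemma~\ref{lemm:contrastive-expectation}) where the expected loss is available in closed form, the nonlinearity of $\log$ forces this to be merely an upper bound; this is precisely why we adopt the right-hand side as the surrogate ``implicit loss'' $\ell^{q, w}_m$ rather than the true expected loss.
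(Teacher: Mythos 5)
Your proof is correct and is essentially the paper's argument: compute the expectation of the inner sum by linearity (using i.i.d. draws and $q_y = 0$), then apply Jensen's inequality via concavity of $\log$. The only cosmetic difference is the order in which you state the two steps, and your explicit remark that $\sum_{y' \in \NegLabels}$ is a sum over the $m$ draws with multiplicity is a useful clarification that the paper leaves implicit.
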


Similar to the decoupled case,
the implicit loss $\ell^{q, w}_m$ (i.e., the upper bound in Lemma~\ref{lemm:expected-sampled-loss})
diverges from the softmax cross-entropy in general;
thus, there is sampling bias.
The two 
coincide for the 
importance weights
$w_{yy'} = \frac{1}{m \cdot \qYPrime}$.
{In fact, when $q_{y'} \propto e^{f_{y'}}$, 
this choice of weight ensures \emph{equality} in \eqref{eqn:expected-sampled-loss-special}. 
For general $q$, 
it further
guarantees asymptotic unbiased estimate of the \emph{gradient} of the softmax cross-entropy as $m\to \infty$~\cite{Bengio:2008}. 
See also~\citet{Rawat:2019} for non-asymptotic bounds on the gradient bias.}

Before proceeding,
it is natural to ask if the upper bound $\ell_m^{q, w}$ in  \eqref{eqn:expected-sampled-loss-special} 
is tight enough to base any further analysis upon. 
The following result establishes that 
for large $m$, $\ell_m^{q, w}$
indeed approximates the true behavior of the sampled softmax cross-entropy loss in \eqref{eqn:weighted-sampled-softmax}
in a squared-error sense. 

\begin{theorem}
\label{thm:implicit-xent}
Pick any $(q, w)$ 
such that
$q \in \Delta_{[L]}$
has 
$q_y = 0$,
and $w_{yy'}$ is expressible as 
$\frac{1}{m} \cdot \eta_{yy'}$ for $\eta_{yy'}$ independent of $m$. 
Let 
$\ell$ be the softmax cross-entropy~\eqref{eqn:softmax-xent}, 
$\ell_m^{q, w}$ be as per Lemma~\ref{lemm:expected-sampled-loss}, $\mu_y(x) \defEq e^{f_y(x)}+  \mathbb{E}_{y' \sim q}[ \eta_{yy'} \cdot e^{f_{y'}(x)} ]$,
and $\sigma^2_y(x) \defEq \mathbb{V}_{y' \sim q}\left[\eta_{yy'} \cdot e^{f_{y'}(x)}\right] $. Assume that $\mu_y(x), \sigma_y^2(x) \in ( 0, +\infty )$.
Then, for large enough $m$, 
\begin{align}
&\mathbb{E}_{\NegLabels}\big[\big( \ell(y, f(x); \NegLabels, w)  - \ell^{q, w}_m(y, f(x)) \big)^2 \big]  = \frac{\sigma^2_y(x)}{m \cdot \mu_{y}^2(x)} + E_m,
\end{align}
where $E_m = o_p( 1 )$ converges to $0$ in probability.
\end{theorem}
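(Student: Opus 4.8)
The plan is to reduce the claim to a one-dimensional delta-method computation, exploiting that both the sampled loss in~\eqref{eqn:weighted-sampled-softmax} and the implicit loss $\ell^{q,w}_m$ depend on the random sample $\NegLabels$ only through a single empirical average. Writing $w_{yy'} = \tfrac1m\eta_{yy'}$ and setting $Z_i \defEq \eta_{y y'_i}\, e^{f_{y'_i}(x)}$ for the i.i.d.\ draws $y'_1,\dots,y'_m \sim q$, we have $\mathbb{E}[Z_i] = \bar\mu \defEq \mu_y(x) - e^{f_y(x)}$ and $\mathbb{V}[Z_i] = \sigma_y^2(x)$. Let $\bar Z_m \defEq \tfrac1m\sum_{i=1}^m Z_i$. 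Substituting into~\eqref{eqn:weighted-sampled-softmax} and using $q_y = 0$, a direct manipulation gives $\ell(y, f(x); \NegLabels, w) = -f_y(x) + \log\!\big(e^{f_y(x)} + \bar Z_m\big)$, while Lemma~\ref{lemm:expected-sampled-loss} (with $\rho_{yy'} = q_{y'}\eta_{yy'}$) yields $\ell^{q,w}_m(y,f(x)) = -f_y(x) + \log\mu_y(x)$. Hence the per-sample error collapses to
\[
\Delta_m \;\defEq\; \ell(y,f(x);\NegLabels,w) - \ell^{q,w}_m(y,f(x)) \;=\; \log\!\Big( 1 + \tfrac{\bar Z_m - \bar\mu}{\mu_y(x)}\Big),
\]
and it remains to expand $\mathbb{E}_{\NegLabels}\big[\Delta_m^2\big]$ to leading order in $m$.

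Next I would invoke the delta method for $h(u) = \log(1+u)$ at $u = 0$. Since $w_{yy'} \ge 0$ forces $Z_i \ge 0$, we have $\bar Z_m \ge 0$, so the argument $1 + (\bar Z_m-\bar\mu)/\mu_y(x) = (e^{f_y(x)}+\bar Z_m)/\mu_y(x)$ of the logarithm in $\Delta_m$ stays $\ge e^{f_y(x)}/\mu_y(x) > 0$, bounded away from $h$'s singularity; and the assumptions $\mu_y(x), \sigma_y^2(x) \in (0,\infty)$ license the weak law of large numbers ($\bar Z_m \to \bar\mu$ in probability) and, together with Chebyshev, give $U_m \defEq (\bar Z_m - \bar\mu)/\mu_y(x) = O_p(m^{-1/2})$ (the central limit theorem $\sqrt m(\bar Z_m - \bar\mu) \Rightarrow N(0,\sigma_y^2(x))$ also holds and is the source of the leading constant). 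Using $\log(1+u) = u - \tfrac{u^2}{2} + O(|u|^3)$ and squaring, on the high-probability event $\{|U_m|\le \tfrac12\}$ we get $\Delta_m^2 = U_m^2 + R_m$, where $R_m$ collects the cross term $-U_m^3$ and $O(U_m^4)$ contributions; on the complementary event, which has probability $o(1)$, $\Delta_m^2 \le C_0 + \big(\log(1+\bar Z_m/e^{f_y(x)})\big)^2$ for a fixed constant $C_0$. Taking expectations, the leading term is \emph{exact}, $\mathbb{E}[U_m^2] = \mathbb{V}[\bar Z_m]/\mu_y^2(x) = \sigma_y^2(x)/(m\,\mu_y^2(x))$, matching the claimed coefficient; we then define $E_m \defEq \mathbb{E}_{\NegLabels}[\Delta_m^2] - \sigma_y^2(x)/(m\,\mu_y^2(x))$, so that $E_m$ equals $\mathbb{E}_{\NegLabels}[R_m]$ plus the tail-event contribution, and the task is to show $E_m$ is of strictly lower order (one expects $E_m = o(1/m)$, hence in particular $o_p(1)$).

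The main obstacle is precisely this last step: turning the in-probability statement ``$R_m$ is negligible'' into control of its \emph{expectation}, since squaring a logarithm can in principle amplify rare large values of $\bar Z_m$. I see two routes. The clean one strengthens the moment hypotheses: if $Z_i$ has finite third and fourth moments, Marcinkiewicz--Zygmund gives $\mathbb{E}|\bar Z_m - \bar\mu|^k = O(m^{-k/2})$ for $k = 3,4$, which immediately bounds $\mathbb{E}_{\NegLabels}[R_m] = O(m^{-3/2}) = o(1/m)$, while on $\{|U_m|>\tfrac12\}$ one combines the crude bound $\Delta_m^2 \le C_0 + \big(\log(1+\bar Z_m/e^{f_y(x)})\big)^2$ — dominated by a fixed power of $\bar Z_m$ plus a constant — with Cauchy--Schwarz against the Chebyshev tail bound $\Pr(|U_m|>\tfrac12) = O(1/m)$ to kill this piece. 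Under only the stated second-moment assumption, the honest argument instead truncates each $Z_i$ at a slowly growing threshold and absorbs the truncation error into $E_m$; this is the fiddly bookkeeping that, I suspect, motivates stating the conclusion in the weaker $o_p(1)$ form. Either way, the delta-method skeleton above is the substance, and the remainder is routine moment estimation.
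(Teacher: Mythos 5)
Your reduction is precisely the one the paper uses: both you and the authors write $\ell(y,f(x);\NegLabels,w) = -f_y(x) + \log G_{y,m}(x)$ with $G_{y,m}(x) = e^{f_y(x)} + \frac1m\sum_i \eta_{yy'_i}e^{f_{y'_i}(x)}$, identify $\ell^{q,w}_m(y,f(x)) = -f_y(x) + \log\mu_y(x)$, and then run the CLT plus delta method for $g(u)=\log u$ (their Lemma~\ref{lem:normality_partition_estimate}) to get $\sqrt{m}\,\Delta_m \stackrel{d}{\to}\mathcal{N}(0,\sigma_y^2(x)/\mu_y^2(x))$. The substance is identical; what differs is that you expand $\log(1+U_m)$ explicitly and compute $\mathbb{E}[U_m^2]$ exactly rather than quoting the asymptotic normality of $\log G_{y,m}$.

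The more valuable part of your write-up is the last paragraph, and you should be confident about it: you have put your finger on a step that the paper's proof does \emph{not} justify. The paper passes directly from the distributional limit $\sqrt{m}\,\Delta_m \stackrel{d}{\to}\mathcal{N}(0,\sigma_y^2/\mu_y^2)$ to the statement about $\mathbb{E}_{\NegLabels}[\Delta_m^2]$ with the words ``This implies that for sufficiently large $m$\ldots,'' but convergence in distribution does not by itself yield convergence of second moments without a uniform-integrability (or higher-moment / truncation) argument of exactly the kind you sketch. Your two proposed routes --- strengthened moment hypotheses with Marcinkiewicz--Zygmund and Cauchy--Schwarz on the tail event, or a truncation of $Z_i$ under the stated second-moment assumption --- are both reasonable ways to close this; the paper supplies neither. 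Relatedly, the paper's notation ``$E_m = o_p(1)$'' is already suspect: as you observe implicitly, once the expectation over $\NegLabels$ is taken, $E_m$ is a deterministic sequence in $m$, so $o_p(1)$ should really be $o(1)$ (or the stronger $o(1/m)$ your expansion suggests is the right order). So: same proof, but yours is the more honest version of it, and it correctly diagnoses the missing uniform-integrability step that the paper elides.
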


In the above,
$\mu_{y}(x)$
equals the standard
partition function
$\sum_{y'}e^{f_{y'}(x)}$
under the importance weighting $w_{yy'}=\frac{1}{m \cdot q_{y'}}$.
The above guarantees 
convergence of the sampled loss 
to
$\ell_m^{q, w}$,
the upper bound
in~\eqref{eqn:expected-sampled-loss-special},
\emph{provided} $w_{yy'}$ takes the form ${\eta_{yy'}} / {m}$;
equally,
in the language of Lemma~\ref{lemm:expected-sampled-loss},
this requires that \updates{${\rho_{yy'}}/{q_{y'}}$ is a constant}{} independent of $m$.
The \emph{rate} of convergence
is governed by
$\sigma_y^2( x ) / \mu_y^2( x )$,
which can be seen as an inverse signal-to-noise ratio.
Intuitively, 
when the losses 
$e^{f_{y'}( x )}$
on negative labels $y' \neq y$ are highly concentrated (i.e., $\sigma_y^2( x )$ is small relative to $\mu_{y}( x )$),
sampling even a few negatives can reliably estimate the true loss.

\subsection{Implicit losses of existing negative samplers}
\label{sec:sampling-bias-popular-methods}

Equipped with our understanding of the implicit losses for generic negative sampling methods, 
we now focus on specific methods that are prevalent in practice,
namely,
uniform and within-batch sampling distributions $q$
with different weights $w$. Table~\ref{tbl:sampling-summary-v2} explicates the impact of specific choices of $(q, w)$ \updates{through their implicit losses}{}.
For example,
under uniform sampling, a constant weight of $\frac{1}{m}$ implicitly down-weights negatives in the standard loss by a factor of $\frac{1}{L}$. Indeed, except for the case of 
$w_{yy'} = \frac{1}{m \cdot q_{y'}}$,
the implicit sampling losses subtly differ from the softmax cross-entropy: in such losses,
the ``negative'' labels receive a variable penalty.
Thus, in general, such schemes result in a sampling bias.

\subsection{Discussion and implications}

We reiterate that
while 
negative sampling
schemes such as uniform and within-batch sampling (with constant weights) are popular,
they have primarily been motivated as a cheap means of approximating the original (unsampled) loss.
Quantifying \emph{how} well they approximate this loss, however, has not (to our knowledge) been previously explicated.

Having specified such schemes' implicit losses above, we make two remarks.
First, 
per~\S\ref{sec:sampling-bias-popular-methods} and Table~\ref{tbl:sampling-summary-v2},
these schemes induce a sampling bias,
and do \emph{not} faithfully approximate the original loss. This appears to suggest that,
despite their popularity,
such schemes' efficacy is limited.

However,
our second observation
hints at a different conclusion:
note that the implicit loss in~\eqref{eqn:expected-sampled-loss-special} 
bears a striking similarity to 
the softmax with pairwise label margin~\eqref{eqn:unified-margin-loss}
from the long-tail learning literature.
This raises an intriguing,
hitherto unexplored
question:
could such negative sampling schemes be effective at modelling \emph{tail} labels?

\section{Negative sampling meets long-tail learning}
\label{sec:labelling-bias}
Leveraging the implicit loss derivation above, 
we now show that 
negative sampling schemes 
can mitigate \emph{labeling} bias by
implicitly trading-off performance on dominant versus rare classes.
Subsequently, we present a unified view of negative sampling that addresses \emph{both} {labeling} and \emph{sampling} bias.

\subsection{Implicit sampling losses: head versus tail tradeoffs}
\label{sec:long-tail-view}

In~\S\ref{sec:sampling-bias-popular-methods}, we have seen how 
from a \emph{sampling bias} perspective,
% the choice of $( q, w )$ impacts the \emph{sampling bias}:
to obtain an unbiased estimate of the original loss,
the only admissible choice of $(q, w)$ is $w_{yy'} = \frac{1}{m \cdot q_{y'}}$. Interestingly,
from a \emph{labeling bias} perspective,
a different picture emerges:
other choices of $( q, w )$
may \emph{implicitly} trade off performance on rare (``tail'') versus dominant (``head'') classes,
which can be desirable.
We illustrate this point via two examples.

\textbf{Example: within-batch sampling and constant weights}.
Consider within-batch sampling, so that $q = \pi$ for training label distribution $\pi$.
Coupled with
a constant
$w_{yy'} = \frac{1}{m}$,
the {sampled softmax cross-entropy~\eqref{eqn:expected-sampled-loss-special} has implicit loss:}
\begin{equation}
    \label{eq:implicit-decoupled-within}
    \ell^{q, w}_{m}(y, f(x)) =  \log\Big[ 1 + \sum_{y' \neq y} \pi_{y'} \cdot e^{f_{y'}( x ) - f_{y}( x )} \Big].
\end{equation}
Clearly, this disagrees with the standard softmax cross-entropy~\eqref{eqn:softmax-xent}.
However,
the loss down-weights the contribution of rare labels when they are encountered as negatives.
Since rare labels infrequently appear as positives during training,
such down-weighting prevents them from being overwhelmed by negative gradients.
The implicit loss (and thus the sampling scheme) is therefore expected to boost classification performance on samples with rare labels.

\textbf{Example: within-batch sampling and relative weights}.
Suppose again that $q = \pi$, but that we instead use the relative weighting scheme $w_{yy'} = \frac{\pi_{y}}{\pi_{y'}}$.
Here,
the
implicit loss for the sampled softmax cross-entropy is
\begin{align}
    \label{eq:implicit-decoupled-within-relative}
    \ell^{q, w}_m(y, f(x)) = \log\Big[ 1 + \sum_{y' \neq y} \pi_{y} \cdot e^{f_{y'}( x ) - f_{y}( x )} \Big].
\end{align}
Compared to~\eqref{eq:implicit-decoupled-within},
this has a subtly different negative weighting of $\pi_y$, rather than $\pi_{y'}$.
This places greater emphasis on \emph{positive} labels that occur more frequently in the training set,
and thus has an \emph{opposite} effect to~\eqref{eq:implicit-decoupled-within}:
{we encourage large model scores for \emph{dominant} compared to rare labels.}

\subsection{Relating long-tail and implicit sampling losses}
\label{sec:unified}

To generalise the above,
recall that in the \emph{long-tail learning}
problem (\S\ref{sec:background-long-tail}), 
several popular losses to cope with label imbalance
are special cases of the softmax cross-entropy with pairwise margin~\eqref{eqn:unified-margin-loss}, i.e.,
for a given $\rho_{yy'} \geq 0$,
$$ \ell^{\rho}_{\rm mar}( y, f( x ) ) = \log\Big[ 1 + \sum_{y' \neq y} \rho_{y y'} \cdot e^{f_{y'}( x ) - f_{y}( x )} \Big]. $$
Intriguingly,
both the implicit sampling losses 
from~\S\ref{sec:long-tail-view}
are special cases of this loss;
in particular, \eqref{eq:implicit-decoupled-within} is a special case of the equalised loss of~\citet{Tan:2020}. Thus,
such negative sampling schemes \emph{implicitly} cope with label imbalance.
This connection
is no accident:
the following establishes a simple equivalence between the two loss families.

\begin{proposition}
\label{prop:loss-connection}
Pick any $\rho_{y y'} \geq 0$,
and let 
$\ell^{\rho}_{\mathrm{mar}}$
be the softmax cross-entropy with pairwise margin $\rho$~\eqref{eqn:unified-margin-loss}.
Given
any
$q \in \Delta_{[L]}$ with $q_y = 0$ and $m \in \mathbb{Z}_+$,
suppose we set
\begin{equation}
    \label{eqn:weight-constraint}
    w_{yy'} = \frac{\rho_{y y'}}{m \cdot \qYPrime}.
\end{equation}
Then, the implicit $\ell^{q, w}_{m}$ from~\eqref{eqn:expected-sampled-loss-special} satisfies:
$$ ( \forall x \in \XCal, z \in \Real^L ) \, \ell^{q, w}_{m}( y, z ) = \ell^{\rho}_{\mathrm{mar}}( y, z ). $$
\end{proposition}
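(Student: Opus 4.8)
The plan is to reduce the statement to a direct substitution into Lemma~\ref{lemm:expected-sampled-loss}. Recall that for a generic pair $(q,w)$ with $q_y = 0$, Lemma~\ref{lemm:expected-sampled-loss} identifies the implicit loss as
\[
\ell^{q,w}_m(y, z) = \log\Big[ 1 + \sum_{y' \neq y} \widetilde{\rho}_{yy'} \cdot e^{z_{y'} - z_{y}} \Big],
\qquad \widetilde{\rho}_{yy'} \defEq m \cdot q_{y'} \cdot w_{yy'},
\]
where I have written $\widetilde{\rho}$ to avoid clashing with the $\rho$ of the present proposition, and where the identity holds verbatim for any logit vector $z \in \Real^L$ in place of $f(x)$ (the derivation of the upper bound in~\eqref{eqn:expected-sampled-loss-special} never uses that $z$ comes from a scorer). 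So the only thing to check is that the prescribed weighting~\eqref{eqn:weight-constraint} makes $\widetilde{\rho}_{yy'}$ coincide with $\rho_{yy'}$.

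First I would plug $w_{yy'} = \frac{\rho_{yy'}}{m \cdot q_{y'}}$ into the definition of $\widetilde{\rho}_{yy'}$, obtaining $\widetilde{\rho}_{yy'} = m \cdot q_{y'} \cdot \frac{\rho_{yy'}}{m \cdot q_{y'}} = \rho_{yy'}$ for every $y' \neq y$ with $q_{y'} > 0$. Substituting back into the displayed formula gives
\[
\ell^{q,w}_m(y, z) = \log\Big[ 1 + \sum_{y' \neq y} \rho_{yy'} \cdot e^{z_{y'} - z_{y}} \Big] = \ell^{\rho}_{\mathrm{mar}}(y, z),
\]
which is exactly the claimed equality, and it holds for all $x \in \XCal$ (the statement has no genuine $x$-dependence once $z$ is fixed) and all $z \in \Real^L$.

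The only point requiring a word of care — and the nearest thing to an obstacle, though it is minor — is the possibility that $q_{y'} = 0$ for some $y' \neq y$, which would make the ratio in~\eqref{eqn:weight-constraint} ill-defined. I would dispose of this by the standard convention that such labels are excluded from the support of $q$ and hence never appear in $\NegLabels$, so they contribute nothing to either side; equivalently one restricts attention to $y'$ with $q_{y'} > 0$, over which the computation above is valid. With that understood, the proof is complete and amounts to nothing more than unwinding the definition of $\widetilde{\rho}_{yy'}$ in Lemma~\ref{lemm:expected-sampled-loss}.
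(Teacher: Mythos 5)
Your proof is correct and follows the same route as the paper: plug the prescribed $w_{yy'}$ into the factor $m \cdot q_{y'} \cdot w_{yy'}$ from Lemma~\ref{lemm:expected-sampled-loss} and observe it reduces to $\rho_{yy'}$, so the implicit loss matches $\ell^{\rho}_{\mathrm{mar}}$ identically. The remark about labels with $q_{y'}=0$ is a reasonable small addition beyond what the paper writes, but it does not change the argument.
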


The above illustrates an intimate connection between
negative sampling schemes,
and losses for long-tail learning.
Remarkably,
while the former is ostensibly a means of coping with sampling bias,
it can \emph{implicitly} address the labeling bias problem arising from a skewed distribution of classes.

\subsection{Discussion and implications}

For generic $( q, w )$, 
we may {understand the performance impact of negative sampling by identifying the 
implicit losses (Lemma~\ref{lemm:contrastive-expectation} and~\ref{lemm:expected-sampled-loss})
optimised during training}.
Both \S\ref{sec:long-tail-view} and \S\ref{sec:unified} demonstrate the dual utility of these implicit losses. 
{We comment on these and further implications of our results.}

\textbf{Overcoming sampling and labeling bias}.
From~\S\ref{sec:long-tail-view},
common negative sampling schemes \emph{implicitly}
trade-off performance on dominant versus rare classes. 
Boosting performance on samples with rare labels
is an important goal~\citep{VanHorn:2017,Buda:2017},
and relates to ensuring \emph{fairness}~\citep{Sagawa:2020}.

Proposition~\ref{prop:loss-connection} also gives a way to \emph{explicitly} control both sampling \emph{and} labelling bias:
i.e.,
operate on a subset of labels 
(for computational efficiency),
but also 
ensure good performance on rare labels
(to ensure ``fairness'' across classes).
Given a target set of label margins $\rho_{yy'}$ (as in~\eqref{eqn:unified-margin-loss})
---
which can suitably
balance dominant versus rare class performance
---
one may use~\eqref{eqn:weight-constraint}
to pick a suitable combination of $(q, w)$ to achieve this balance. For example,
suppose we wish to approximate the logit-adjustment loss of~\citet{Menon:2020},
where $\rho_{y y'} = \frac{\pi_{y'}}{\pi_{y}}$.
Then, 
for a given $q$, we may set
% we need
\begin{equation}
    \label{eqn:logit-weight}
    w_{yy'} = \frac{\pi_{y'}}{m \cdot q_{y'} \cdot \pi_{y}}.
\end{equation}

\textbf{Within-batch sampling can boost the tail}.
Within-batch sampling has the virtue of simplicity:
it creates negatives from each sampled minibatch, which involves minimal additional overhead. Our above analysis
further shows that 
with a constant weighting,
within-batch sampling can implicitly boost performance on rare classes.
Thus, while this method may not be competitive in terms of standard \emph{class-weighted} error,
we expect it to have good \emph{balanced} error (where dominant and rare labels are treated equally).

\textbf{Bias-variance trade-off}.
The above apparently settles the issue of what a good choice of $(q, w)$ is:
for a target $\rho_{y y'}$,
$(q, w)$ should satisfy~\eqref{eqn:weight-constraint}. However, 
how can one choose amongst the infinitude of such pairs? One desideratum is to minimise the \emph{variance} of the estimate~\eqref{eq:implicit-decoupled-var},
per the following.

\begin{proposition}
\label{lemm:variance-minimiser}
Pick any $\rho_{y y'} \geq 0$,
and
let $w_{yy'}$ satisfy~\eqref{eqn:weight-constraint}. The sampling distribution $q^*$ with minimum loss variance is
\begin{equation}
    \label{eqn:q-ast-loss}
    ( \forall x \in \XCal ) \, q^*( y' \mid x, y ) \propto \rho_{y y'} \cdot \varphi( -f_{y'}( x ) ).
\end{equation}
\end{proposition}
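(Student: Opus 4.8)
The plan is to start from the closed-form loss variance already derived in Lemma~\ref{lemm:contrastive-expectation}, specialise it to weights obeying~\eqref{eqn:weight-constraint}, and then recognise what remains as a textbook variance-minimisation over the simplex. First I would simplify the variance. Under $w_{yy'} = \rho_{yy'}/(m\cdot\qYPrime)$, the quantity $\rho_{yy'}$ appearing in~\eqref{eq:implicit-decoupled-var} (defined there as $m\cdot w_{yy'}\cdot\qYPrime$) coincides with the prescribed target margin $\rho_{yy'}$, and $w_{yy'}\cdot\rho_{yy'} = \rho_{yy'}^2/(m\cdot\qYPrime)$. Substituting into~\eqref{eq:implicit-decoupled-var} yields
$$\underset{\NegLabels \sim q^m}{\mathbb{V}}\!\left[\ell(y,f(x);\NegLabels,w)\right] = \frac{1}{m}\sum_{y'\neq y}\frac{\rho_{yy'}^2\,\varphi(-f_{y'}(x))^2}{\qYPrime} \;-\; \frac{1}{m}\Big(\sum_{y'\neq y}\rho_{yy'}\,\varphi(-f_{y'}(x))\Big)^2 .$$
The second term is independent of $q$, so minimising the variance over $q\in\Delta_{[L]}$ with $q_y = 0$ reduces to minimising $\sum_{y'\neq y} a_{y'}^2/\qYPrime$, where $a_{y'}\defEq \rho_{yy'}\,\varphi(-f_{y'}(x))\ge 0$ since $\rho\ge 0$ and $\varphi\ge 0$.

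Next I would solve this convex program. Since $t\mapsto 1/t$ is convex on $(0,\infty)$, the objective is convex in $q$ over the simplex, so it suffices to exhibit a feasible point attaining the Cauchy--Schwarz lower bound: for any $q$ supported on $\{y'\neq y\}$,
$$\Big(\sum_{y'\neq y} a_{y'}\Big)^2 = \Big(\sum_{y'\neq y}\tfrac{a_{y'}}{\sqrt{\qYPrime}}\cdot\sqrt{\qYPrime}\Big)^2 \le \Big(\sum_{y'\neq y}\tfrac{a_{y'}^2}{\qYPrime}\Big)\Big(\sum_{y'\neq y}\qYPrime\Big) = \sum_{y'\neq y}\tfrac{a_{y'}^2}{\qYPrime},$$
with equality iff $a_{y'}/\sqrt{\qYPrime}\propto\sqrt{\qYPrime}$, i.e.\ $\qYPrime\propto a_{y'}$. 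Hence the minimiser is $q^*(y'\mid x,y)\propto a_{y'} = \rho_{yy'}\,\varphi(-f_{y'}(x))$, which is exactly~\eqref{eqn:q-ast-loss}. (Equivalently, one can introduce a Lagrange multiplier for the constraint $\sum_{y'} \qYPrime = 1$ and set the stationarity condition $-a_{y'}^2/\qYPrime^2 + \lambda = 0$, giving the same proportionality.) The constraint $q_y = 0$ is respected for free: mass on the excluded label $y$ does not appear in the objective and only diverts budget from the other coordinates, so it is weakly suboptimal; if some $a_{y'} = 0$ one takes $q^*_{y'} = 0$ under the convention $0/0 = 0$, which leaves both the bound and its attainment intact.

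I do not anticipate a genuine obstacle here: the argument is essentially routine once the variance formula is in hand. The only point needing care is the first step — confirming that the $\rho_{yy'}$ in the variance expression of Lemma~\ref{lemm:contrastive-expectation} is precisely the target margin once~\eqref{eqn:weight-constraint} is imposed, and that the remaining $q$-dependent term is a simple $\sum a_{y'}^2/\qYPrime$ — after which the minimiser falls out of a one-line Cauchy--Schwarz (or Lagrange) computation.
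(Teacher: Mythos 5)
Your proof is correct, and it fills in details the paper leaves implicit. The paper's own argument is a one-line citation: it observes that the claim is an instance of the classical minimum-variance importance-sampling fact (to approximate $\mathbb{E}_p[f]$ by $\mathbb{E}_q\big[\tfrac{p}{q}f\big]$, the variance-minimising proposal is $q^* \propto p\,f$) and cites a reference for it, without invoking the variance expression from Lemma~\ref{lemm:contrastive-expectation} at all. You instead derive the minimiser from scratch: you correctly note that under~\eqref{eqn:weight-constraint} the quantity $m\cdot w_{yy'}\cdot q_{y'}$ appearing in~\eqref{eq:implicit-decoupled-var} reduces to the prescribed target margin $\rho_{yy'}$, that $w_{yy'}\rho_{yy'} = \rho_{yy'}^2/(m\,q_{y'})$, and that the second term of the variance is $q$-independent, so the problem collapses to minimising $\sum_{y'\neq y} a_{y'}^2/q_{y'}$ with $a_{y'}=\rho_{yy'}\varphi(-f_{y'}(x))\ge 0$; Cauchy--Schwarz (or Lagrange stationarity) then gives $q^*_{y'}\propto a_{y'}$. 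This is exactly the standard proof of the fact the paper cites, so the two routes are mathematically the same argument; what your version buys is self-containedness and explicit contact with the variance formula already derived in the paper. Your remarks about the constraint $q_y=0$ being innocuous and about zero-mass coordinates are also correct. The only stylistic point worth adding is that, since $\rho_{yy'}\ge 0$ and $\varphi\ge 0$, the $a_{y'}$ are nonnegative, which is what licenses dropping the absolute values that appear in the general statement of the importance-sampling result.
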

According to~\eqref{eqn:q-ast-loss}, to minimise the variance of the loss estimate, one would like to sample the negative that contribute the most to the original decoupled loss. 
However, this would require evaluating the
loss on all classes, which defeats the purpose of sampling the negatives in the first place. Investigating efficient approximations 
(e.g., per~\citet{Bamler:2020})
is an interesting direction for future work.

\section{Experiments}
\label{sec:experiments}
We now present experiments 
on 
benchmarks for both 
long-tail learning and retrieval,
illustrating our main finding: existing negative sampling schemes, such as within-batch sampling with constant weighting,
implicitly trade-off performance on dominant versus rare labels. Further,
we show the viability of
designing custom sampling schemes to specifically target performance on rare labels.

\begin{figure*}[!t]
    \centering
    
    \resizebox{\linewidth}{!}{
    \subcaptionbox{CIFAR-100-LT ({\sc Step}).}{
    \includegraphics[scale=0.20,valign=t]{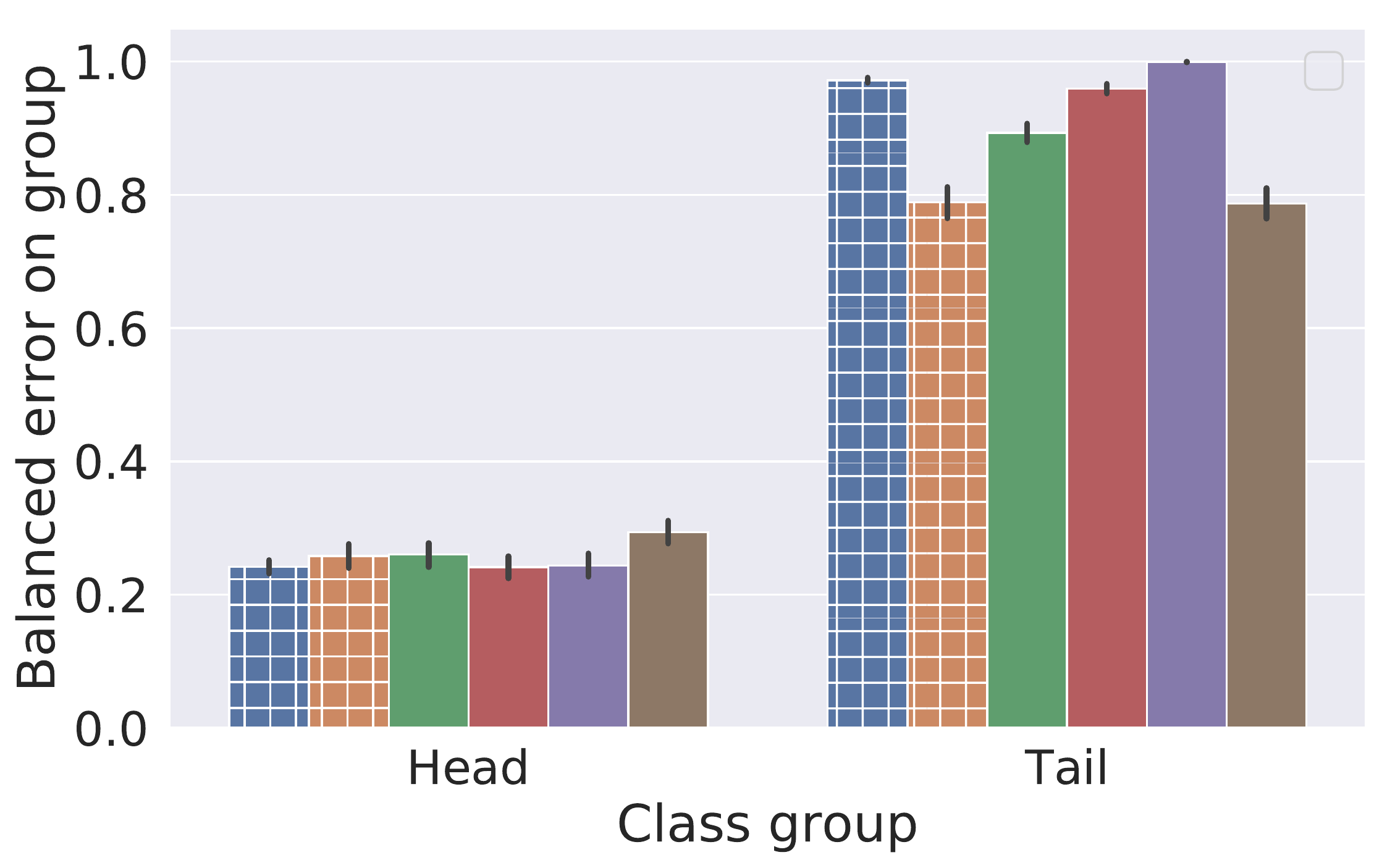}
    }    
    \qquad
    \subcaptionbox{CIFAR-100-LT ({\sc Exp}).}{
    \includegraphics[scale=0.20,valign=t]{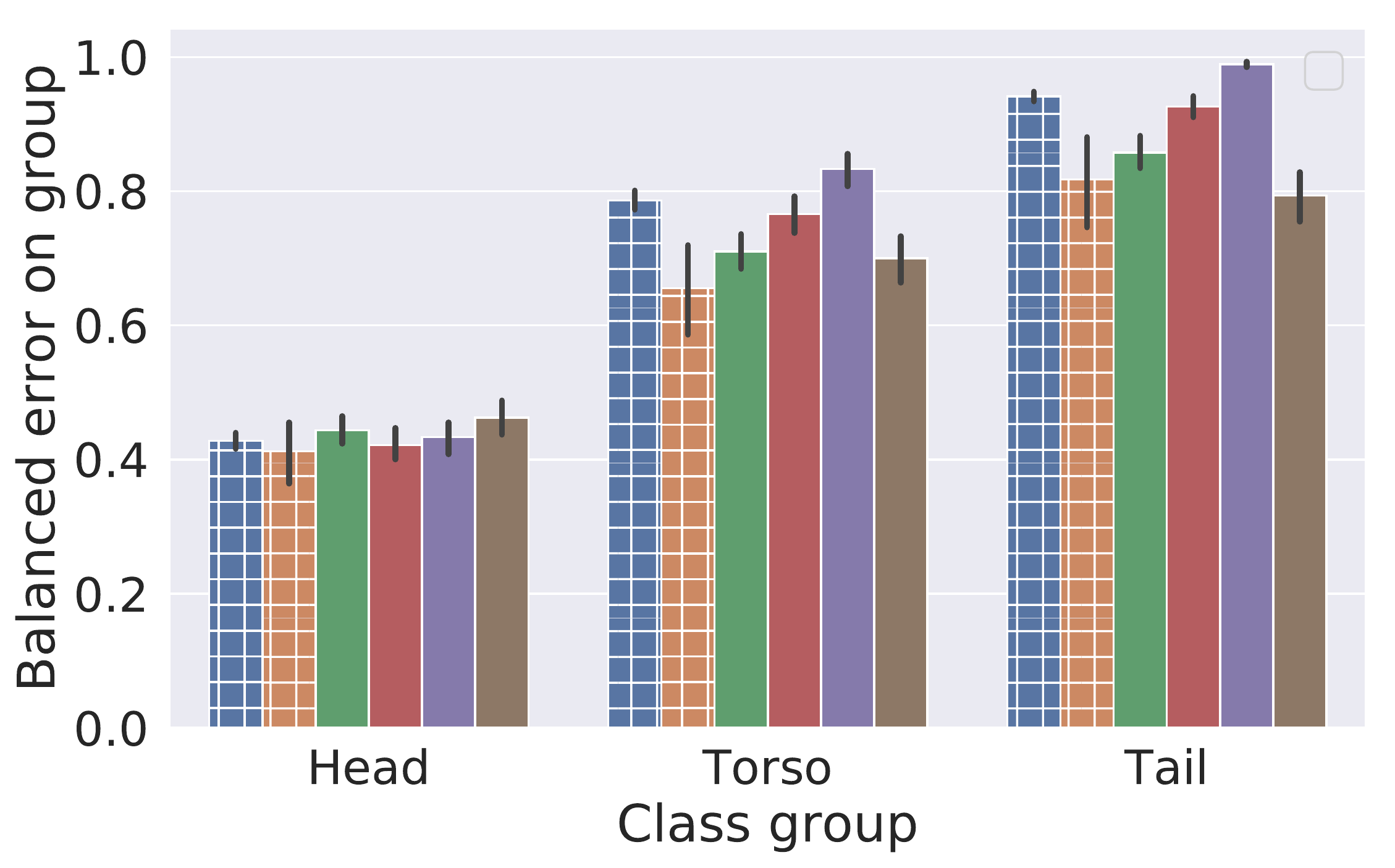}
    }
    \qquad
    \subcaptionbox{ImageNet-LT.}{
    \includegraphics[scale=0.20,valign=t]{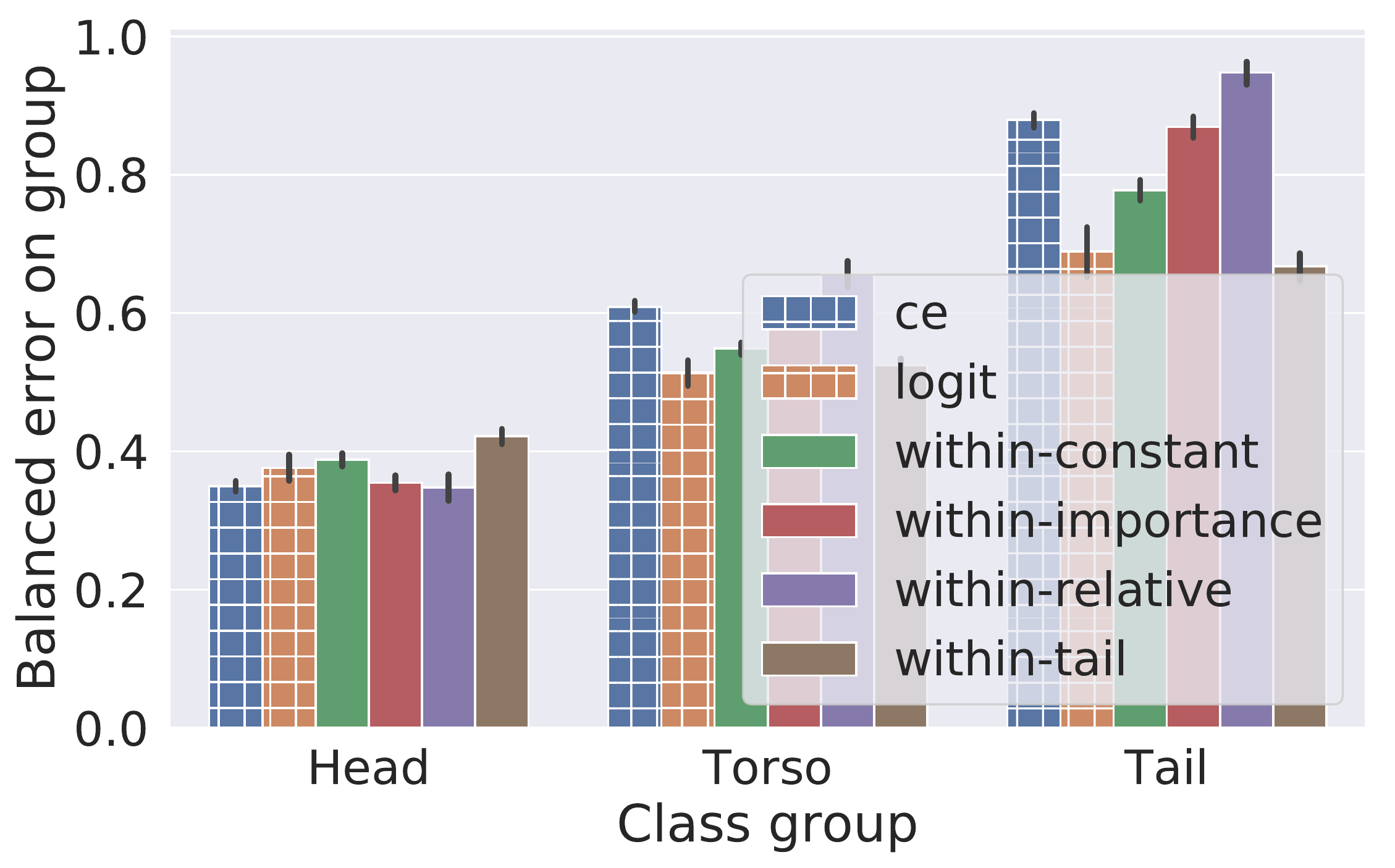}
    }
    }
    
    \resizebox{\linewidth}{!}{
    \subcaptionbox{CIFAR-100-LT ({\sc Step}).}{
    \includegraphics[scale=0.20,valign=t]{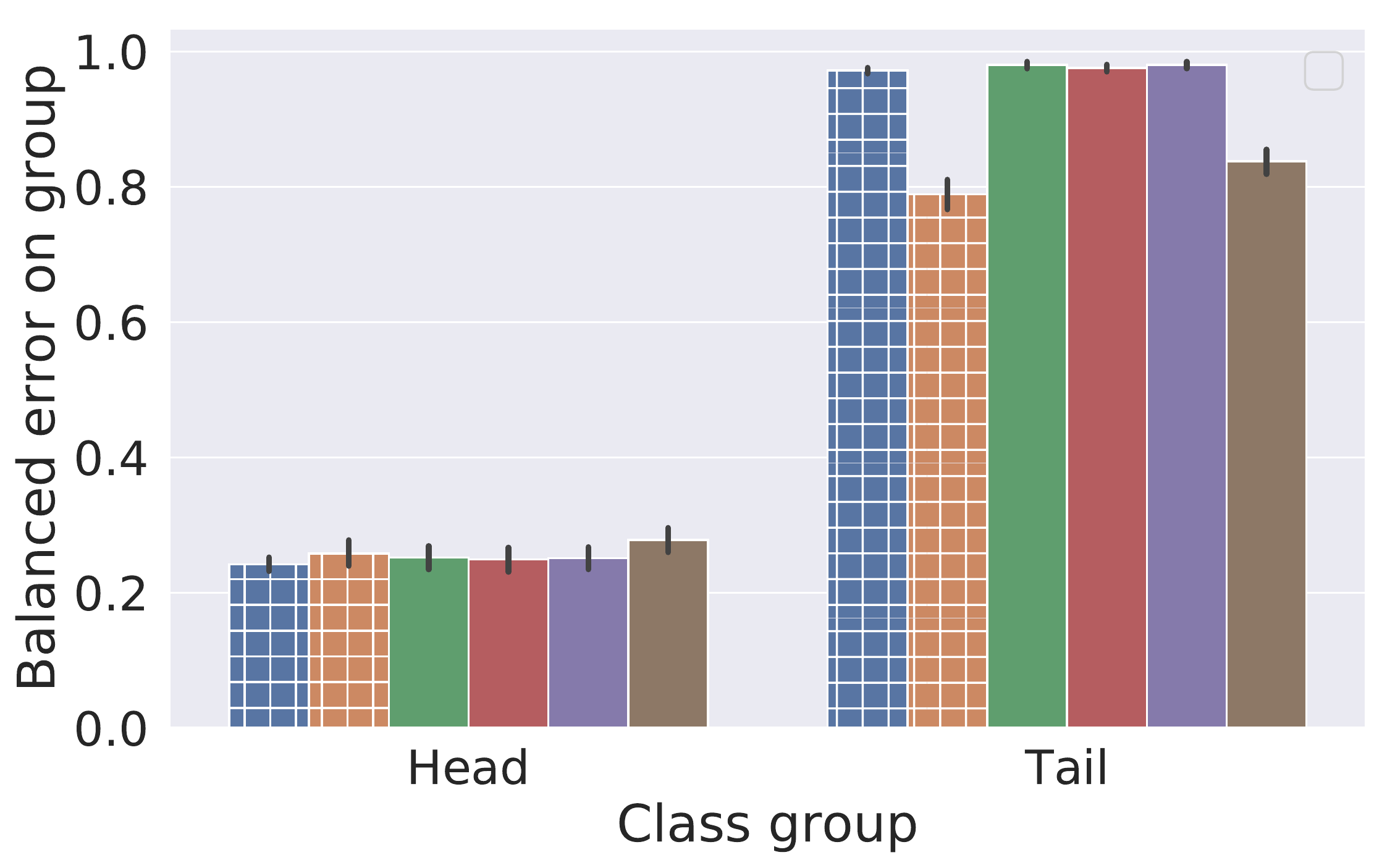}
    }
    \qquad
    \subcaptionbox{CIFAR-100-LT ({\sc Exp}).}{
    \includegraphics[scale=0.20,valign=t]{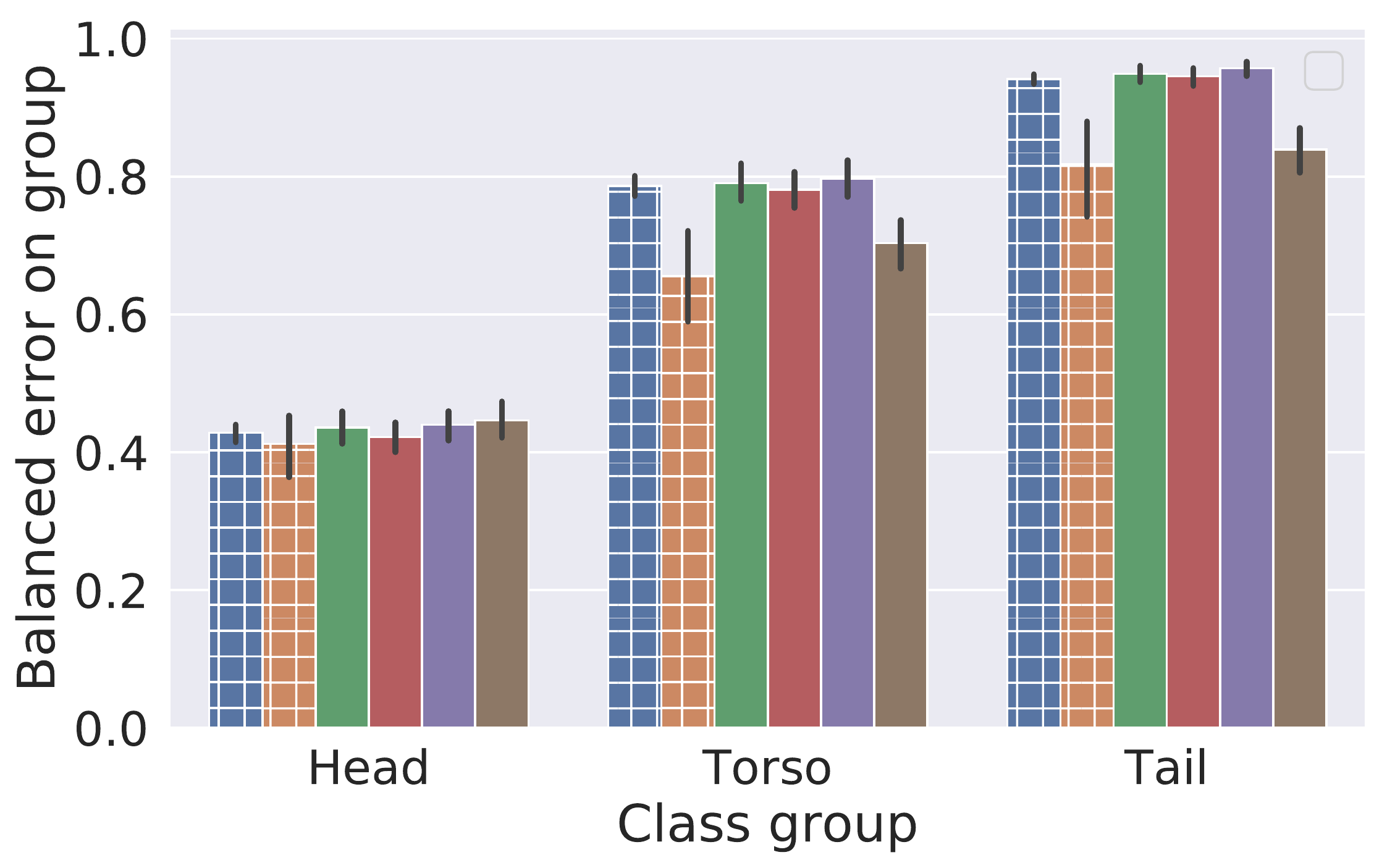}
    }
    \qquad
    \subcaptionbox{ImageNet-LT.}{
    \includegraphics[scale=0.20,valign=t]{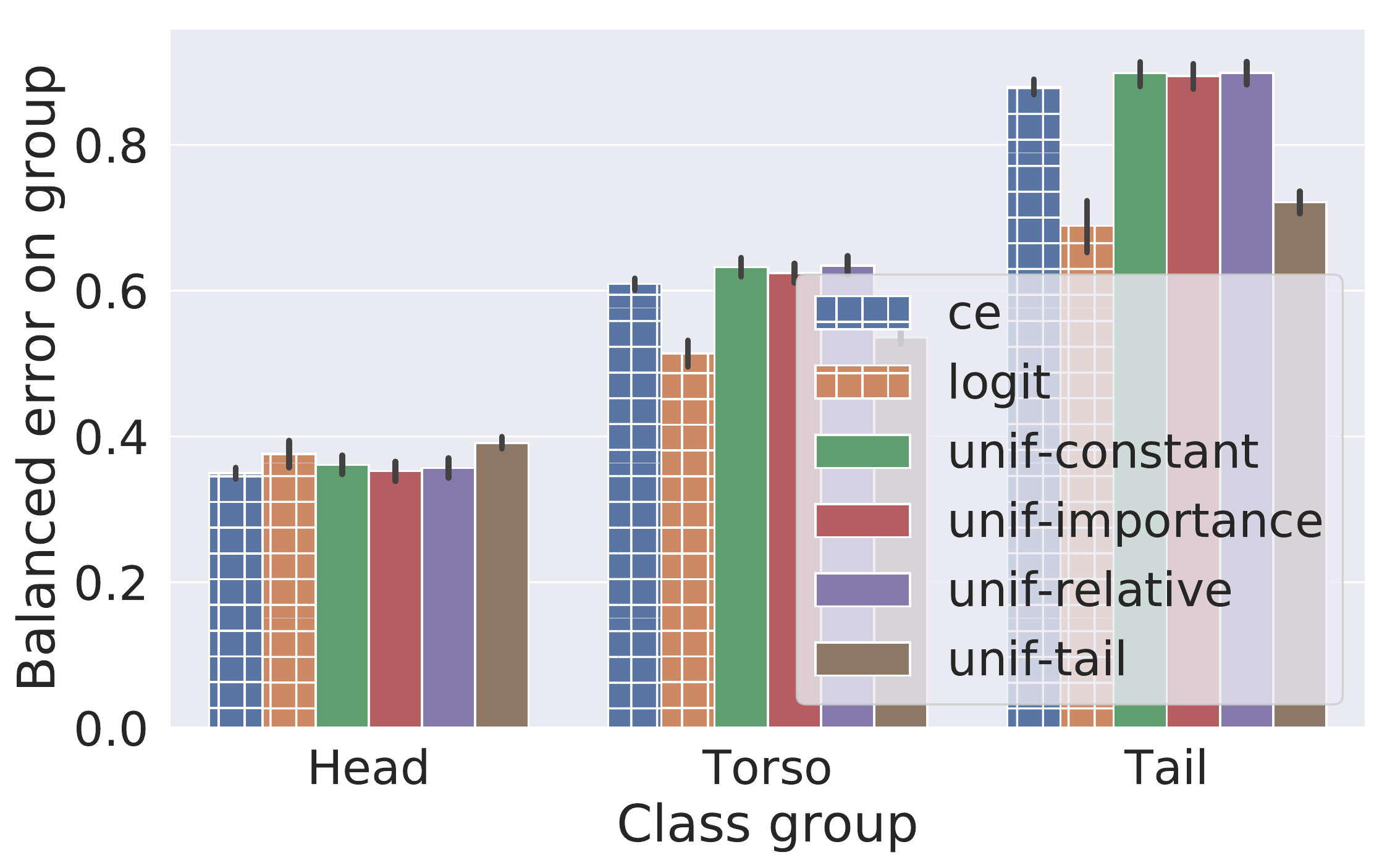}
    }
    }    
    
    \caption{Balanced error on Head, Torso and Tail labels on long-tail learning benchmarks;
    see~\S\ref{sec:long-tail-results} for definitions of the {\sc Step} and {\sc Exp} profiles.
    We employ within-batch (${\tt within}$)
    and uniform (${\tt unif}$) negative sampling,
    using the constant, 
    importance, 
    relative,
    and ``tail'' weighting
    schemes from Table~\ref{tbl:summary-decoupled}.
    We also include the results of standard softmax cross-entropy (${\tt ce}$) and logit adjustment (${\tt logit}$),
    which use \emph{all} labels, and are thus shaded. Amongst the standard schemes from Table~\ref{tbl:summary-decoupled},
using within-batch negatives with a constant weighting
performs the best on Tail classes. This is despite such a scheme providing a biased estimate of the cross-entropy;
indeed, on CIFAR-100-LT,
sampling does \emph{better} than the cross-entropy on tail classes,
despite the latter employing all labels during training.
Our newly proposed ``tail'' weighting scheme~\eqref{eqn:logit-weight} consistently outperforms all schemes on the Tail, confirming the viability of handling sampling \emph{and} labeling bias.
}
    \label{fig:lt_benchmarks}
\end{figure*}

\newcommand{\figwithcap}[2]{
  \begin{tikzpicture}[scale=0.4, inner sep=0]
        \node[anchor=south west,inner sep=0] (fig) at (0,0) {
		\includegraphics[scale=0.30,valign=t]{#1}%
	    };     
	    \node[anchor=north,inner sep=0,above left=-4mm and -38mm of fig] {\textcolor{black!70}{{\footnotesize #2}}};     
    \end{tikzpicture}
}

\begin{figure*}[t!]
    \centering
    \small
    \resizebox{0.9\linewidth}{!}{
    \subcaptionbox{$\recall$1.\label{fig:am-r1}}{
    \figwithcap{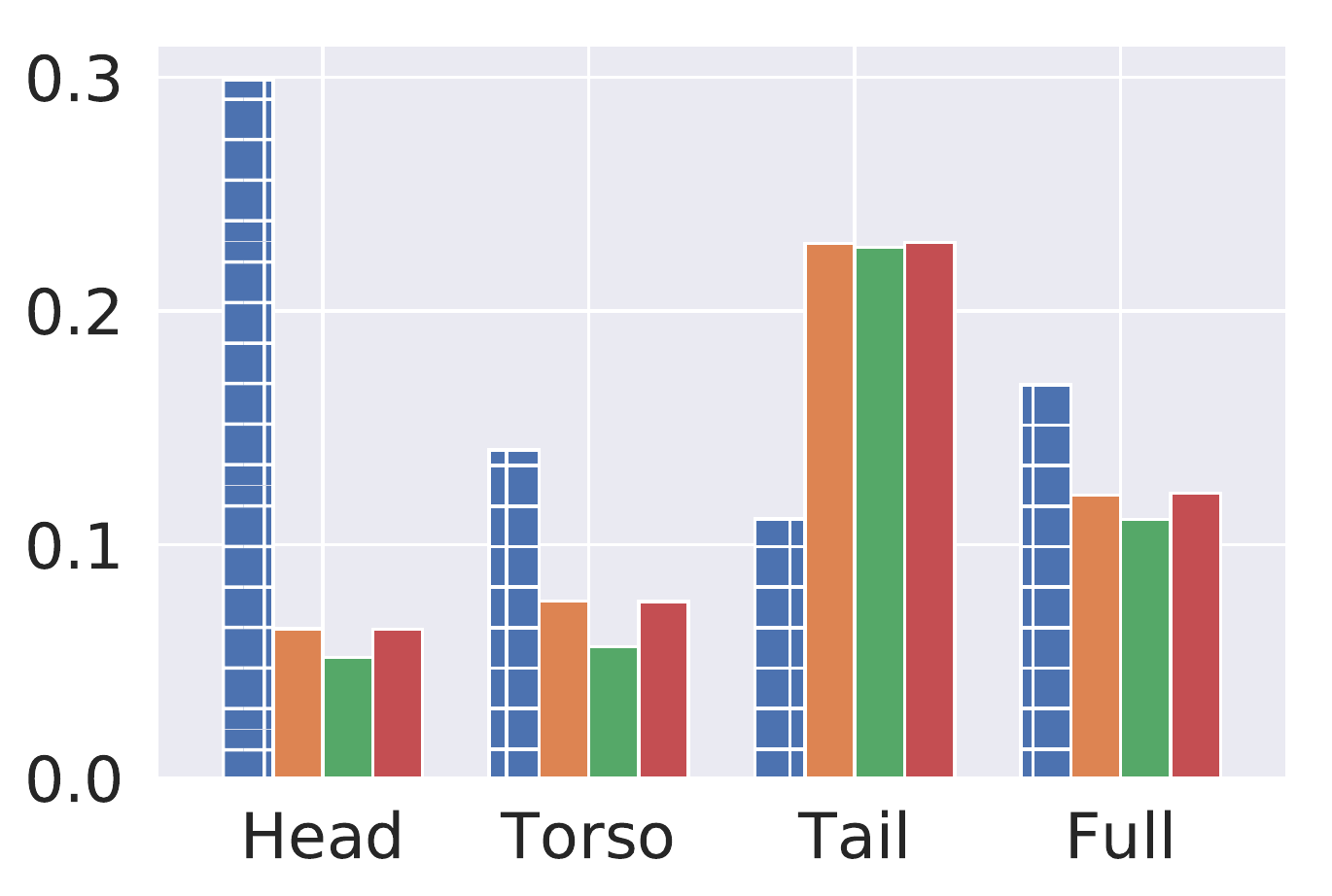}{\amazonsmall}
    }
    \;
    \subcaptionbox{$\recall$10.\label{fig:am-r10}}{
    \figwithcap{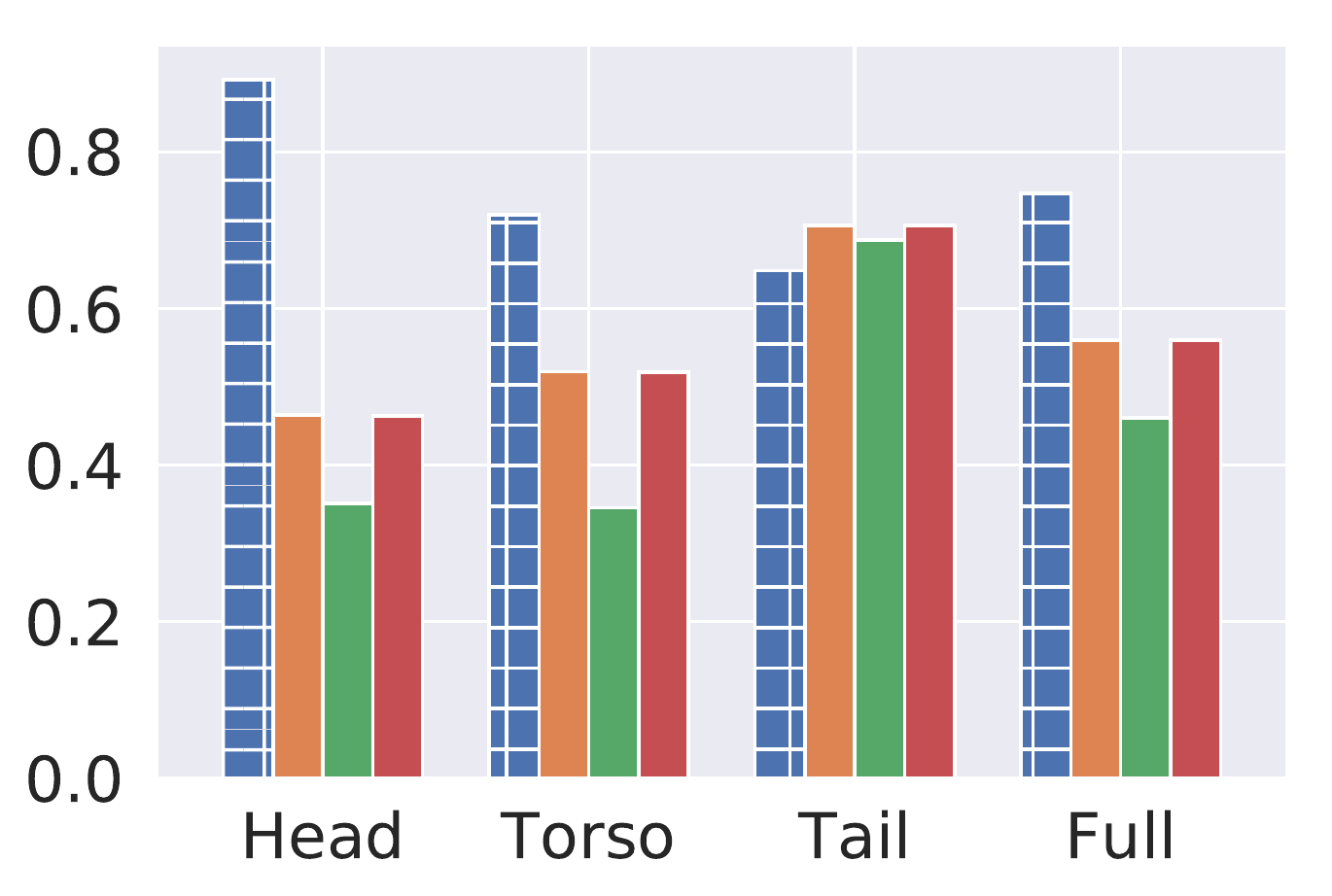}{\amazonsmall}
    }
    \;
    \subcaptionbox{$\recall$50.\label{fig:am-r50}}{
    \figwithcap{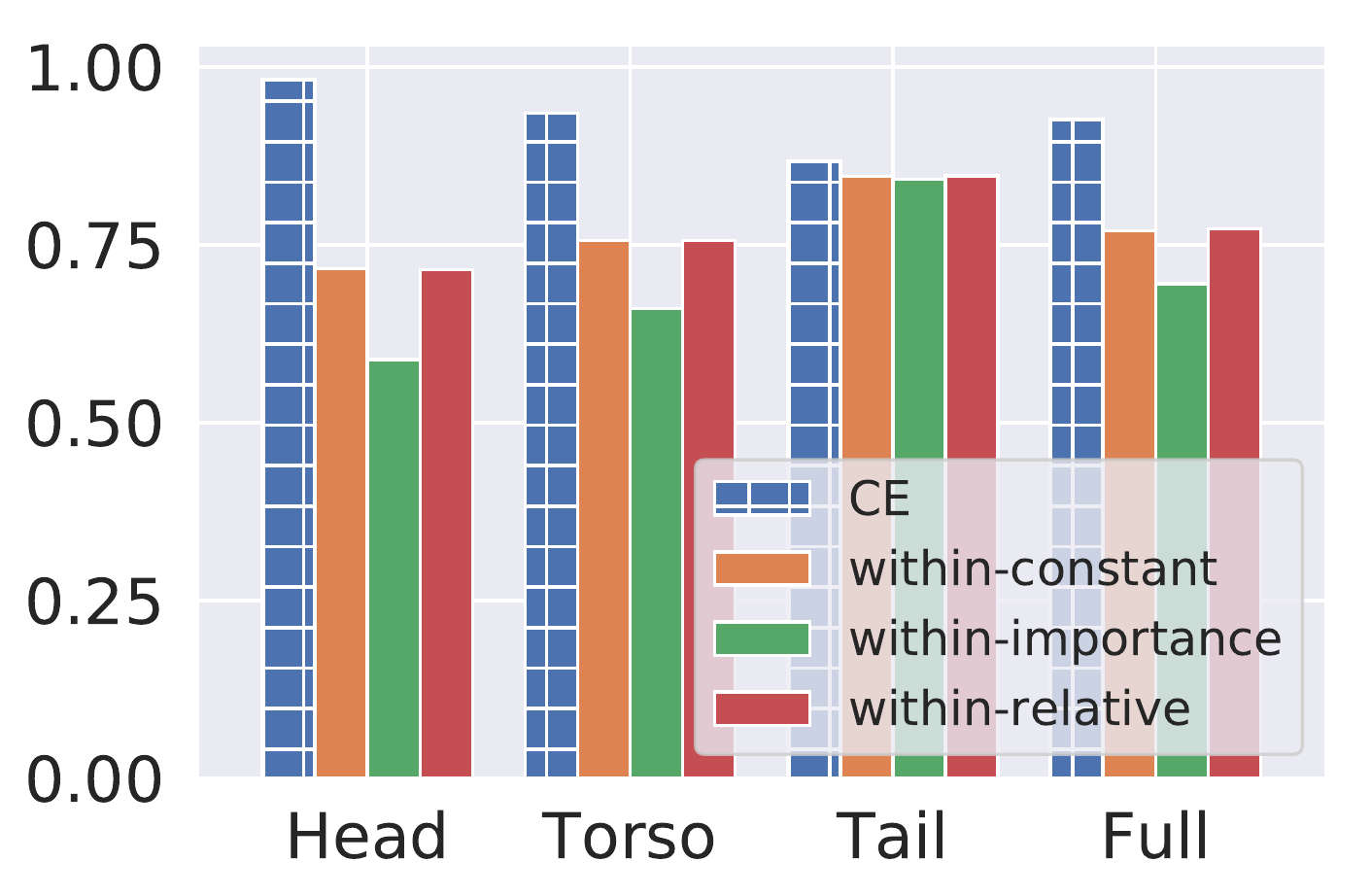}{\amazonsmall}
    }
    }

    \resizebox{0.9\linewidth}{!}{
    \subcaptionbox{$\recall$1.\label{fig:wiki-r1}}{
    \figwithcap{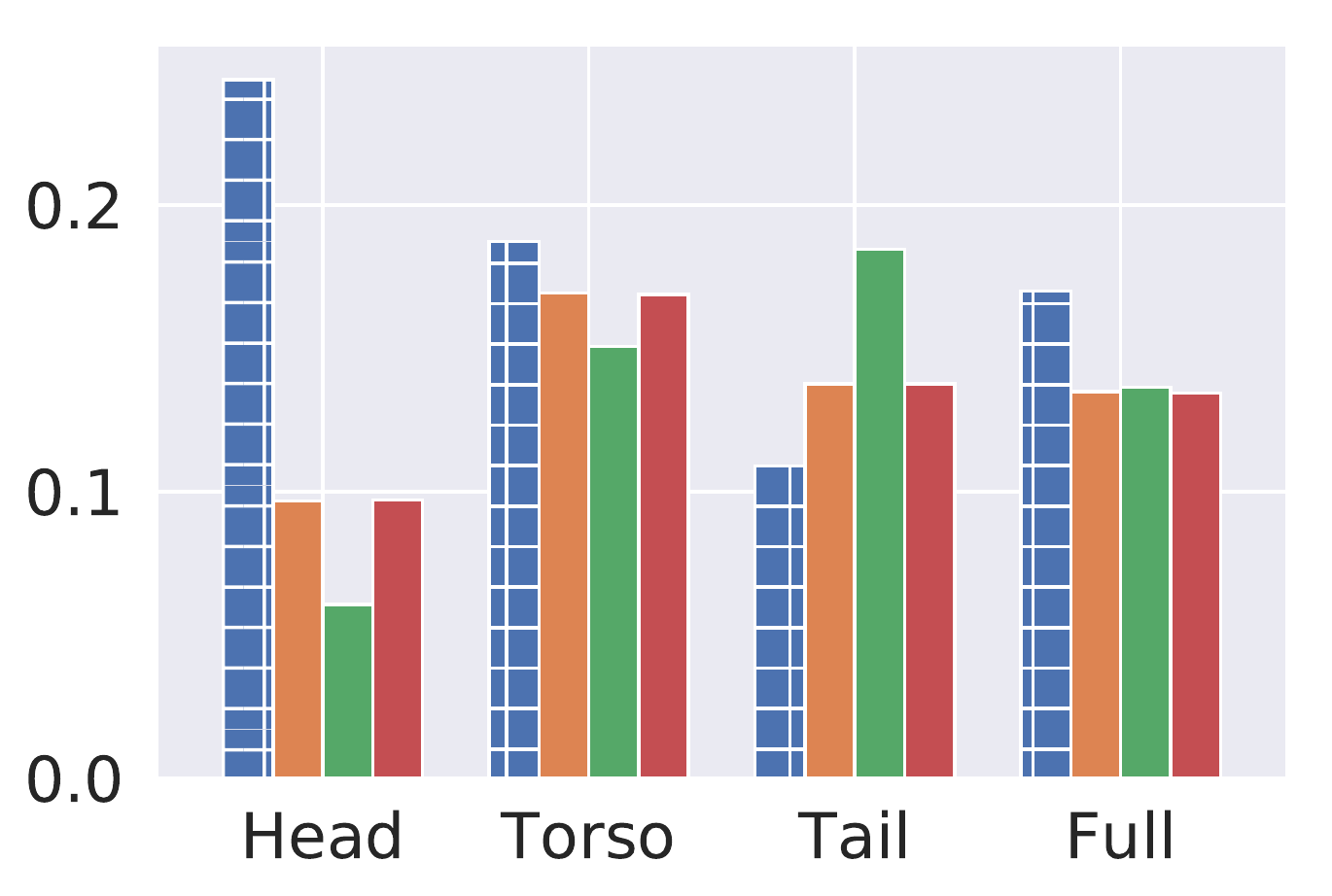}{\wikilshtc}
    }    
    \;
    \subcaptionbox{$\recall$10.\label{fig:wiki-r10}}{
    \figwithcap{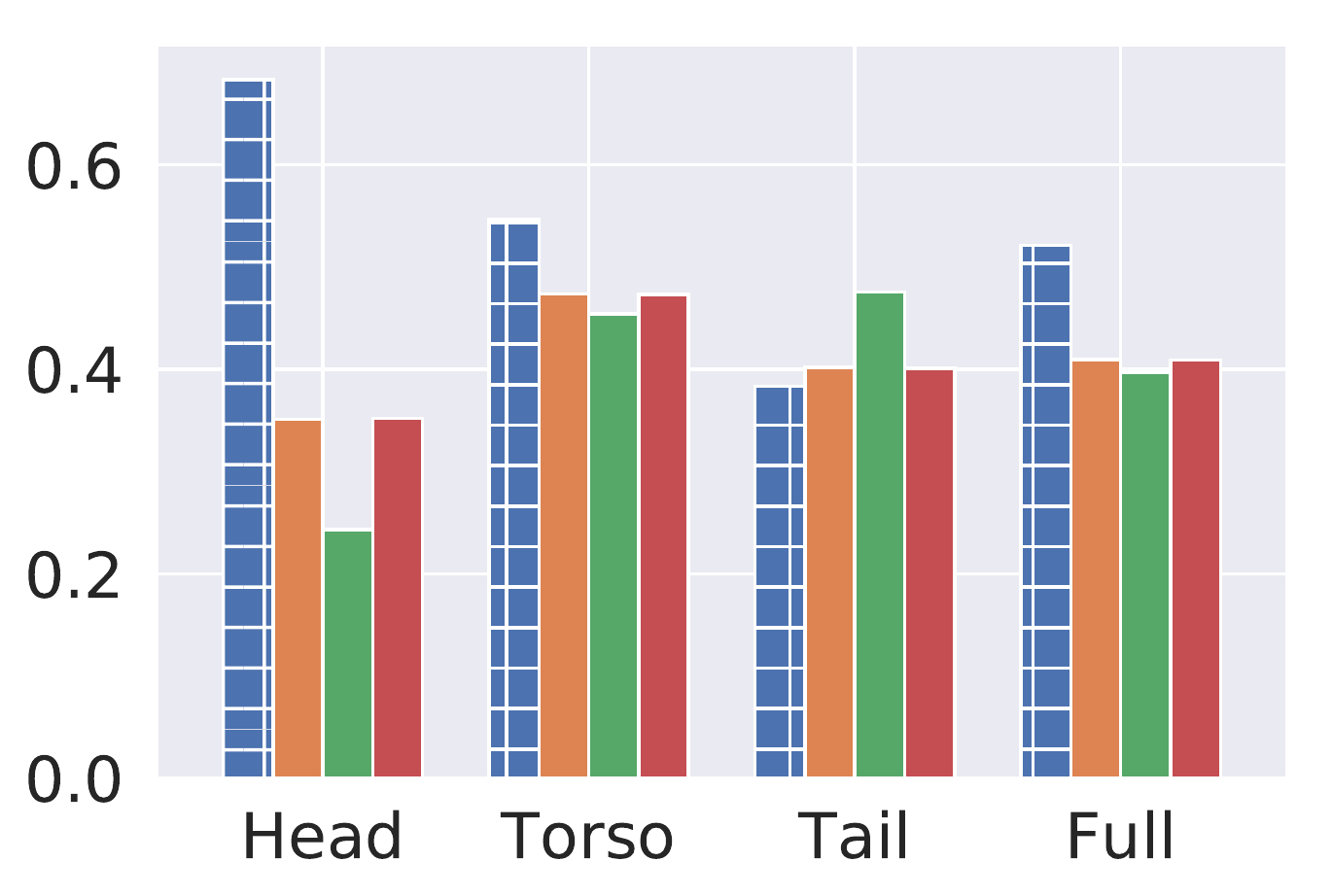}{\wikilshtc}
    }
    \;
    \subcaptionbox{$\recall$50.\label{fig:wiki-r50}}{
    \figwithcap{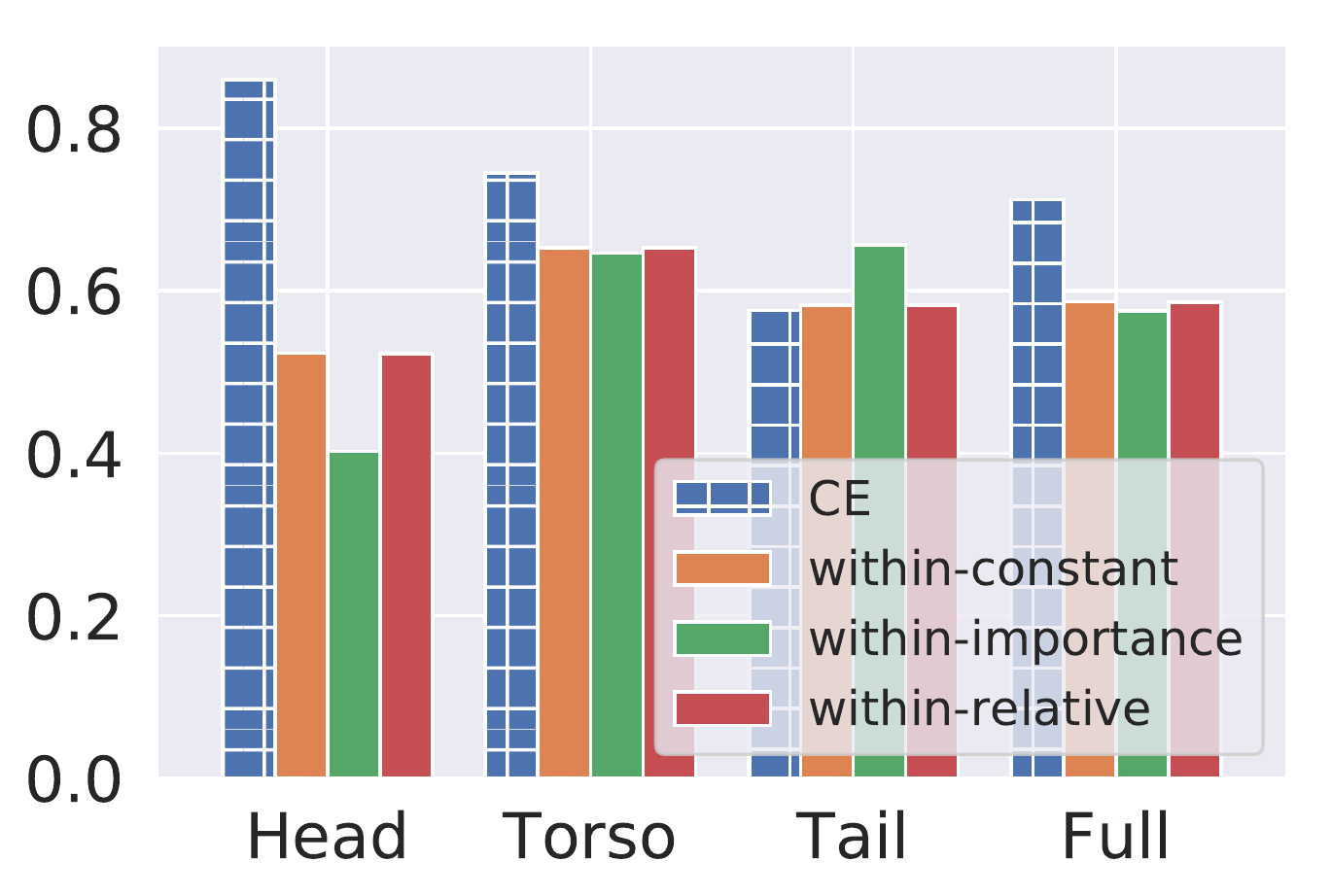}{\wikilshtc}
    }
    }
    \resizebox{0.9\linewidth}{!}{
    \subcaptionbox{$\recall$1.\label{fig:deli-r1}}{
    \figwithcap{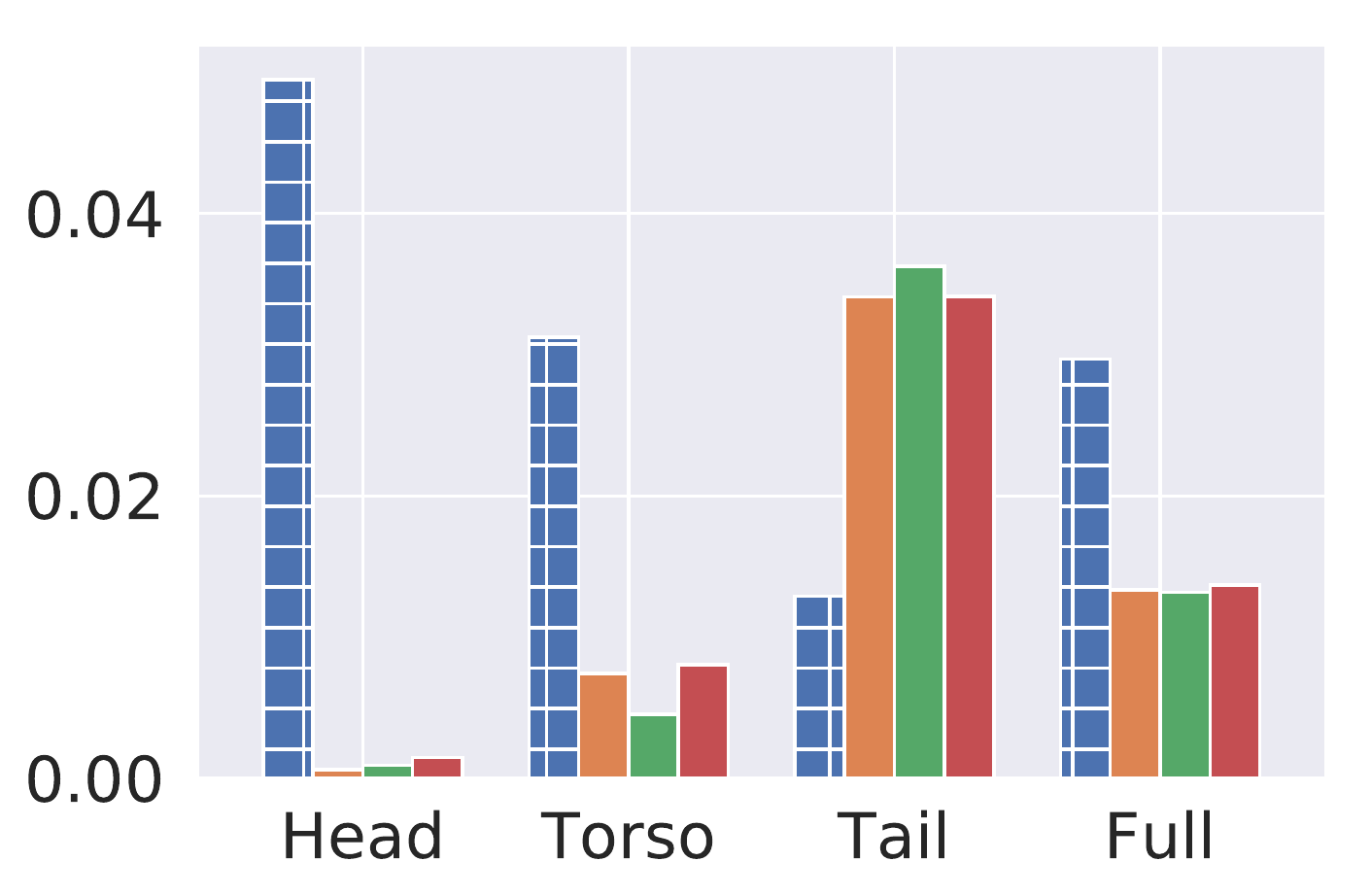}{\delicioussmall}
    }    
    \;
    \subcaptionbox{$\recall$10.\label{fig:deli-r10}}{
    \figwithcap{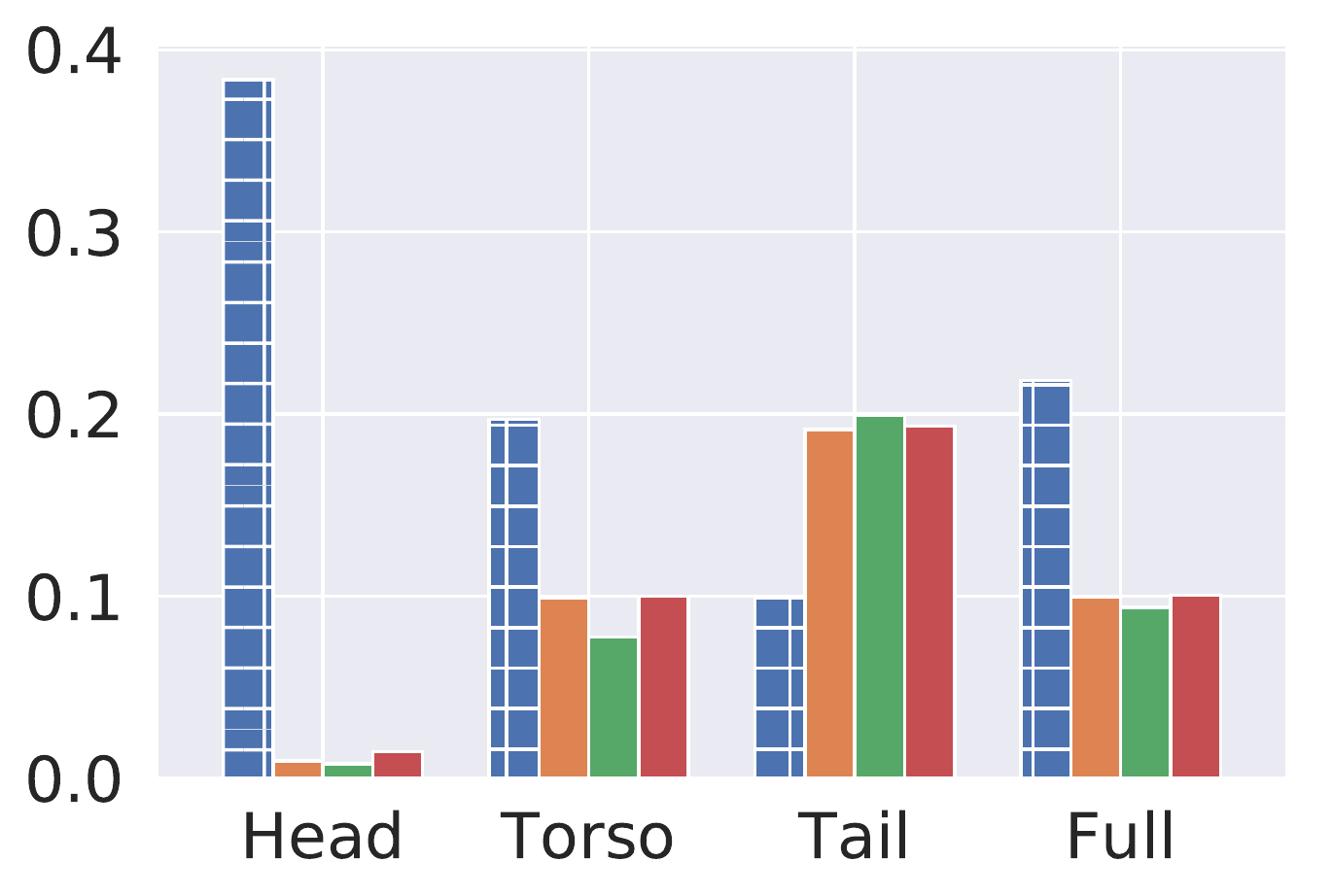}{\delicioussmall}
    }
    \;
    \subcaptionbox{$\recall$50.\label{fig:deli-r50}}{
    \figwithcap{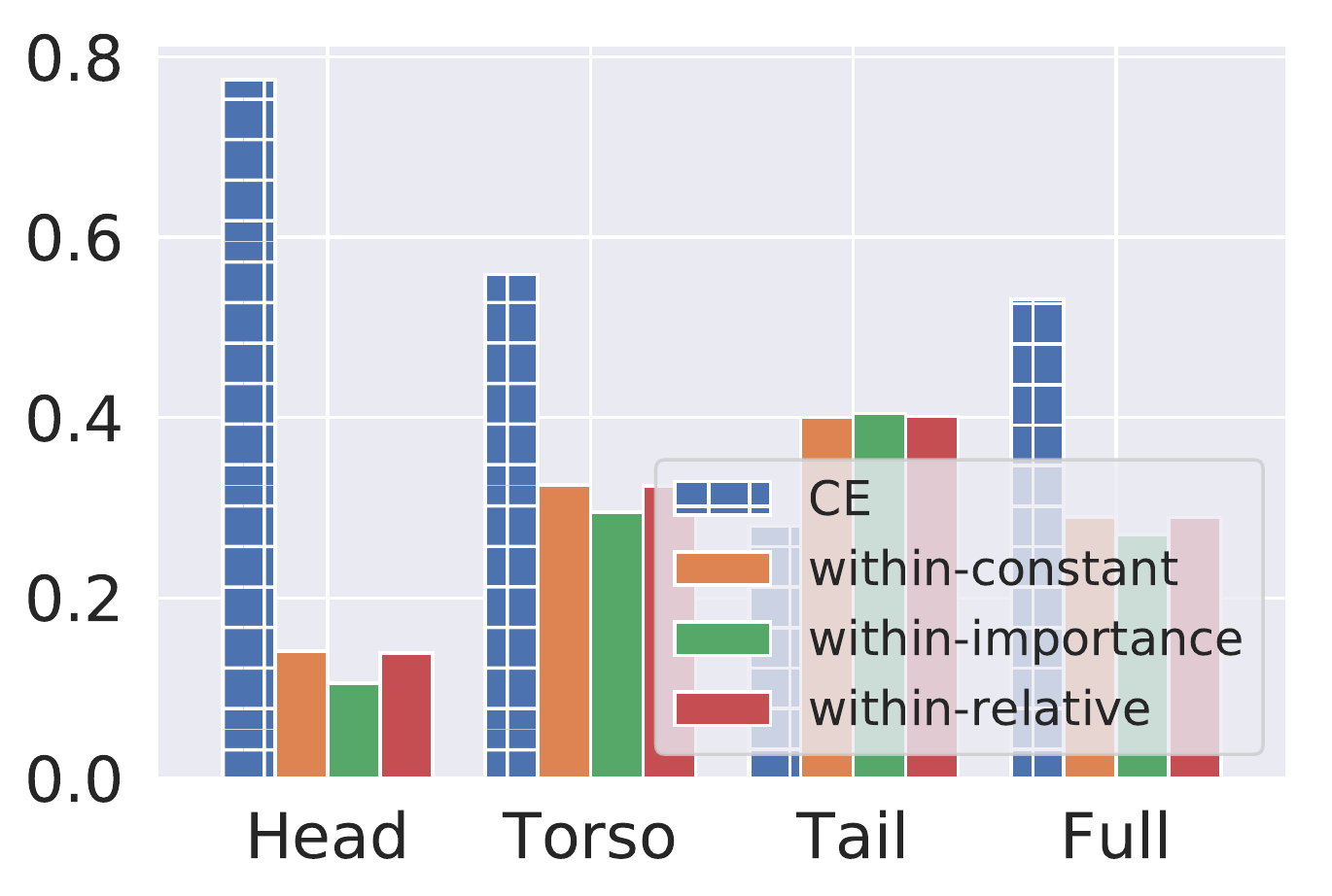}{\delicioussmall}
    }
    }
    
    \caption{Performance of 
    cross-entropy loss
    with
    within-batch negative sampling 
     (cf.~\eqref{eqn:weighted-sampled-softmax}) on \amazonsmall~(Figure~\ref{fig:am-r1} - \ref{fig:am-r50}), \wikilshtc~(Fig.~\ref{fig:wiki-r1} - \ref{fig:wiki-r50}), and \delicioussmall~(Fig.~\ref{fig:deli-r1} - \ref{fig:deli-r50}). 
     These experiments utilize $m=256$ negative for \amazonsmall{} and \wikilshtc{}, and $m = 64$ negatives for \delicioussmall{}. 
     We report $\recall{k}$ for $k \in \{ 1, 10, 50 \}$ 
     on three subpopulations (Head, Torso, and Tail) and the entire test set (Full).
     We combine within-batch sampling with constant, importance, and relative weighting schemes. 
     For reference, we include the results of standard softmax cross-entropy loss (${\tt ce}$). For all weight choices,
     within-batch sampling focuses on Tail subpopulation.
   }
    \label{fig:rb}
\end{figure*}

\subsection{Results on long-tail benchmarks}
\label{sec:long-tail-results}

We present results on 
long-tailed (``LT'') versions of the CIFAR-100
and ImageNet datasets.
All datasets feature a rapidly decaying label distribution. Note that 
negative sampling is hardly necessary on these datasets,
as the number of labels $L$ ($100$ and $1000$ respectively) 
is not prohibitively large.
Nonetheless,
our aim is to verify
that
under standard long-tail learning setups,
different negative sampling schemes trade-off performance on dominant versus rare classes;
we do \emph{not} seek to improve upon existing baselines.

For CIFAR-100,
we
downsample labels
via the
\expp 
and
{\sc Step}
profiles of~\citet{Cui:2019,Cao:2019},
with imbalance ratio
${\max_{y} \Pr( y )} / {\min_{y} \Pr( y )} = 100$. 
For ImageNet, 
we use the long-tailed version from~\citet{Liu:2019}.
We train a ResNet-56 for CIFAR and a ResNet-50 for ImageNet,
using SGD with momentum;
see Appendix~\ref{app:architectures} for details.

For each dataset, we consider the following techniques:
\begin{itemize}[itemsep=0pt,topsep=0pt,label=--,leftmargin=12pt]
    \item standard training with the softmax cross-entropy~\eqref{eqn:softmax-xent},
    decoupled ``cosine contrastive'' loss~\citep{Hadsell:2006} with margin $\rho = 0$,
    and the logit adjustment loss of~\citet{Menon:2020},
    a representative long-tail learning baseline.
    These techniques use \emph{all} $L$ labels.

    \item negative sampling using
    uniform \& within-batch
    negatives;
    and
    constant, 
    importance,
    \&
    relative
    weighting.
    Additionally, we employ the 
    ``tail'' weights 
    for each sampler
    that mimic the logit-adjusted loss, as per Table~\ref{tbl:sampling-summary-v2}.
    We use $m = 32$ negatives on CIFAR-100,
    and $m = 512$ negatives on ImageNet.
\end{itemize}

We report the test set \emph{balanced} error,
which averages the per-class error rates;
this emphasises tail classes more.
We slice this error 
by breaking down classes into three groups
---
``Head'', ``Torso'', and ``Tail''
---
depending on whether the number of associated samples is $\geq 100$, between $20$ and $100$, or $< 20$ respectively, per~\citet{Kang:2020}.
We confirm that different sampling techniques trade-off performance on dominant (Head) versus rare (Tail) classes.

\textbf{Summary of findings}.
Figure~\ref{fig:lt_benchmarks} summarises our results
using within-batch and uniform sampling, and the softmax cross-entropy.
(See Appendix~\ref{app:additional-results} for contrastive loss results.)
We observe the following.

\emph{Within-batch sampling \emph{and} constant weighting helps on the tail}.
Per our analysis,
using within-batch negatives with a constant weighting
---
which we argued approximates the equalised loss of~\citet{Tan:2020}
---
performs well on tail classes.
Conversely, 
with relative weighting, 
performance on head classes is superior.
{This is despite both schemes providing biased estimates of the softmax cross-entropy.}

\emph{The sampling distribution matters}.~Note that different weighting schemes show markedly different trends on within-batch versus uniform sampling. This highlights that the choice of $(q, w)$ crucially dictates the nature of the trade-off on dominant versus rare classes. 

\emph{Handling sampling and labeling bias}.
Note that the weights given by~\eqref{eqn:logit-weight} significantly improve performance on tail classes. This confirms that, given a target loss (in this case, logit adjustment), one can design sampling schemes to mimic the loss. Thus, as per our analysis, one can jointly handle sampling and labeling bias.

\emph{The limits of sampling}.
The logit adjustment baseline significantly outperforms the cross-entropy on tail classes
---
in keeping with~\citet{Menon:2020}
---
but also slightly improves over the ``tail'' sampling proposal which seeks to approximate it. This indicates
the variance introduced by working on a subset of labels comes at a price. In practice,
one can reduce the variance by
choosing as large a number of sampled labels as is computationally feasible.

\subsection{Results on retrieval benchmarks}

We next focus on the retrieval benchmarks from the extreme classification literature~\citep{Agrawal:2013,Bengio:2019}, where due to a large number of labels it is common to employ negative sampling. In particular, we experiment with \amazonsmall and \wikilshtc{} datasets from the extreme classification repository~\citep{V18}, with 13K and 325K labels, respectively. 
{In addition, we also explored a small scale dataset \delicioussmall{} from the repository to make our conclusions more general.} See Appendix~\ref{appen:dataset-details} for various statistics of these datasets. Given these inherently {\rm multilabel} datasets, we first construct their multiclass versions via a standard reduction procedure~\citep{Reddi19a, Wydmuch:2018}: for a multilabel example $(x, \{y_1,\ldots, y_t\})$ with $t$ positive labels, %$\{y_1,\ldots, y_t\} \subseteq [L]$, 
we create $t$ multiclass examples $\{(x, y_1),\ldots, (x, y_t)\}$.

We employ a fully connected neural network with a single hidden layer of width $512$ using a linear activation as model architecture. We experiment with both the
softmax cross-entropy and cosine contrastive loss, and employ the negative sampling techniques described in the previous section.

To assess the impact of different sampling and weighting schemes on different subpopulations of these datasets, we partition the multiclass test set (``Full'') into three subpopultations based on the label frequency. These three subpopulations corresponds ``Head'', ``Torso'', and ``Tail'' labels, with decreasing frequency in training set.

\textbf{Summary of findings}.
Figure~\ref{fig:rb}  shows the results of the $\recall{k}$ metric for all three retrieval datasets when softmax cross-entropy loss and within-batch sampling are employed. We defer the results for uniform sampling to Appendix~\ref{sec:additional-retrieval}. Furthermore, we also present the $\precision{k}$ results on the original (multilabel) datasets in Appendix~~\ref{sec:additional-retrieval}. We make the following observations.

\emph{Within-batch sampling helps on the tail}.
Our results show that within-batch sampling consistently boosts the performance on the Tail subpopulation. This is in contrast with the uniform sampling (cf.~Appendix~\ref{sec:additional-retrieval}).

\emph{Tradeoff of Head versus Tail labels}.
It is evident from our results that there is a trade-off in between the model's performance on Head versus Tail subpopulation. In particular, within batch sampling and uniform sampling (which is aligned with the standard softmax cross-entropy loss on the retrieval datasets) boost the Tail and Head respectively.

Overall, the results on the long-tail and retrieval benchmarks confirm the implicit trade-offs associated with negative sampling, and
illustrate that 
through suitable choice of $(q, w)$,
one may effectively tackle both sampling and labeling bias.

\section{Future work}
We have shown that negative sampling methods
---
while devised for the problem of learning in large-output spaces
---
implicitly trade-off performance on dominant versus rare classes,
and 
in particular, 
minimise losses that cope with long-tail label distributions.
Potential future work includes devising instance-dependent samplers that can mimic~\eqref{eqn:q-ast-loss}, e.g.,
per~\citet{Bamler:2020},
and 
considering the labeling bias effects of generic instance-dependent weights.

\section*{Acknowledgements}

The authors thank Ed Chi for valuable discussions and suggestions.

\bibliographystyle{plainnat}
\bibliography{references}

\begin{thebibliography}{44}
\providecommand{\natexlab}[1]{#1}
\providecommand{\url}[1]{\texttt{#1}}
\expandafter\ifx\csname urlstyle\endcsname\relax
  \providecommand{\doi}[1]{doi: #1}\else
  \providecommand{\doi}{doi: \begingroup \urlstyle{rm}\Url}\fi

\bibitem[Agrawal et~al.(2013)Agrawal, Gupta, Prabhu, and Varma]{Agrawal:2013}
Rahul Agrawal, Archit Gupta, Yashoteja Prabhu, and Manik Varma.
\newblock Multi-label learning with millions of labels: Recommending advertiser
  bid phrases for web pages.
\newblock In \emph{Proceedings of the 22nd International Conference on World
  Wide Web}, WWW '13, page 13–24, New York, NY, USA, 2013. Association for
  Computing Machinery.
\newblock ISBN 9781450320351.

\bibitem[Alain et~al.(2015)Alain, Lamb, Sankar, Courville, and
  Bengio]{Alain:2015}
Guillaume Alain, Alex Lamb, Chinnadhurai Sankar, Aaron~C. Courville, and Yoshua
  Bengio.
\newblock Variance reduction in {SGD} by distributed importance sampling.
\newblock \emph{arXiv preprint arXiv:1511.06481}, 2015.

\bibitem[Bamler and Mandt(2020)]{Bamler:2020}
Robert Bamler and Stephan Mandt.
\newblock Extreme classification via adversarial softmax approximation.
\newblock In \emph{International Conference on Learning Representations}, 2020.

\bibitem[Bengio et~al.(2019)Bengio, Dembczynski, Joachims, Kloft, and
  Varma]{Bengio:2019}
Samy Bengio, Krzysztof Dembczynski, Thorsten Joachims, Marius Kloft, and Manik
  Varma.
\newblock {Extreme Classification (Dagstuhl Seminar 18291)}.
\newblock \emph{Dagstuhl Reports}, 8\penalty0 (7):\penalty0 62--80, 2019.
\newblock ISSN 2192-5283.

\bibitem[Bengio and Senecal(2008)]{Bengio:2008}
Y.~Bengio and J.~S. Senecal.
\newblock Adaptive importance sampling to accelerate training of a neural
  probabilistic language model.
\newblock \emph{Trans. Neur. Netw.}, 19\penalty0 (4):\penalty0 713--722, April
  2008.
\newblock ISSN 1045-9227.

\bibitem[Blanc and Rendle(2018)]{Blanc:2018}
Guy Blanc and Steffen Rendle.
\newblock Adaptive sampled softmax with kernel based sampling.
\newblock In \emph{Proceedings of the 35th International Conference on Machine
  Learning, {ICML} 2018, Stockholmsm{\"{a}}ssan, Stockholm, Sweden, July 10-15,
  2018}, pages 589--598, 2018.

\bibitem[Broscheit et~al.(2020)Broscheit, Gashteovski, Wang, and
  Gemulla]{Broscheit:2020}
Samuel Broscheit, Kiril Gashteovski, Yanjie Wang, and Rainer Gemulla.
\newblock Can we predict new facts with open knowledge graph embeddings? a
  benchmark for open link prediction.
\newblock In \emph{Proceedings of the 58th Annual Meeting of the Association
  for Computational Linguistics}, pages 2296--2308, Online, July 2020.
  Association for Computational Linguistics.

\bibitem[Buda et~al.(2017)Buda, Maki, and Mazurowski]{Buda:2017}
Mateusz Buda, Atsuto Maki, and Maciej~A. Mazurowski.
\newblock A systematic study of the class imbalance problem in convolutional
  neural networks.
\newblock \emph{arXiv preprint arXiv:1710.05381}, 2017.

\bibitem[Cao et~al.(2019)Cao, Wei, Gaidon, Arechiga, and Ma]{Cao:2019}
Kaidi Cao, Colin Wei, Adrien Gaidon, Nikos Arechiga, and Tengyu Ma.
\newblock Learning imbalanced datasets with label-distribution-aware margin
  loss.
\newblock In \emph{Advances in Neural Information Processing Systems}, 2019.

\bibitem[Chawla et~al.(2002)Chawla, Bowyer, Hall, and Kegelmeyer]{Chawla:2002}
Nitesh~V. Chawla, Kevin~W. Bowyer, Lawrence~O. Hall, and W.~Philip Kegelmeyer.
\newblock {SMOTE}: Synthetic minority over-sampling technique.
\newblock \emph{Journal of Artificial Intelligence Research (JAIR)},
  16:\penalty0 321--357, 2002.

\bibitem[Chen et~al.(2017)Chen, Sun, Shi, and Hong]{Chen:2017}
Ting Chen, Yizhou Sun, Yue Shi, and Liangjie Hong.
\newblock On sampling strategies for neural network-based collaborative
  filtering.
\newblock In \emph{Proceedings of the 23rd ACM SIGKDD International Conference
  on Knowledge Discovery and Data Mining}, KDD '17, page 767–776, New York,
  NY, USA, 2017. Association for Computing Machinery.
\newblock ISBN 9781450348874.

\bibitem[Covington et~al.(2016)Covington, Adams, and Sargin]{Covington:2016}
Paul Covington, Jay Adams, and Emre Sargin.
\newblock Deep neural networks for youtube recommendations.
\newblock In \emph{Proceedings of the 10th ACM Conference on Recommender
  Systems}, RecSys '16, page 191–198, New York, NY, USA, 2016. Association
  for Computing Machinery.
\newblock ISBN 9781450340359.

\bibitem[Cui et~al.(2019)Cui, Jia, Lin, Song, and Belongie]{Cui:2019}
Yin Cui, Menglin Jia, Tsung-Yi Lin, Yang Song, and Serge Belongie.
\newblock Class-balanced loss based on effective number of samples.
\newblock In \emph{CVPR}, 2019.

\bibitem[Fawcett and Provost(1996)]{Fawcett:1996}
Tom Fawcett and Foster Provost.
\newblock Combining data mining and machine learning for effective user
  profiling.
\newblock In \emph{Proceedings of the ACM SIGKDD International Conference on
  Knowledge Discovery and Data Mining (KDD)}, pages 8--13. AAAI Press, 1996.

\bibitem[Goyal et~al.(2017)Goyal, Doll\'{a}r, Girshick, Noordhuis, Wesolowski,
  Kyrola, Tulloch, Jia, and He]{Goyal:2017}
Priya Goyal, Piotr Doll\'{a}r, Ross Girshick, Pieter Noordhuis, Lukasz
  Wesolowski, Aapo Kyrola, Andrew Tulloch, Yangqing Jia, and Kaiming He.
\newblock Accurate, large minibatch sgd: Training imagenet in 1 hour.
\newblock \emph{arXiv preprint arXiv:1706.02677}, 2017.

\bibitem[Gutmann and Hyv{{\"a}}rinen(2012)]{Gutmann:2012}
Michael~U. Gutmann and Aapo Hyv{{\"a}}rinen.
\newblock Noise-contrastive estimation of unnormalized statistical models, with
  applications to natural image statistics.
\newblock \emph{Journal of Machine Learning Research}, 13\penalty0
  (11):\penalty0 307--361, 2012.

\bibitem[{Hadsell} et~al.(2006){Hadsell}, {Chopra}, and {LeCun}]{Hadsell:2006}
R.~{Hadsell}, S.~{Chopra}, and Y.~{LeCun}.
\newblock Dimensionality reduction by learning an invariant mapping.
\newblock In \emph{2006 IEEE Computer Society Conference on Computer Vision and
  Pattern Recognition (CVPR'06)}, volume~2, pages 1735--1742, 2006.

\bibitem[He and Garcia(2009)]{HeGa09}
Haibo He and Edwardo~A. Garcia.
\newblock Learning from imbalanced data.
\newblock \emph{IEEE Transactions on Knowledge and Data Engineering},
  21\penalty0 (9):\penalty0 1263--1284, 2009.

\bibitem[{He} et~al.(2016){He}, {Zhang}, {Ren}, and {Sun}]{He:2016}
K.~{He}, X.~{Zhang}, S.~{Ren}, and J.~{Sun}.
\newblock Deep residual learning for image recognition.
\newblock In \emph{2016 IEEE Conference on Computer Vision and Pattern
  Recognition (CVPR)}, 2016.

\bibitem[Hidasi et~al.(2016)Hidasi, Karatzoglou, Baltrunas, and
  Tikk]{Hidasi:2016}
Bal{\'{a}}zs Hidasi, Alexandros Karatzoglou, Linas Baltrunas, and Domonkos
  Tikk.
\newblock Session-based recommendations with recurrent neural networks.
\newblock In Yoshua Bengio and Yann LeCun, editors, \emph{4th International
  Conference on Learning Representations, {ICLR} 2016, San Juan, Puerto Rico,
  May 2-4, 2016, Conference Track Proceedings}, 2016.

\bibitem[Jain et~al.(2016)Jain, Prabhu, and Varma]{Jain:2016}
Himanshu Jain, Yashoteja Prabhu, and Manik Varma.
\newblock Extreme multi-label loss functions for recommendation, tagging,
  ranking \& other missing label applications.
\newblock In \emph{Proceedings of the 22nd ACM SIGKDD International Conference
  on Knowledge Discovery and Data Mining}, KDD '16, page 935–944, New York,
  NY, USA, 2016. Association for Computing Machinery.
\newblock ISBN 9781450342322.

\bibitem[Kang et~al.(2020)Kang, Xie, Rohrbach, Yan, Gordo, Feng, and
  Kalantidis]{Kang:2020}
Bingyi Kang, Saining Xie, Marcus Rohrbach, Zhicheng Yan, Albert Gordo, Jiashi
  Feng, and Yannis Kalantidis.
\newblock Decoupling representation and classifier for long-tailed recognition.
\newblock In \emph{Eighth International Conference on Learning Representations
  (ICLR)}, 2020.

\bibitem[Levy and Goldberg(2014)]{Levy:2014}
Omer Levy and Yoav Goldberg.
\newblock Neural word embedding as implicit matrix factorization.
\newblock In Z.~Ghahramani, M.~Welling, C.~Cortes, N.~D. Lawrence, and K.~Q.
  Weinberger, editors, \emph{Advances in Neural Information Processing Systems
  27}, pages 2177--2185. Curran Associates, Inc., 2014.

\bibitem[Liu et~al.(2019)Liu, Miao, Zhan, Wang, Gong, and Yu]{Liu:2019}
Ziwei Liu, Zhongqi Miao, Xiaohang Zhan, Jiayun Wang, Boqing Gong, and Stella~X.
  Yu.
\newblock Large-scale long-tailed recognition in an open world.
\newblock In \emph{{IEEE} Conference on Computer Vision and Pattern
  Recognition, {CVPR} 2019, Long Beach, CA, USA, June 16-20, 2019}, pages
  2537--2546. Computer Vision Foundation / {IEEE}, 2019.

\bibitem[Menon et~al.(2020)Menon, Jayasumana, Rawat, Jain, Veit, and
  Kumar]{Menon:2020}
Aditya~Krishna Menon, Sadeep Jayasumana, Ankit~Singh Rawat, Himanshu Jain,
  Andreas Veit, and Sanjiv Kumar.
\newblock Long-tail learning via logit adjustment.
\newblock \emph{arXiv preprint arXiv:2007.07314}, 2020.

\bibitem[Mikolov et~al.(2013)Mikolov, Sutskever, Chen, Corrado, and
  Dean]{Mikolov:2013}
Tomas Mikolov, Ilya Sutskever, Kai Chen, Greg Corrado, and Jeffrey Dean.
\newblock Distributed representations of words and phrases and their
  compositionality.
\newblock In \emph{Proceedings of the 26th International Conference on Neural
  Information Processing Systems}, NIPS’13, page 3111–3119, Red Hook, NY,
  USA, 2013. Curran Associates Inc.

\bibitem[Morik et~al.(1999)Morik, Brockhausen, and Joachims]{Morik:1999}
Katharina Morik, Peter Brockhausen, and Thorsten Joachims.
\newblock Combining statistical learning with a knowledge-based approach - a
  case study in intensive care monitoring.
\newblock In \emph{Proceedings of the Sixteenth International Conference on
  Machine Learning (ICML)}, pages 268--277, San Francisco, CA, USA, 1999.
  Morgan Kaufmann Publishers Inc.
\newblock ISBN 1-55860-612-2.

\bibitem[Prabhu and Varma(2014)]{Prabhu:2014}
Yashoteja Prabhu and Manik Varma.
\newblock Fastxml: A fast, accurate and stable tree-classifier for extreme
  multi-label learning.
\newblock In \emph{Proceedings of the 20th ACM SIGKDD International Conference
  on Knowledge Discovery and Data Mining}, KDD '14, page 263–272, New York,
  NY, USA, 2014. Association for Computing Machinery.
\newblock ISBN 9781450329569.

\bibitem[Rawat et~al.(2019)Rawat, Chen, Yu, Suresh, and Kumar]{Rawat:2019}
Ankit~Singh Rawat, Jiecao Chen, Felix Xinnan~X Yu, Ananda~Theertha Suresh, and
  Sanjiv Kumar.
\newblock Sampled softmax with random fourier features.
\newblock In \emph{Advances in Neural Information Processing Systems}, pages
  13857--13867, 2019.

\bibitem[Reddi et~al.(2019)Reddi, Kale, Yu, Holtmann-Rice, Chen, and
  Kumar]{Reddi19a}
Sashank~J. Reddi, Satyen Kale, Felix Yu, Daniel Holtmann-Rice, Jiecao Chen, and
  Sanjiv Kumar.
\newblock Stochastic negative mining for learning with large output spaces.
\newblock In Kamalika Chaudhuri and Masashi Sugiyama, editors,
  \emph{International Conference on Artificial Intelligence and Statistics},
  volume~89 of \emph{Proceedings of Machine Learning Research}, pages
  1940--1949. PMLR, 16--18 Apr 2019.

\bibitem[Ruiz et~al.(2018)Ruiz, Titsias, Dieng, and Blei]{Ruiz:2018}
Francisco Ruiz, Michalis Titsias, Adji~Bousso Dieng, and David Blei.
\newblock Augment and reduce: Stochastic inference for large categorical
  distributions.
\newblock In Jennifer Dy and Andreas Krause, editors, \emph{International
  Conference on Machine Learning}, volume~80 of \emph{Proceedings of Machine
  Learning Research}, pages 4403--4412, Stockholmsmässan, Stockholm Sweden,
  10--15 Jul 2018. PMLR.

\bibitem[Sagawa et~al.(2020)Sagawa, Koh, Hashimoto, and Liang]{Sagawa:2020}
S.~Sagawa, P.~W. Koh, T.~B. Hashimoto, and P.~Liang.
\newblock Distributionally robust neural networks for group shifts: On the
  importance of regularization for worst-case generalization.
\newblock In \emph{International Conference on Learning Representations
  (ICLR)}, 2020.

\bibitem[Tan et~al.(2020)Tan, Wang, Li, Li, Ouyang, Yin, and Yan]{Tan:2020}
Jingru Tan, Changbao Wang, Buyu Li, Quanquan Li, Wanli Ouyang, Changqing Yin,
  and Junjie Yan.
\newblock Equalization loss for long-tailed object recognition.
\newblock \emph{arXiv preprint arXiv:2003.05176}, 2020.

\bibitem[Van~Horn and Perona(2017)]{VanHorn:2017}
Grant Van~Horn and Pietro Perona.
\newblock The devil is in the tails: Fine-grained classification in the wild.
\newblock \emph{arXiv preprint arXiv:1709.01450}, 2017.

\bibitem[Varma(2018)]{V18}
Manik Varma.
\newblock Extreme classification repository.
\newblock Website, 8 2018.
\newblock \url{http://manikvarma.org/downloads/XC/XMLRepository.html}.

\bibitem[Wallace et~al.(2011)Wallace, K.Small, Brodley, and
  Trikalinos]{Wallace:2011}
B.C. Wallace, K.Small, C.E. Brodley, and T.A. Trikalinos.
\newblock Class imbalance, redux.
\newblock In \emph{Proc.\ ICDM}, 2011.

\bibitem[{Wang} et~al.(2017){Wang}, {Mao}, {Wang}, and {Guo}]{Wang:2017b}
Q.~{Wang}, Z.~{Mao}, B.~{Wang}, and L.~{Guo}.
\newblock Knowledge graph embedding: A survey of approaches and applications.
\newblock \emph{IEEE Transactions on Knowledge and Data Engineering},
  29\penalty0 (12):\penalty0 2724--2743, 2017.

\bibitem[Wydmuch et~al.(2018)Wydmuch, Jasinska, Kuznetsov, Busa-Fekete, and
  Dembczynski]{Wydmuch:2018}
Marek Wydmuch, Kalina Jasinska, Mikhail Kuznetsov, R\'{o}bert Busa-Fekete, and
  Krzysztof Dembczynski.
\newblock A no-regret generalization of hierarchical softmax to extreme
  multi-label classification.
\newblock In \emph{Advances in Neural Information Processing Systems 31}, pages
  6355--6366. Curran Associates, Inc., 2018.

\bibitem[Xie and Manski(1989)]{Xie:1989}
Yu~Xie and Charles~F. Manski.
\newblock The logit model and response-based samples.
\newblock \emph{Sociological Methods \& Research}, 17\penalty0 (3):\penalty0
  283--302, 1989.

\bibitem[Xu et~al.(2016)Xu, Chen, Lukasiewicz, Miao, and Meng]{Xu:2016}
Zhenghua Xu, Cheng Chen, Thomas Lukasiewicz, Yishu Miao, and Xiangwu Meng.
\newblock Tag-aware personalized recommendation using a deep-semantic
  similarity model with negative sampling.
\newblock In \emph{Proceedings of the 25th ACM International on Conference on
  Information and Knowledge Management}, CIKM '16, page 1921–1924, New York,
  NY, USA, 2016. Association for Computing Machinery.
\newblock ISBN 9781450340731.

\bibitem[Yi et~al.(2019{\natexlab{a}})Yi, Yang, Hong, Cheng, Heldt, Kumthekar,
  Zhao, Wei, and Chi]{Yi:2019}
Xinyang Yi, Ji~Yang, Lichan Hong, Derek~Zhiyuan Cheng, Lukasz Heldt, Aditee
  Kumthekar, Zhe Zhao, Li~Wei, and Ed~Chi.
\newblock Sampling-bias-corrected neural modeling for large corpus item
  recommendations.
\newblock In \emph{Proceedings of the 13th ACM Conference on Recommender
  Systems}, RecSys ’19, page 269–277, New York, NY, USA,
  2019{\natexlab{a}}. Association for Computing Machinery.
\newblock ISBN 9781450362436.

\bibitem[Yi et~al.(2019{\natexlab{b}})Yi, Yang, Hong, Cheng, Heldt, Kumthekar,
  Zhao, Wei, and Chi]{YiRecSys19}
Xinyang Yi, Ji~Yang, Lichan Hong, Derek~Zhiyuan Cheng, Lukasz Heldt, Aditee
  Kumthekar, Zhe Zhao, Li~Wei, and Ed~Chi.
\newblock Sampling-bias-corrected neural modeling for large corpus item
  recommendations.
\newblock In \emph{Proceedings of the 13th ACM Conference on Recommender
  Systems}, RecSys ’19, page 269–277, New York, NY, USA,
  2019{\natexlab{b}}. Association for Computing Machinery.
\newblock ISBN 9781450362436.

\bibitem[Zhang(2004)]{zhang2004statistical}
Tong Zhang.
\newblock Statistical analysis of some multi-category large margin
  classification methods.
\newblock \emph{Journal of Machine Learning Research}, 5\penalty0
  (Oct):\penalty0 1225--1251, 2004.

\bibitem[Zhou and Liu(2006)]{Zhou:2006}
Zhi-Hua Zhou and Xu-Ying Liu.
\newblock Training cost-sensitive neural networks with methods addressing the
  class imbalance problem.
\newblock \emph{IEEE Transactions on Knowledge and Data Engineering (TKDE)},
  18\penalty0 (1), 2006.

\end{thebibliography}

\appendix
\appendix

\section{Proofs of results in body}

\begin{proof}[Proof of Lemma~\ref{lemm:contrastive-expectation}]
Observe that
\begin{align*}
    \E{\NegLabels \sim q^m}{ \ell(y, f(x); \NegLabels) } &= \E{\NegLabels \sim q^m}{ \phi( f_y( x ) ) + \sum_{i = 1}^m w_{y, s_i} \cdot \varphi( -f_{s_i}( x ) ) } \\
    &= \phi( f_y( x ) ) + \sum_{i = 1}^m \E{s_i \sim q}{ w_{y, s_i} \cdot \varphi( -f_{s_i}( x ) ) } \\
    &= \phi( f_y( x ) ) + m \cdot \E{y' \sim q}{ w_{y y'} \cdot \varphi( -f_{y'}( x ) ) } \\
    &= \phi( f_y( x ) ) + m \cdot \sum_{y' \neq y}{ q( y' \mid y ) \cdot w_{y y'} \cdot \varphi( -f_{y'}( x ) ) } \\
    &= \phi( f_y(x) ) +
    m \cdot \sum_{y' \neq y} \rho_{yy'} \cdot \varphi( -f_{y'}( x ) ),
\end{align*}
where $\rho_{yy'} = m \cdot w_{yy'} q(y' | y)$. Similarly,
\begin{align*}
    \V{\NegLabels \sim q^m}{ \ell(y, f(x); \NegLabels, w) } &= \V{\NegLabels \sim q^m}{ \phi( f_y( x ) ) + \sum_{i = 1}^m w_{y, s_i} \cdot \varphi( -f_{s_i}( x ) ) } \\
    &= \V{\NegLabels \sim q^m}{ \sum_{i = 1}^m w_{y, s_i} \cdot \varphi( -f_{s_i}( x ) ) } \\
    &= m \cdot \V{y' \sim q}{ w_{y, y'} \cdot \varphi( -f_{y'}( x ) ) } \\
    &= m \cdot \E{y' \sim q}{ \left[ w_{y, y'} \cdot \varphi( -f_{y'}( x ) )  \right]^2 } - m \cdot \left[ \E{y' \sim q}{ w_{y, y'} \cdot \varphi( -f_{y'}( x ) ) } \right]^2 \\
    &= m \cdot \sum_{y' \neq y}{ q(y' \mid y) \cdot \left[ w_{y, y'} \cdot \varphi( -f_{y'}( x ) )  \right]^2 } - \\
    & \qquad \qquad m \cdot \left[ \sum_{y' \neq y}{ q( y' \mid y ) \cdot w_{y, y'} \cdot \varphi( -f_{y'}( x ) ) } \right]^2 \\
    &= \sum_{y' \neq y} w_{yy'} \cdot \rho_{yy'} \cdot \varphi( -f_{y'}( x ) )^2 -
    \frac{1}{m} \cdot \left( \sum_{y' \neq y} \rho_{yy'} \cdot \varphi( -f_{y'}( x ) ) \right)^2.
\end{align*}
\end{proof}

\begin{proof}[Proof of Lemma~\ref{lemm:expected-sampled-loss}]
Observe that
\begin{align*}
    \E{\NegLabels \sim q^m}{ \sum_{y' \in \NegLabels} w_{yy'} \cdot e^{f_{y'}( x ) - f_{y}  ( x )} } &= m \cdot \E{y' \sim q}{ w_{yy'} \cdot e^{f_{y'}( x ) - f_{y}( x )} } \\
    &= m \cdot \sum_{y' \neq y} q_{y'} \cdot w_{yy'} \cdot e^{f_{y'}( x ) - f_{y}( x )}.
\end{align*}
By Jensen's inequality,
the expected loss is thus bounded by
\begin{align*}
    & \log\left[ 1 + \sum_{y' \neq y} m \cdot q_{y'} \cdot w_{yy'} \cdot e^{f_{y'}( x ) - f_{y}( x )} \right].
\end{align*}
\end{proof}

\begin{proof}[Proof of Theorem~\ref{thm:implicit-xent}]
Let $G_{y,m}(x) \defEq e^{f_{y}(x)}+\sum_{y'\in\mathcal{\mathscr{N}}}w_{yy'}e^{f_{y'}(x)}$ be an estimate of the partition function.
From Lemma~\ref{lem:normality_partition_estimate} (Appendix~\ref{sec:helper}), 
\begin{align*}
    & \sqrt{m}\left(\log G_{y,m}(x)-\log\mu_{y}(x)\right)\stackrel{d}{\to}\mathcal{N}\left(0,\sigma_{y}^{2}(x)/\mu_{y}^{2}(x)\right) \\
    \implies &  \sqrt{m}\left( -f_y(x) + \log G_{y,m}(x)- [ -f_y(x) + \log\mu_{y}(x)] \right)\stackrel{d}{\to}\mathcal{N}\left(0,\sigma_{y}^{2}(x)/\mu_{y}^{2}(x)\right) \\
    \implies &  \sqrt{m}\left( \ell(y,f(x);\mathscr{N}, w)  
    -  \ell^{q, w}_m(y, f(x))  
    \right)\stackrel{d}{\to}\mathcal{N}\left(0,\sigma_{y}^{2}(x)/\mu_{y}^{2}(x)\right), 
\end{align*}
where 
$$\ell(y,f(x);\mathscr{N}, w) 
=-f_{y}(x)+\log\Big[e^{f_{y}(x)}+\sum_{y'\in\mathscr{N}}w_{yy'}e^{f_{y}(x)}\Big],$$
$$\ell^{q, w}_m(y, f(x)) = -f_{y}(x)+\log\Big[e^{f_{y}(x)}+\sum_{y' \neq y } \rho_{yy'}e^{f_{y}(x)} \Big], $$ and $\rho_{yy'} \defEq {m \cdot q_{y'} \cdot w_{yy'}}$ from Lemma \ref{lemm:expected-sampled-loss}.
This implies that for
sufficiently large $m$ and for a given $x\in\mathcal{X}$, we have
\begin{align*}
&\mathbb{E}_{\NegLabels \sim q^m}\big[\big( \ell(y, f(x); \NegLabels, w)  - \ell^{q, w}_m(y, f(x)) \big)^2 \big] 
= \frac{\sigma^2_y(x)}{m \cdot \mu_{y}^2(x)} + o_p(1).
\end{align*}
where $o_{p}(1)$ is a random variable that converges to 0 in probability.

\end{proof}

\begin{proof}[Proof of Proposition~\ref{prop:loss-connection}]
From~\eqref{eqn:expected-sampled-loss-special},
for a given $(q, w)$,
the implicit loss for the sampled softmax cross-entropy is
$$ \ell^{q, w}_m(y, f(x)) = \log\Big[ 1 + \sum_{y' \neq y} \rho_{yy'} \cdot e^{f_{y'}( x ) - f_{y}( x )} \Big], $$
where $\rho_{yy'} = m \cdot q_{y'} \cdot w_{yy'}$.
This exactly equals the pairwise margin loss~\eqref{eqn:unified-margin-loss}.
Thus, for a fixed $\rho_{yy'}$, picking $w_{yy'} = \frac{\rho_{yy'}}{m \cdot q_{y'}}$ guarantees the implicit and pairwise margin losses coincide.
\end{proof}

\begin{proof}[Proof of Lemma~\ref{lemm:variance-minimiser}]
This is a simple consequence of the fact that when applying importance weighting $\E{q}{ \frac{p( x )}{q( x )} \cdot f( x ) }$ to approximate an expectation $\E{p}{ f( x ) }$,
the minimum variance choice of $q$ is $q^*( x ) \propto p( x ) \cdot f( x )$ (e.g., see~\citet{Alain:2015}).
\end{proof}

\section{A helper lemma}
\label{sec:helper}

\begin{lemma}
\label{lem:normality_partition_estimate} 

Define $G_{y,m}(x) \defEq e^{f_{y}(x)}+\sum_{y'\in\mathcal{\mathscr{N}}}w_{yy'}e^{f_{y'}(x)}$
as a random variable that estimates the partition function, where
$\mathscr{N}=\{y_{1}',\ldots,y_{m}'\}\stackrel{i.i.d.}{\sim}q$. 
Assume $q \in \Delta_{[L]} $ has $q_y = 0$, and  $w_{yy'}=\frac{1}{m}\eta_{yy'}$
where $\eta_{yy'}$ is independent of $m$. Let $\mu_{y}(x):=\mathbb{E}_{\mathscr{N}\sim q^{m}}G_{y,m}(x)$,
and $\sigma_{y}^{2}(x)=\mathbb{V}_{y'\sim q}[\eta_{yy'}e^{f_{y'}(x)}]$,
both assumed to be strictly positive and finite. Then, for any $x\in\mathcal{X}$, the following
statements hold: 
\begin{enumerate}
\item $\sqrt{m}(G_{y,m}(x)-\mu_{y}(x))\stackrel{d}{\to}\mathcal{N}(0,\sigma_{y}^{2}(x))$; 
\item If $\mu_{y}(x)>0$, then $\sqrt{m}\left(\log G_{y,m}(x)-\log\mu_{y}(x)\right)\stackrel{d}{\to}\mathcal{N}\left(0,\sigma_{y}^{2}(x)/\mu_{y}^{2}(x)\right)$. 
\end{enumerate}
Here, $\stackrel{d}{\to}$ denotes convergence in distribution as
$m\to\infty$. 
\end{lemma}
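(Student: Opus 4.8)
The plan is to recognize $G_{y,m}(x)$ as an i.i.d.\ sample mean plus a deterministic offset, and then invoke the classical central limit theorem followed by the delta method.

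First I would write, using $w_{yy'} = \tfrac{1}{m}\eta_{yy'}$ and that $\mathscr{N} = \{y_1',\ldots,y_m'\}$ are drawn i.i.d.\ from $q$,
\[
  G_{y,m}(x) = e^{f_y(x)} + \frac{1}{m}\sum_{i=1}^m \eta_{y y_i'}\, e^{f_{y_i'}(x)} = e^{f_y(x)} + \bar{Z}_m,
\]
where $Z_i \defEq \eta_{y y_i'}\, e^{f_{y_i'}(x)}$ are i.i.d.\ copies of $Z \defEq \eta_{y y'} e^{f_{y'}(x)}$ with $y' \sim q$, and $\bar{Z}_m \defEq \tfrac{1}{m}\sum_{i=1}^m Z_i$. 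Since the label space $[L]$ is finite, $Z$ takes only finitely many values, so all of its moments exist; in particular $\mathbb{E}[Z] = \mu_y(x) - e^{f_y(x)}$ and $\mathbb{V}[Z] = \sigma_y^2(x)$, the latter strictly positive and finite by hypothesis. (The assumption $q_y = 0$ is what lets us peel off the $e^{f_y(x)}$ term cleanly, though it is not essential to the argument.)

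For claim (1), observe that $G_{y,m}(x) - \mu_y(x) = \bar{Z}_m - \mathbb{E}[Z]$, so $\sqrt{m}\,(G_{y,m}(x) - \mu_y(x)) = \sqrt{m}\,(\bar{Z}_m - \mathbb{E}[Z]) \stackrel{d}{\to} \mathcal{N}(0, \sigma_y^2(x))$ directly by the Lindeberg--L\'evy CLT. For claim (2), I would apply the delta method to claim (1) with the map $g(t) = \log t$, which is continuously differentiable on $(0,\infty)$ with $g'(t) = 1/t$; since $\mu_y(x) > 0$ by assumption, $g'(\mu_y(x)) = 1/\mu_y(x)$, and the delta method yields $\sqrt{m}\,(\log G_{y,m}(x) - \log \mu_y(x)) \stackrel{d}{\to} \mathcal{N}(0, \sigma_y^2(x)/\mu_y^2(x))$. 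One small point worth checking is that $\log G_{y,m}(x)$ is always well defined: this holds because $G_{y,m}(x) \geq e^{f_y(x)} > 0$ deterministically, all sampled terms being nonnegative.

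Honestly there is no real obstacle here — the statement is a textbook CLT-plus-delta-method computation, and the only items worth stating explicitly are the finiteness of the moments of $Z$ (automatic since $|[L]| < \infty$) and the strict positivity of $\mu_y(x)$ and $\sigma_y^2(x)$ (assumed), which respectively make the delta-method step valid and keep the limiting normal non-degenerate. If one prefers not to cite the delta method as a black box, the Taylor expansion $\log G_{y,m}(x) = \log \mu_y(x) + \tfrac{1}{\mu_y(x)}\bigl(G_{y,m}(x) - \mu_y(x)\bigr) + o_p\bigl(|G_{y,m}(x) - \mu_y(x)|\bigr)$, combined with $G_{y,m}(x) \stackrel{p}{\to} \mu_y(x)$ (weak law of large numbers) and Slutsky's theorem, reproduces claim (2) from claim (1) in a couple of lines.
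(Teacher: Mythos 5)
Your proof is correct and follows essentially the same route as the paper's: decompose $G_{y,m}(x)$ into the deterministic term $e^{f_y(x)}$ plus the i.i.d.\ average $\frac{1}{m}\sum_i \eta_{y y_i'} e^{f_{y_i'}(x)}$, apply the Lindeberg--L\'evy CLT, then apply the delta method with $g(t)=\log t$. Your added remarks — that finiteness of $[L]$ guarantees all moments of $Z$ exist, that $G_{y,m}(x)\geq e^{f_y(x)}>0$ so $\log G_{y,m}(x)$ is always well defined, and that $q_y=0$ is not actually load-bearing for this particular lemma — are all accurate and are small, sensible clarifications on top of the paper's argument rather than a different approach.
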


\begin{proof}

We first note that 
\begin{align*}
\mathbb{E}_{y' \sim q}[G_{y,m}(x)] & =e^{f_{y}(x)}+m\mathbb{E}_{y'\sim q}w_{yy'}e^{f_{y}(x)}\\
& =e^{f_{y}(x)}+\sum_{y' \neq y} m \cdot q_{y'} \cdot w_{yy'}e^{f_{y}(x)} \\
 %& =e^{f_{y}(x)}+\mathbb{E}_{y'\sim q}\eta_{yy'}e^{f_{y}(x)} \\
 & =e^{f_{y}(x)}+\sum_{y' \neq y} q_{y'} \eta_{yy'}e^{f_{y}(x)}  = \mu_{y}(x),
\end{align*}
where we use the fact that samples in $\mathscr{N}$ are i.i.d. Equivalently $G_{y,m}(x)=e^{f_{y}(x)}+\frac{1}{m}\sum_{i=1}^{m}\eta_{y,y'_{i}}e^{f_{y'}(x)}$.
Note that $e^{f_{y}(x)}$ is not random, and $\frac{1}{m}\sum_{i=1}^{m}\eta_{y,y'_{i}}e^{f_{y'}(x)}$
is an average of i.i.d. random variables. 
By the central limit theorem, it follows that 
$\sqrt{m}(G_{y,m}(x)-\mu_{y}(x))\stackrel{d}{\to}\mathcal{N}(0,\sigma_{y}^{2}(x))$ where $\sigma_{y}^{2}(x)=\mathbb{V}_{y'\sim q}[\eta_{yy'}e^{f_{y'}(x)}]$.

Recall the Delta method which states that
if a sequence of random variables $( X_{m} )_{m \in \mathbb{Z}+}$ satisfies $\sqrt{m}(X_{m}-\theta)\stackrel{d}{\to}\mathcal{N}(0,\sigma^{2})$
for some $\theta\in\mathbb{R}$ and $\sigma^{2}>0$, then $\sqrt{m}(g(X_{m})-g(\theta))\stackrel{d}{\to}\mathcal{N}(0,\sigma^{2}g'(\theta)^{2})$
for any function $g$ whose derivative $g'(\theta)$ exists at $\theta$,
and is non-zero. Choosing $g(x)=\log(x)$ and applying the Delta method
to $\sqrt{m}(G_{y,m}(x)-\mu_{y}(x))\stackrel{d}{\to}\mathcal{N}(0,\sigma_{y}^{2}(x))$ gives
\begin{align*}
    & \sqrt{m}\left(\log G_{y,m}(x)-\log\mu_{y}(x)\right)\stackrel{d}{\to}\mathcal{N}\left(0,\sigma_{y}^{2}(x)/\mu_{y}^{2}(x)\right) 
\end{align*}

\end{proof}

\section{Additional discussion on negative sampling schemes}
\label{app:additional-q-w}

\begin{remark}
\label{remark:logit-correction}
{For the softmax cross-entropy,
one way of reasoning about the weighting on negative samples is as a \emph{logit correction}.
Observe that~\eqref{eqn:weighted-sampled-softmax} may be rewritten
\begin{equation}
    \ell( y, f( x ); \NegLabels ) 
    = \log\Big[ 1 + \sum\nolimits_{y' \in \NegLabels} e^{\bar{f}_{y'}( x ) - f_{y}( x )} \Big],
\end{equation}
where $\bar{f}_{y'}( x ) = f_{y'}( x ) - \log w_{yy'}$
are \emph{corrected} versions of the original logits or scores.
Note further that if $\NCal$ can include the positive label $y$,
this is tantamount to additionally correcting the positive logit as well.}
\end{remark}

\begin{remark}
Further to Remark~\ref{remark:logit-correction},
the difference between the importance and relative weighting schemes may be understood as follows.
Suppose we employ the softmax cross-entropy 
with explicit exclusion of the positive label from $\NegLabels$.
Further, if we modify the positive logit to $\tilde{f}_y( x )$,
and negative logit to $\bar{f}_y( x )$:
\begin{align*}
    \ell( y, f( x ); \NegLabels )
    &= \log\Big[ 1 + \sum_{y' \in \NCal - \{ y \}} e^{\bar{f}_{y'}( x ) - \tilde{f}_y( x )} \Big].
\end{align*}
By setting $\tilde{f}_y( x ) = f_y( x )$,
and $\bar{f}_{y'}( x ) = f_{y'}( x ) - \log (m \cdot \qYPrime)$, 
we obtain the importance weighting scheme.
On the other hand,
if we additionally set
$\tilde{f}_y( x ) = f_y( x ) - \log (m \cdot \qY)$,
---
i.e.,
apply \emph{positive} logit correction as well
---
then we arrive at the relative weighting scheme.
\end{remark}

\begin{remark}
\label{rem:remove_acc_hits}
As stated,
the sampling distribution
$q$ may place non-zero mass on the ``positive'' label $y$;
thus, one may include $y$ amongst the ``negative'' labels.
As this is intuitively undesirable,
the domain of $q$ may be additionally restricted so as to exclude this possibility.
Further,
one may explicitly discount this label from consideration by zeroing out its weight;
e.g.,
we may apply $w_{yy'} = \indicator{ y' \neq y }$
in place of constant weighting.
This is similar yet \emph{distinct} to forcing $q$ to exclude $y$ from its sampling domain,
as the former implicitly modifies the distribution of negatives.
In practice, however, the two approaches have similar performance.
\end{remark}

\section{Expected decoupled losses under negative sampling}
\label{app:decoupled-table}

Table~\ref{tbl:sampling-summary-contrastive} summarises expected losses under negative sampling for the decoupled case.

\begin{table*}[!ht]
    \centering
    \renewcommand{\arraystretch}{1.5}

    \resizebox{0.99\linewidth}{!}{
    \begin{tabular}{@{}llll@{}}
        \toprule
        \textbf{Sampling distribution} & \textbf{Weighting} & \textbf{Expected loss on negatives} & \textbf{Comment} \\
        \toprule
        Uniform
        &
        Constant ($\frac{1}{m}$)
        &
        $\frac{1}{L-1} \sum_{y' \neq y}{ \varphi( -f_{y'}( x ) ) }$
         &
        Scaled decoupled loss \\
        Uniform &
        Importance weighting ($\frac{L}{m}$)
        &
        $\sum_{y' \neq y}{ \varphi( -f_{y'}( x ) ) }$
         &
        Decoupled loss \\
        Uniform &
        Relative weighting ($1$)
        &
        $\frac{m}{L} \sum_{y' \neq y} \varphi( -f_{y'}( x ) )$
         &
        Scaled decoupled loss \\
        Uniform &
        {$\frac{L}{m } \cdot \frac{\pi_{y'}}{\pi_{y}}$}
        &
        $\sum_{y' \neq y} \frac{\pi_{y'}}{\pi_{y}} \cdot \varphi( -f_{y'}( x ) )$
         &
        Tail-heavy loss \\
        \midrule
        Within-batch
        &
        Constant ($\frac{1}{m}$)
        &
        $\sum_{y' \neq y}{ \pi_{y'} \cdot \varphi( -f_{y'}( x ) ) }$
         &
        Tail-heavy loss \\
        Within-batch &
        Importance weighting ($\frac{1}{m \cdot \pi_{y'}}$)
        &
        $\sum_{y' \neq y}{ \varphi( -f_{y'}( x ) ) }$
         &
        Decoupled loss \\
        Within-batch &
        Relative weighting ($\frac{\pi_{y}}{\pi_{y'}}$)
        &
        ${m \cdot \pi_{y}} \cdot \sum_{y' \neq y} \varphi( -f_{y'}( x ) )$
         &
        Head-heavy loss \\
        Within-batch &
        {$\frac{1}{m \cdot \pi_{y}}$}
        &
        $\sum_{y' \neq y} \frac{\pi_{y'}}{\pi_{y}} \cdot \varphi( -f_{y'}( x ) )$
         &
        Tail-heavy loss \\
        \bottomrule
    \end{tabular}
    }

    \caption{Expectation of
    loss of negatives
    $\sum_{y' \in \NegLabels} w_{yy'} \cdot \varphi( -f_{y'}( x ) )$
    for an example $(x, y)$.
    Here, negatives $\NegLabels$ are sampled from $q$ with $q_y > 0$,
    and
    weighting scheme $w$
    satisfies $w_{yy} = 0$.
    Different choices of $(q, w)$ yield upper bounds which resemble
    losses from the long-tail learning literature,
    such as the equalised loss of~\citet{Tan:2020} and the logit-adjusted loss of~\citet{Menon:2020}.
    }
    \label{tbl:sampling-summary-contrastive}
\end{table*}

\section{Details of long-tail experiments}
\label{app:architectures}

For all datasets,
we use SGD with momentum $0.9$.
Dataset specific settings are given below.

\textbf{CIFAR-100}:
We use a CIFAR ResNet-56
with weight decay of $10^{-4}$
trained for 256 epochs,
using a minibatch size of 128.
We use a stepwise annealed learning rate,
with a base learning rate of 0.1
that is decayed by 0.1 at the 160th epoch,
and by 0.01 at the 180th epoch.
We apply standard CIFAR data augmentation per~\citet{Cao:2019, He:2016}.

\textbf{ImageNet}:
We use a ResNet-50
with weight decay of $5 \times 10^{-4}$
trained for 90 epochs,
using a minibatch size of 512.
We use a cosine learning rate
with a base learning rate of 0.4.
We apply standard ImageNet data augmentation per~\citet{Goyal:2017}.

\section{Additional results: Long-tail datasets}
\label{app:additional-results}

We present additional results on the long-tail learning benchmarks
using a contrastive loss,
and
compare the overall (non-sliced) balanced errors of various methods.

\subsection{Results on contrastive loss}

Figure~\ref{fig:lt_benchmarks_contrastive} shows results using the contrastive loss on the long-tail benchmarks.
Here, the performance of different sampling schemes is more variable compared to the softmax cross-entropy.
In particular, on Tail classes, the performance of sampling is generally poor compared to the baseline.
Note that the latter is the de-facto choice of loss function for long-tail settings.
Consequently, the default hyperparameters (e.g., learning rate and batch size) are generally attuned to this loss.
Further tuning of these may improve the results for the contrastive loss.

\begin{figure*}[!hpt]
    \centering

    \resizebox{\linewidth}{!}{
    \subcaptionbox{CIFAR-100-LT Step (contrastive loss).}{
    \includegraphics[scale=0.25,valign=t]{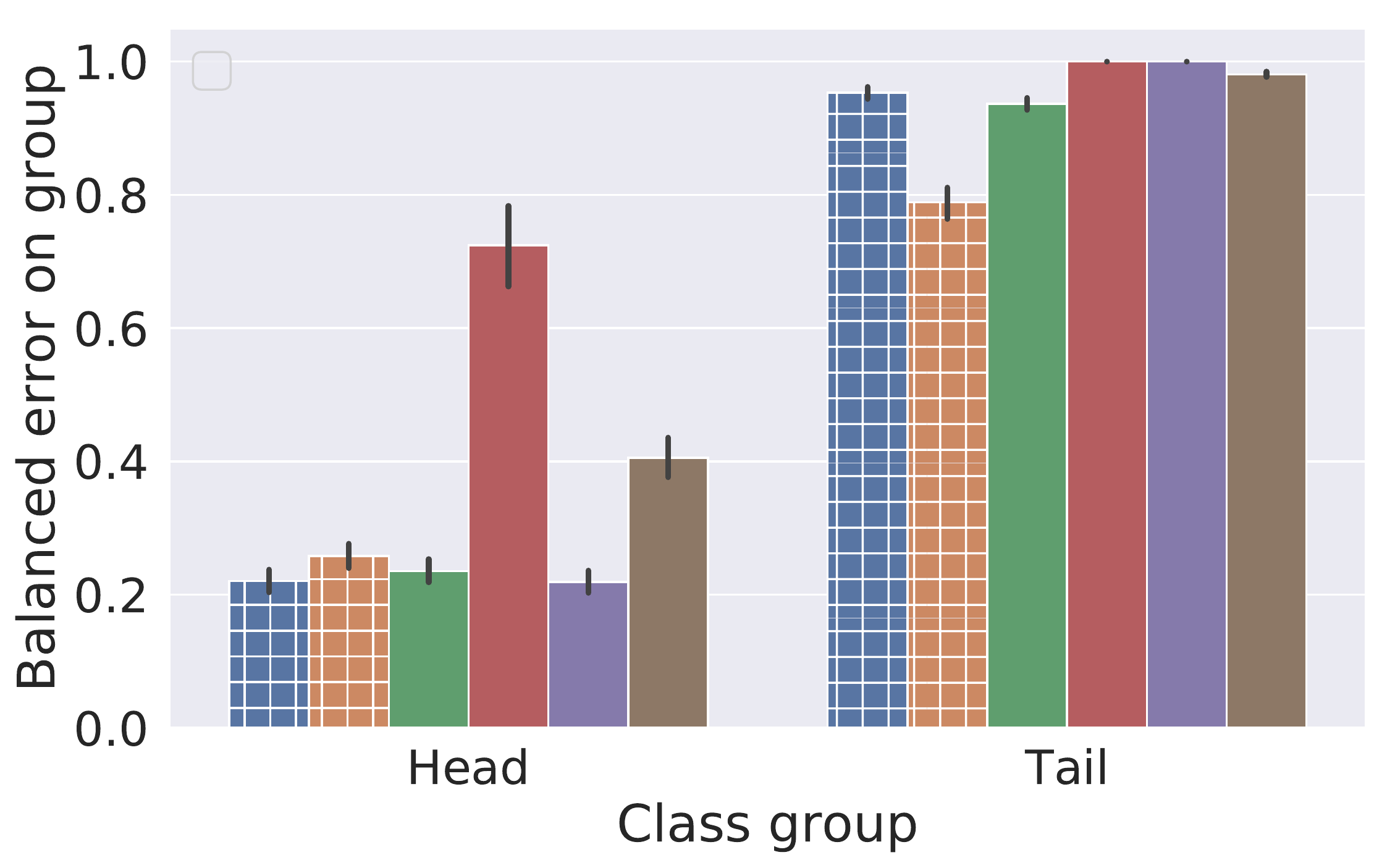}
    }
    \qquad
    \subcaptionbox{CIFAR-100-LT Exp (contrastive loss).}{
    \includegraphics[scale=0.25,valign=t]{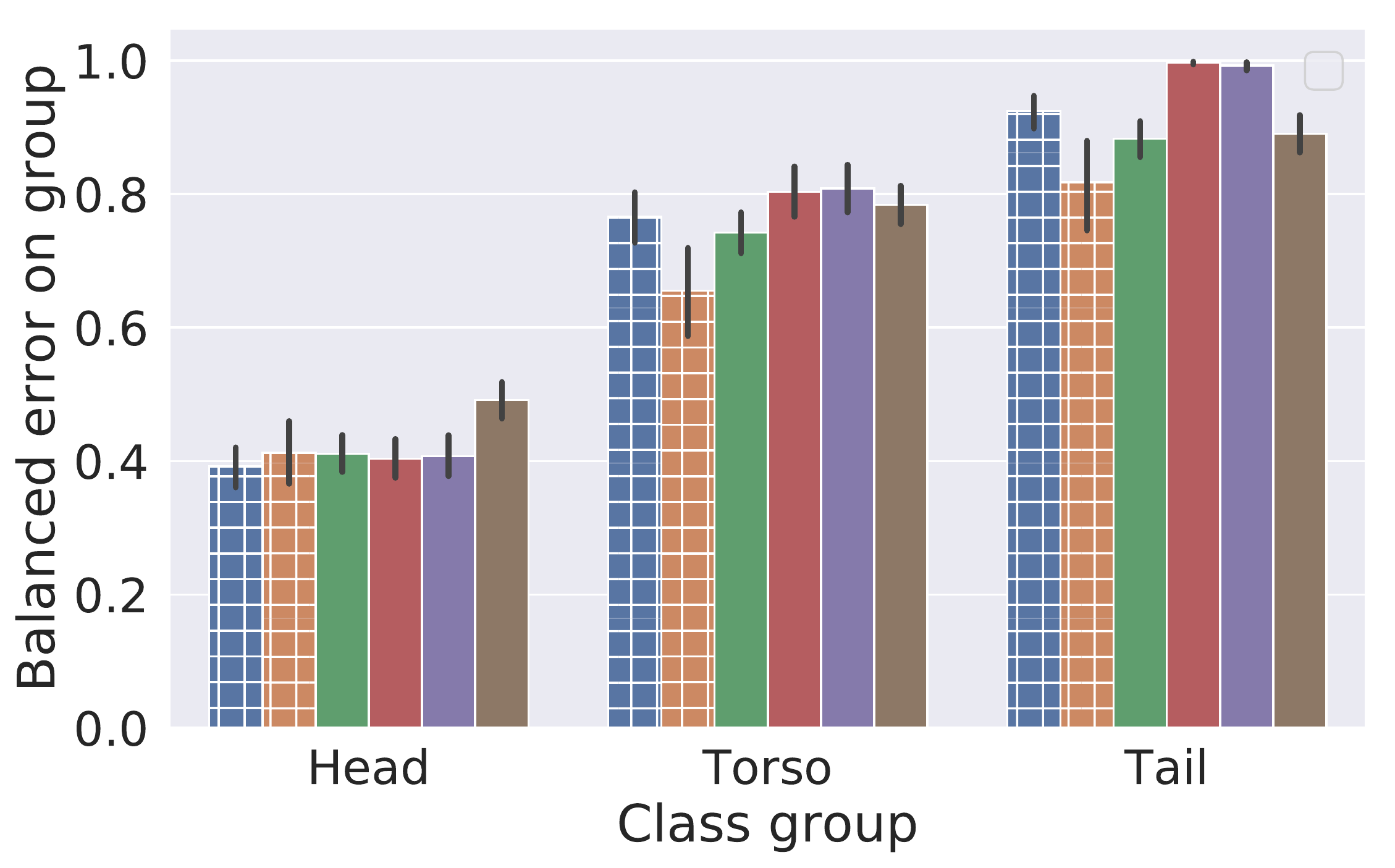}
    }
    \qquad
    \subcaptionbox{ImageNet-LT (contrastive loss).}{
    \includegraphics[scale=0.25,valign=t]{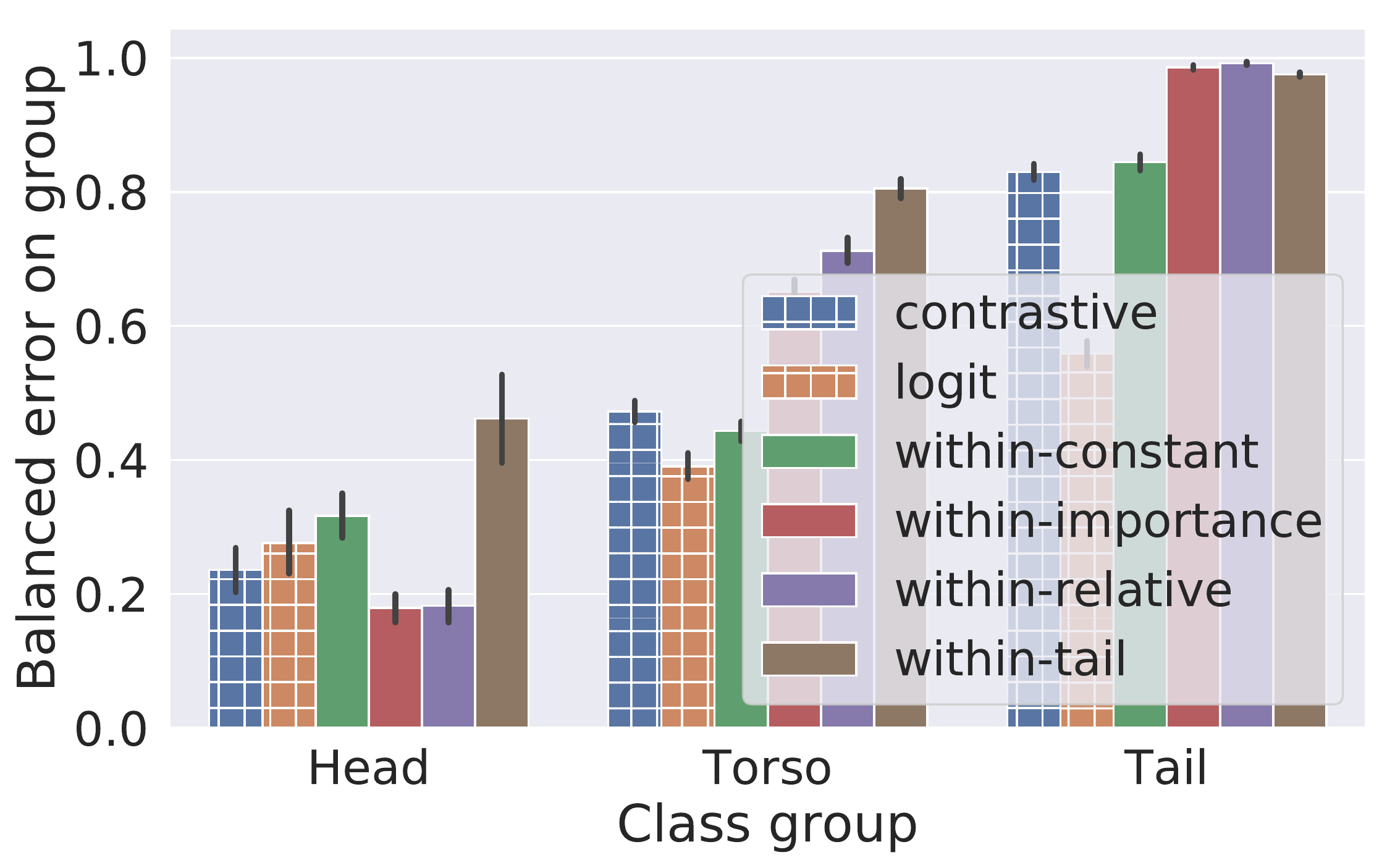}
    }
    }

    \resizebox{\linewidth}{!}{
    \subcaptionbox{CIFAR-100-LT Step (contrastive loss).}{
    \includegraphics[scale=0.25,valign=t]{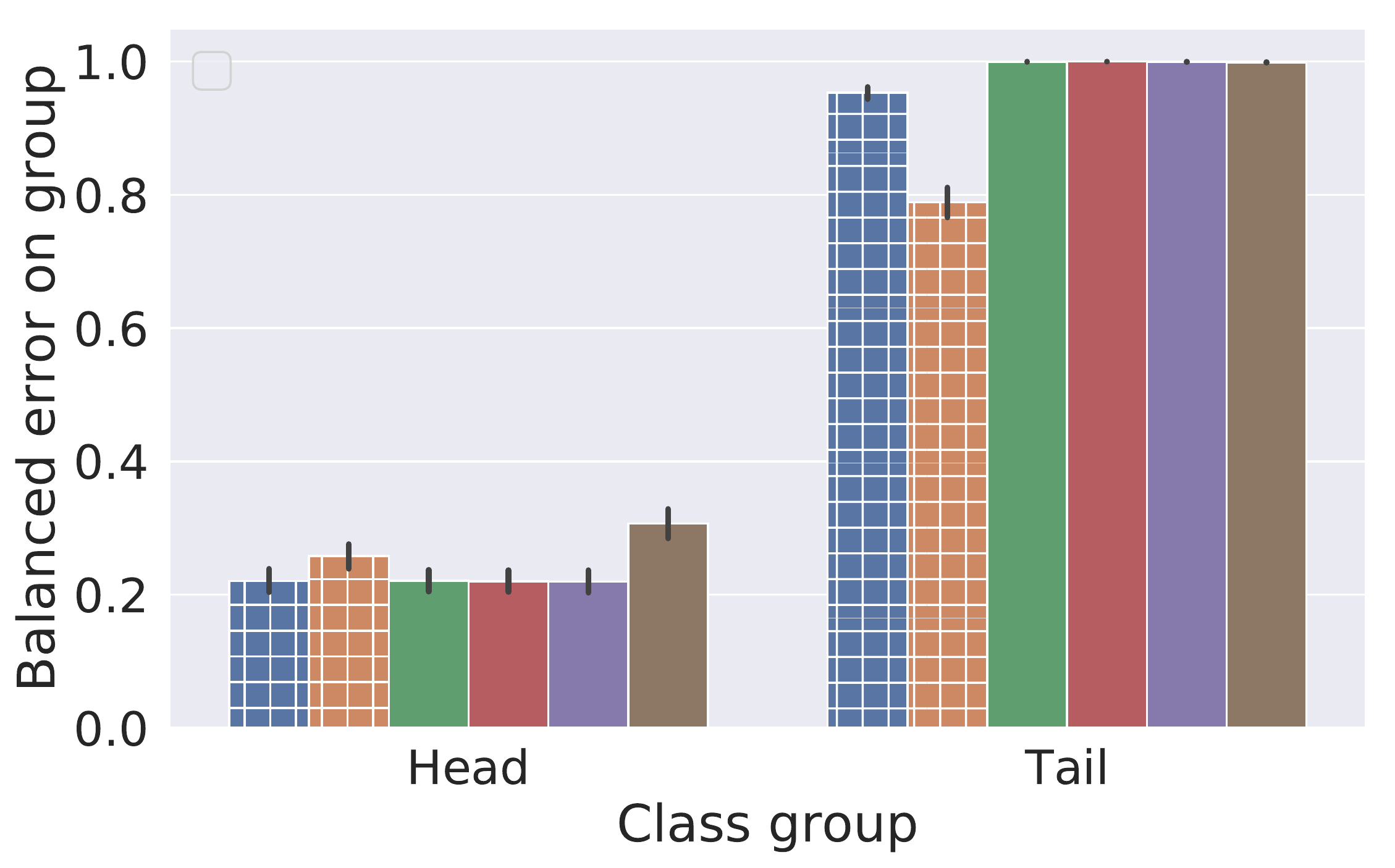}
    }
    \qquad
    \subcaptionbox{CIFAR-100-LT Exp (contrastive loss).}{
    \includegraphics[scale=0.25,valign=t]{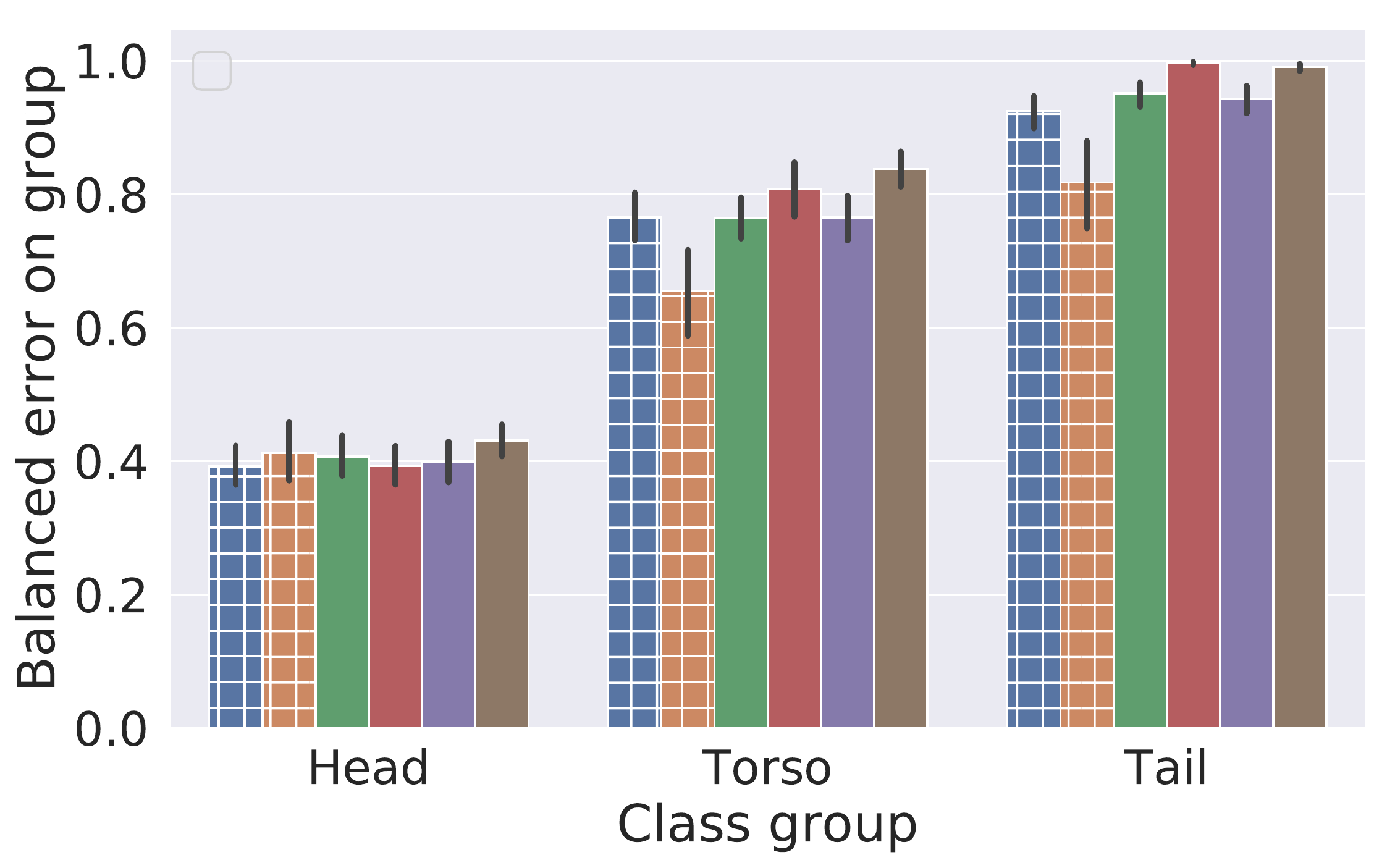}
    }
    \qquad
    \subcaptionbox{ImageNet-LT (contrastive loss).}{
    \includegraphics[scale=0.25,valign=t]{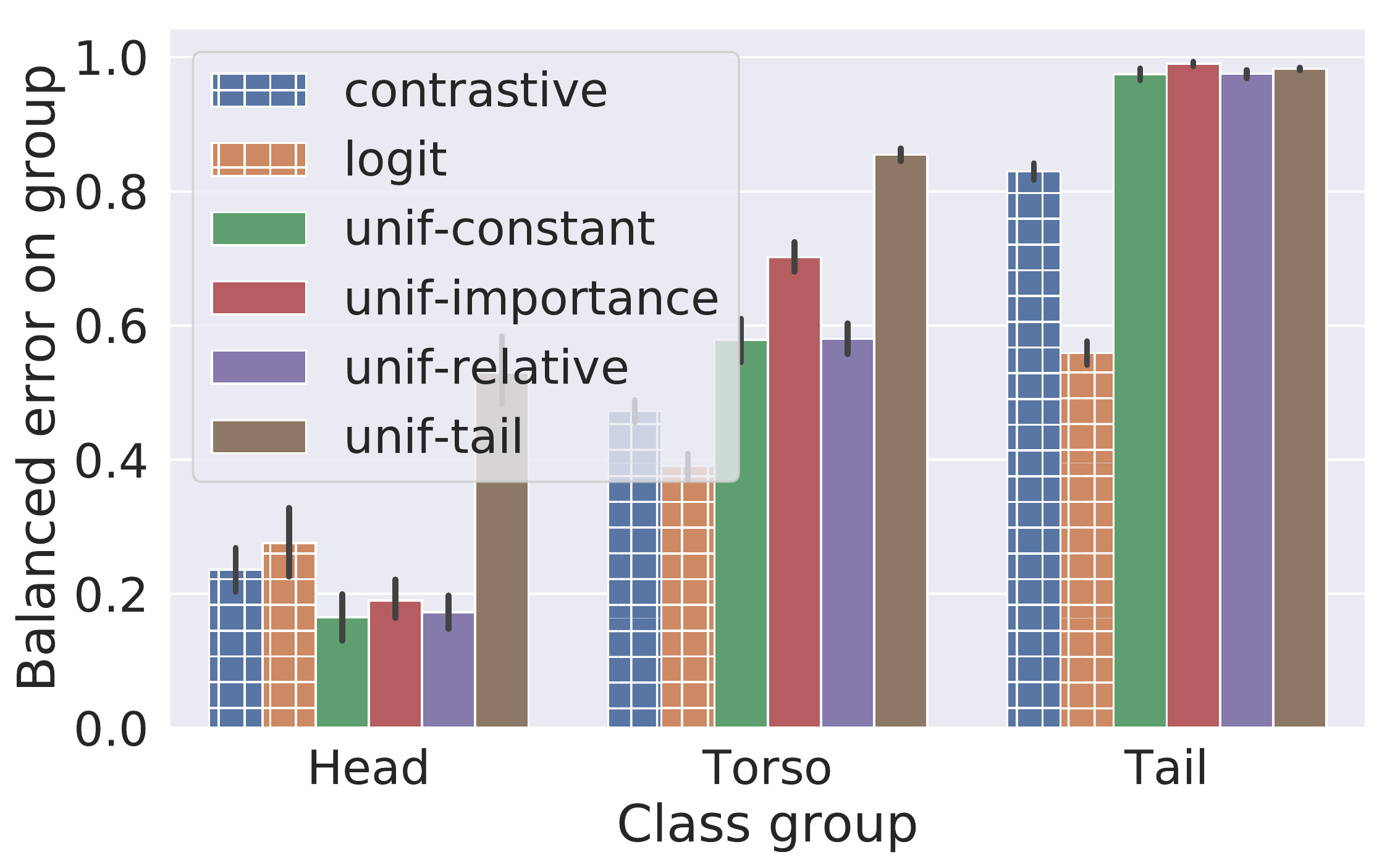}
    }
    }

    \caption{Results on head, torso and tail labels on long-tail learning benchmarks, using the contrastive loss.}
    \label{fig:lt_benchmarks_contrastive}
\end{figure*}

\subsection{Balanced error plots}

Figures~\ref{fig:lt_benchmarks_ber} and~\ref{fig:lt_benchmarks_ber_contrastiv_e_unif}
present the balanced errors of the various choices of sampling and weighting schemes,
for the softmax cross-entropy and contrastive loss respectively.
We see that the gains of within-batch sampling with constant weighting are such that it can improve over the standard loss using \emph{all} the labels.
In general, performance is superior using the softmax cross-entropy versus contrastive loss;
this is in keeping with the former's extensive use as a foundation in long-tail problems.

\begin{figure*}[!t]
    \centering

    \resizebox{\linewidth}{!}{
    \subcaptionbox{CIFAR-100-LT (Step).}{ \includegraphics[scale=0.25,valign=t]{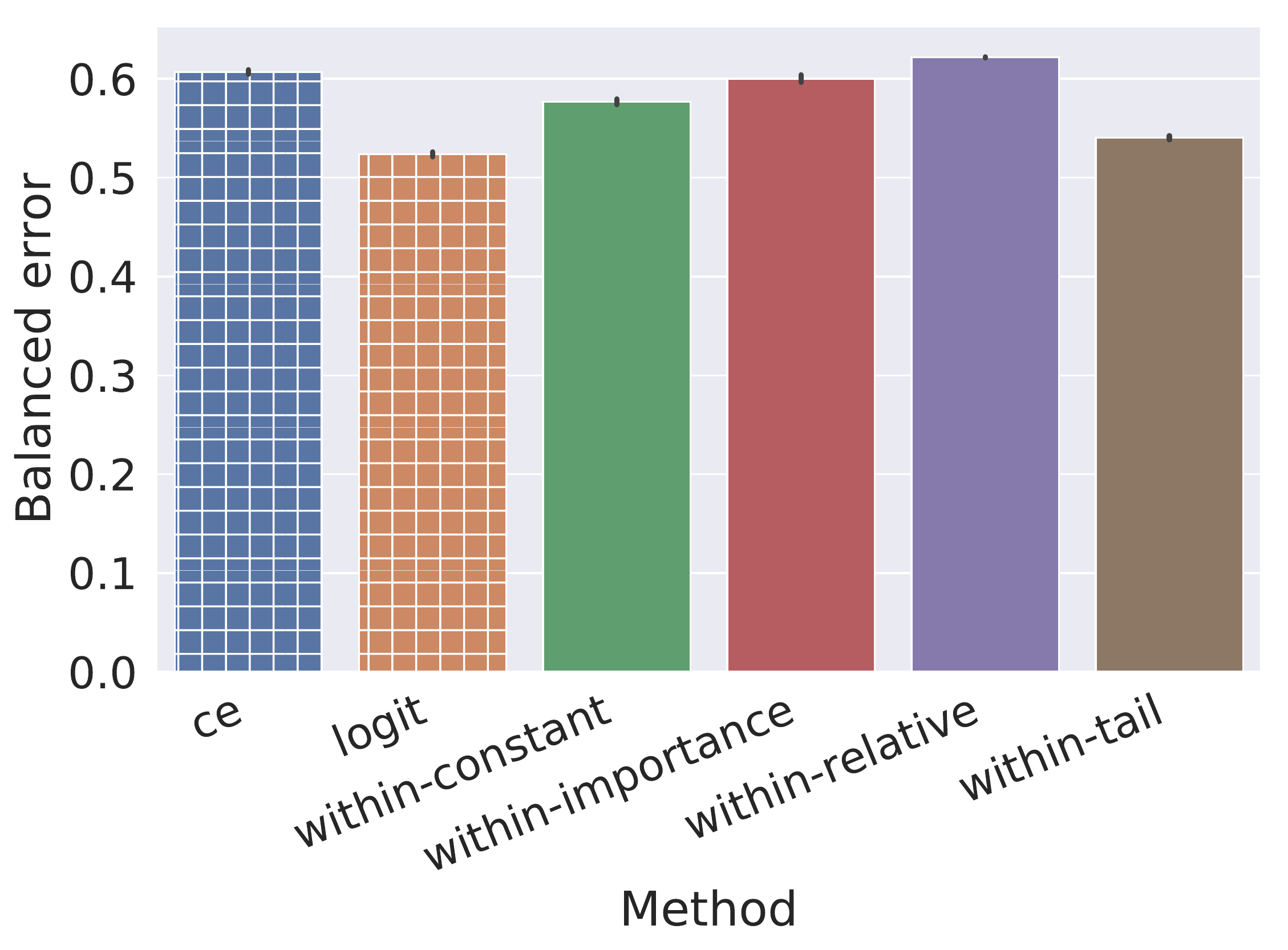}}
    \qquad
    \subcaptionbox{CIFAR-100-LT (Exp).}{
    \includegraphics[scale=0.25,valign=t]{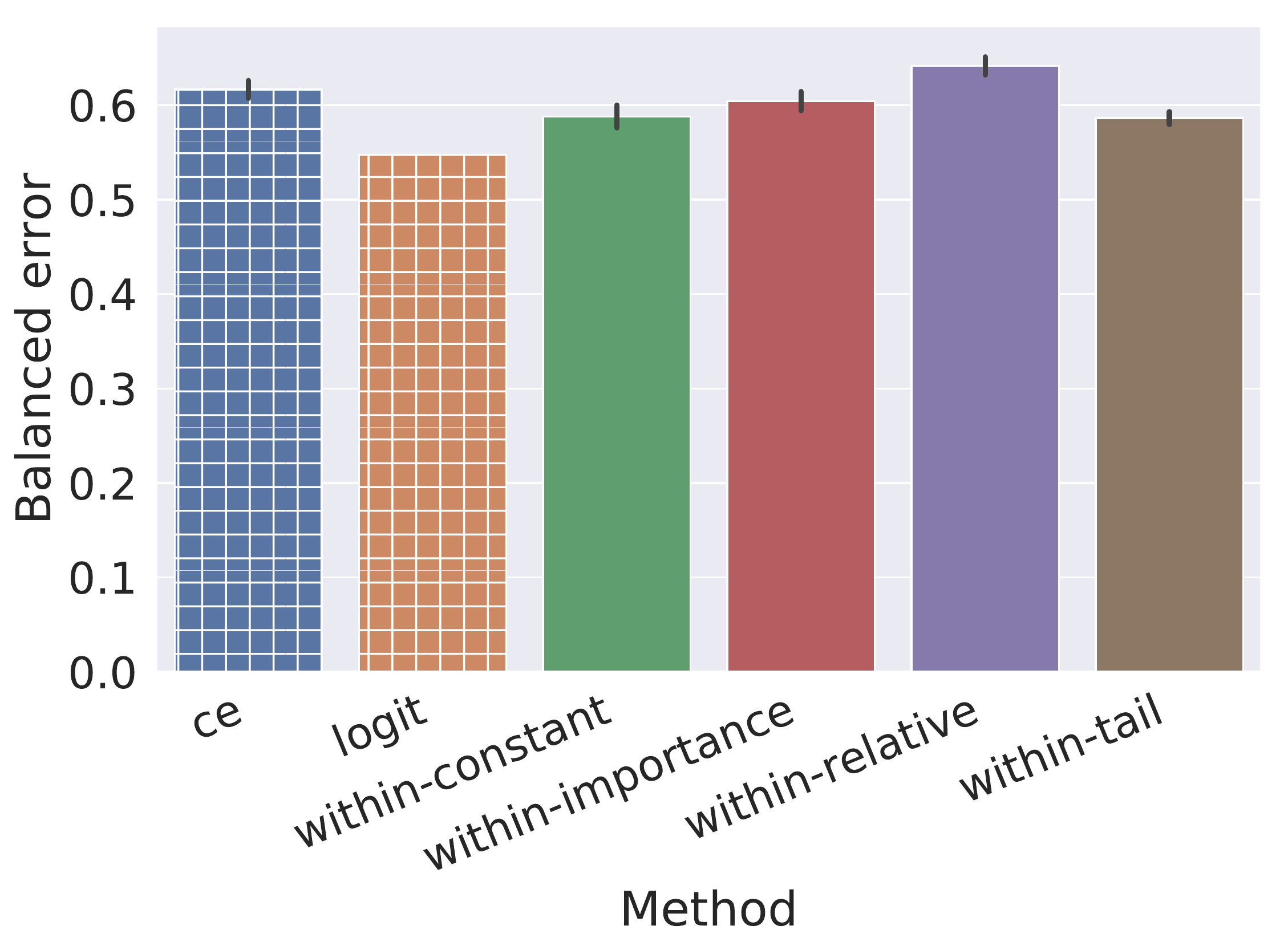}}
    \qquad
    \subcaptionbox{ImageNet-LT.}{
    \includegraphics[scale=0.25,valign=t]{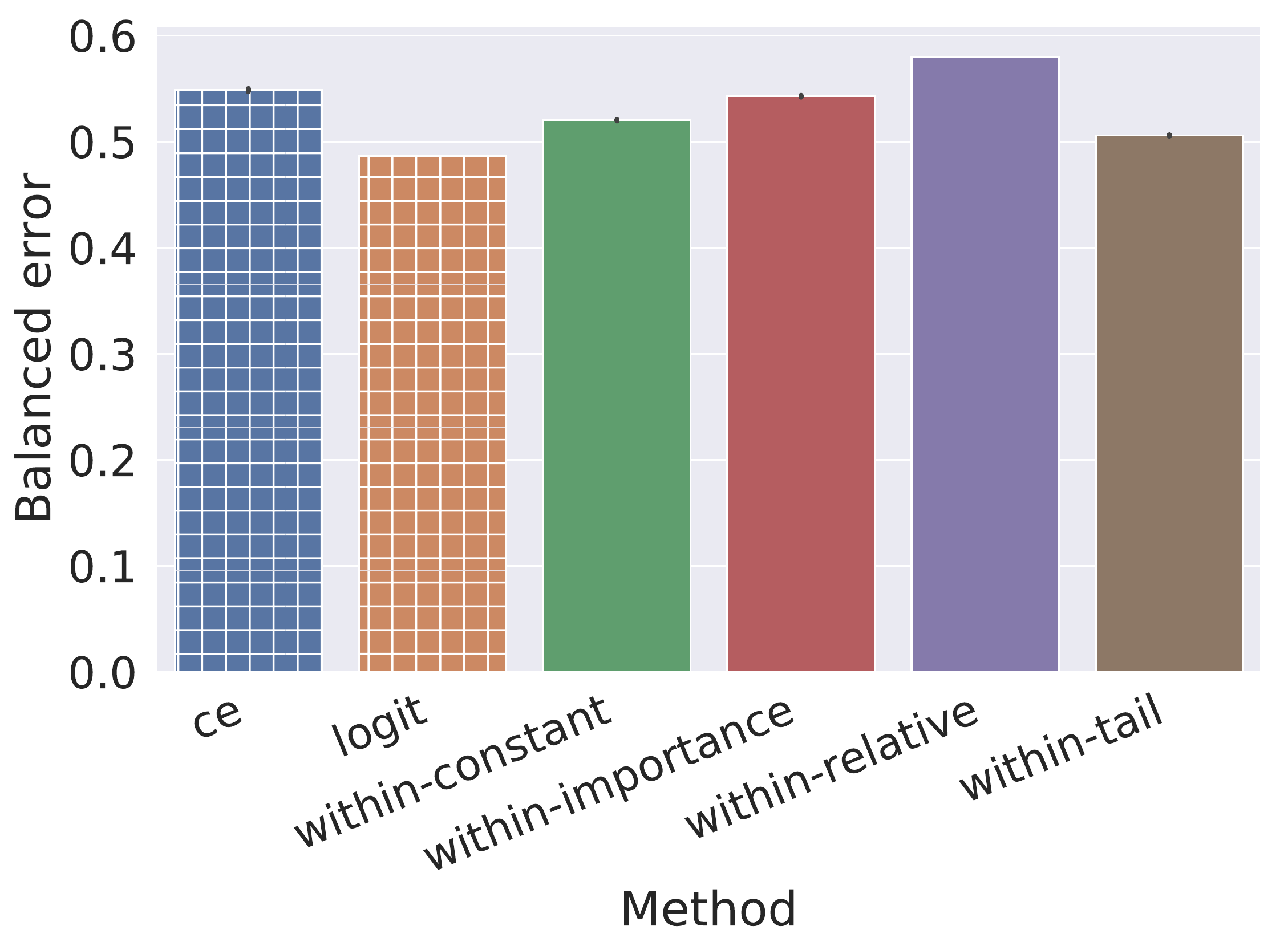}}
    }

    \resizebox{\linewidth}{!}{
    \subcaptionbox{CIFAR-100-LT (Step).}{ \includegraphics[scale=0.25,valign=t]{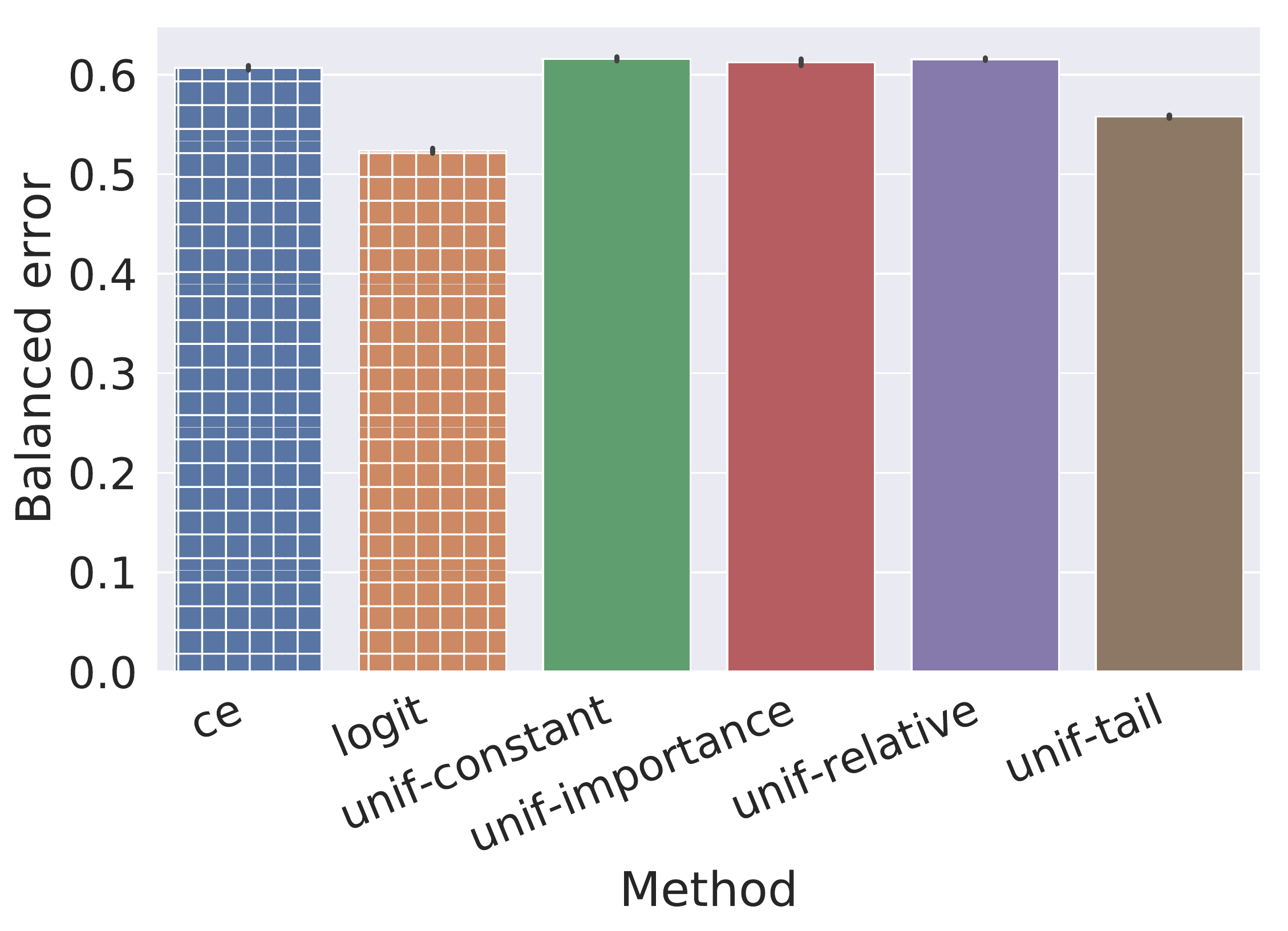}}
    \qquad
    \subcaptionbox{CIFAR-100-LT (Exp).}{
    \includegraphics[scale=0.25,valign=t]{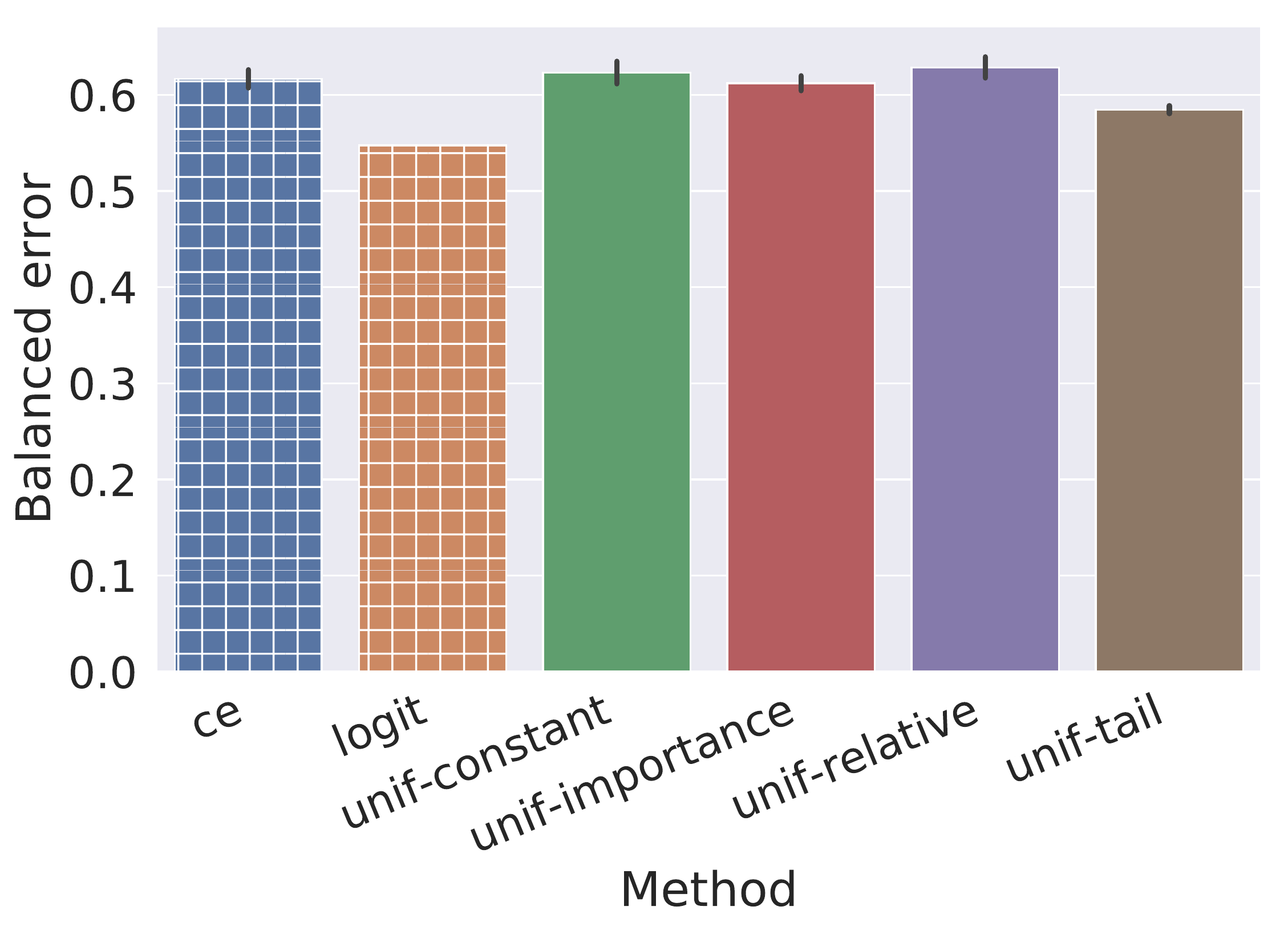}}
    \qquad
    \subcaptionbox{ImageNet-LT.}{
    \includegraphics[scale=0.25,valign=t]{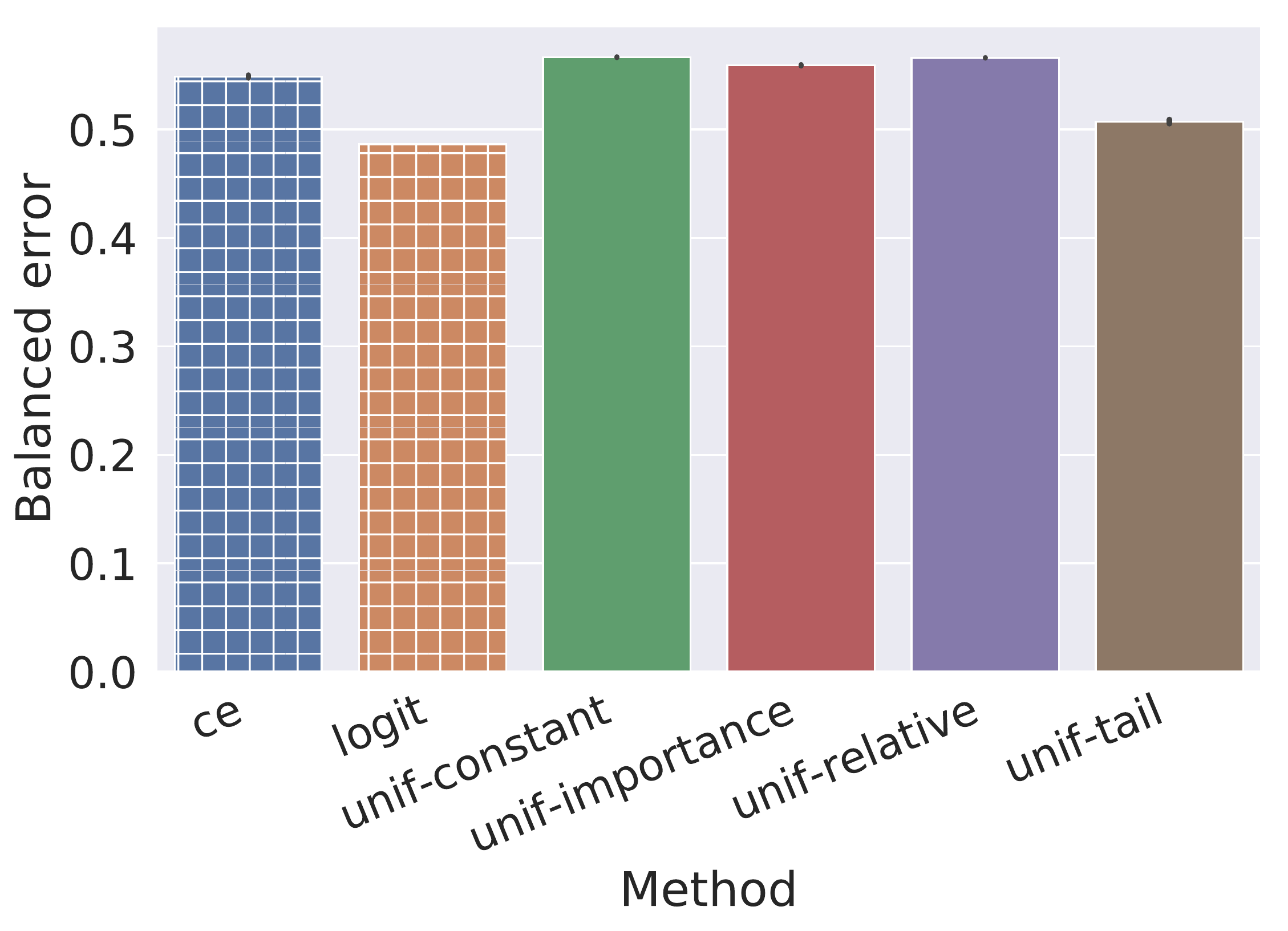}}
    }

    \caption{Balanced error
    on long-tail learning benchmarks
    using the softmax cross-entropy.
    We present results for within-batch (${\tt within}$) and uniform
    (${\tt unif}$)
    negative sampling ,
    using the constant weight (${\tt const}$), importance weighting (${\tt importance}$), and relative weighting (${\tt relative}$) schemes from Table~\ref{tbl:summary-decoupled}.}
    \label{fig:lt_benchmarks_ber}
\end{figure*}

\begin{figure*}[!pt]
    \centering

    \resizebox{\linewidth}{!}{
    \subcaptionbox{CIFAR-100-LT Step.}{ \includegraphics[scale=0.25,valign=t]{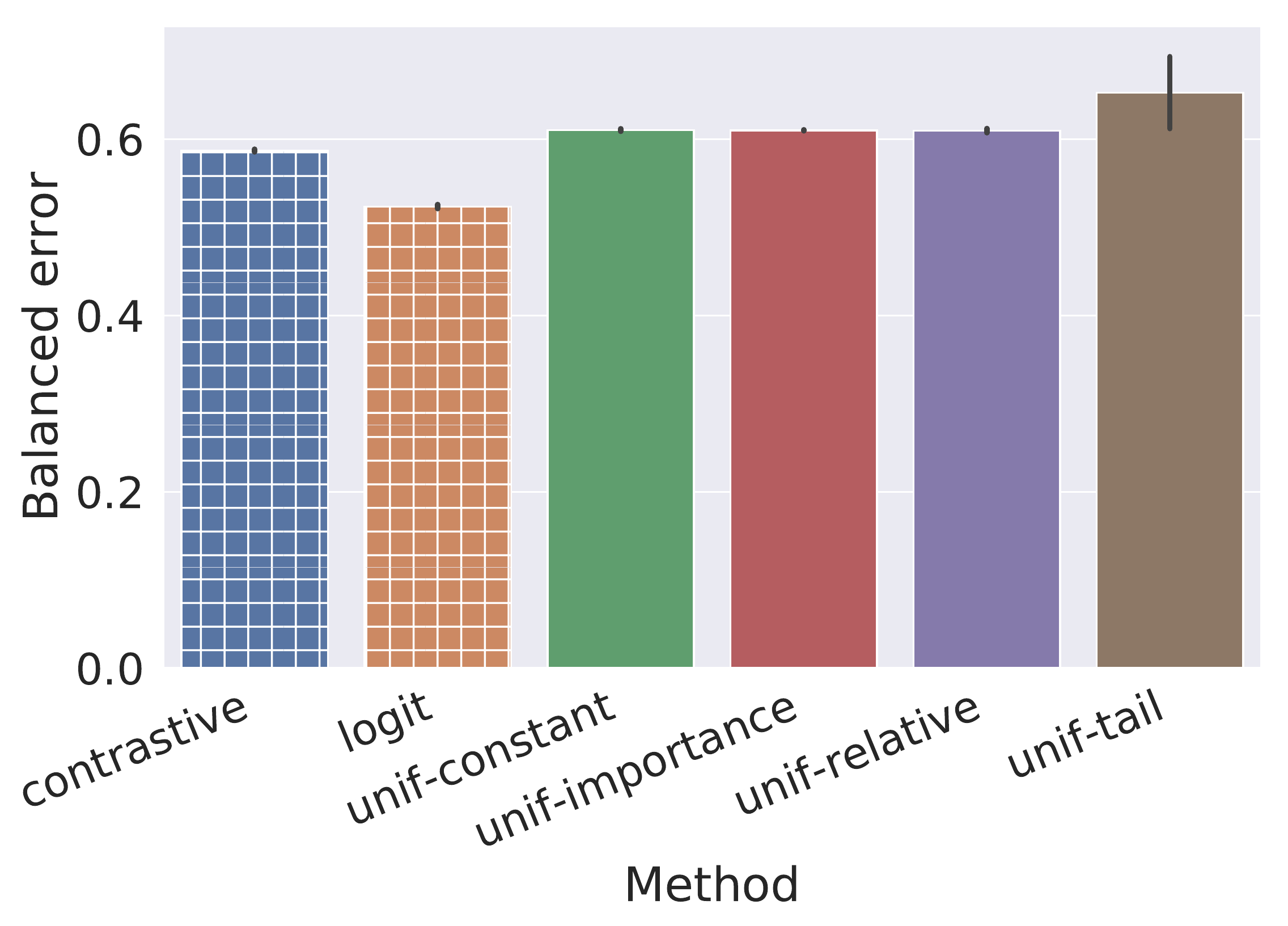}}
    \qquad
    \subcaptionbox{CIFAR-100-LT Exp.}{
    \includegraphics[scale=0.25,valign=t]{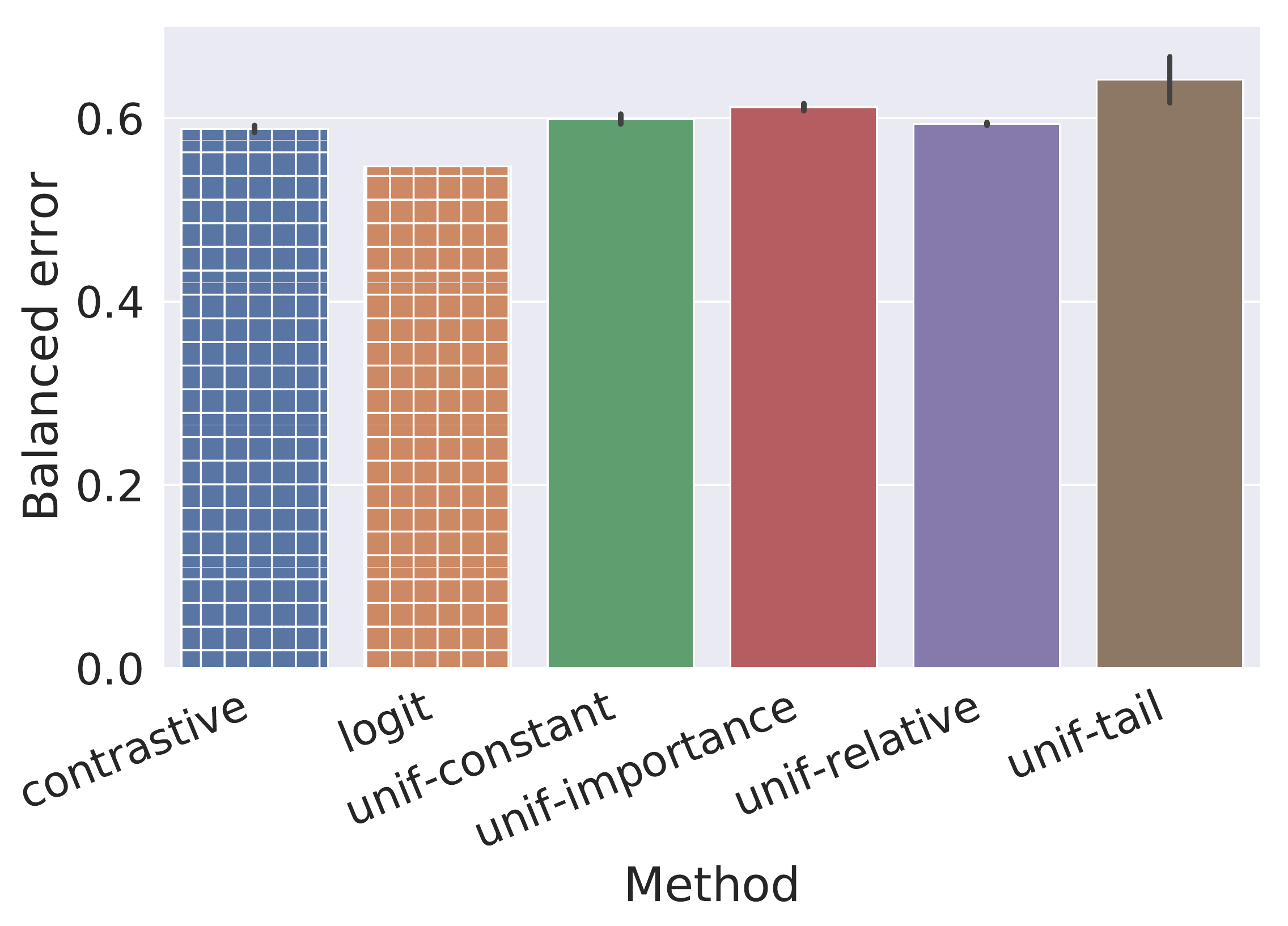}}
    \qquad
    \subcaptionbox{ImageNet-LT.}{
    \includegraphics[scale=0.25,valign=t]{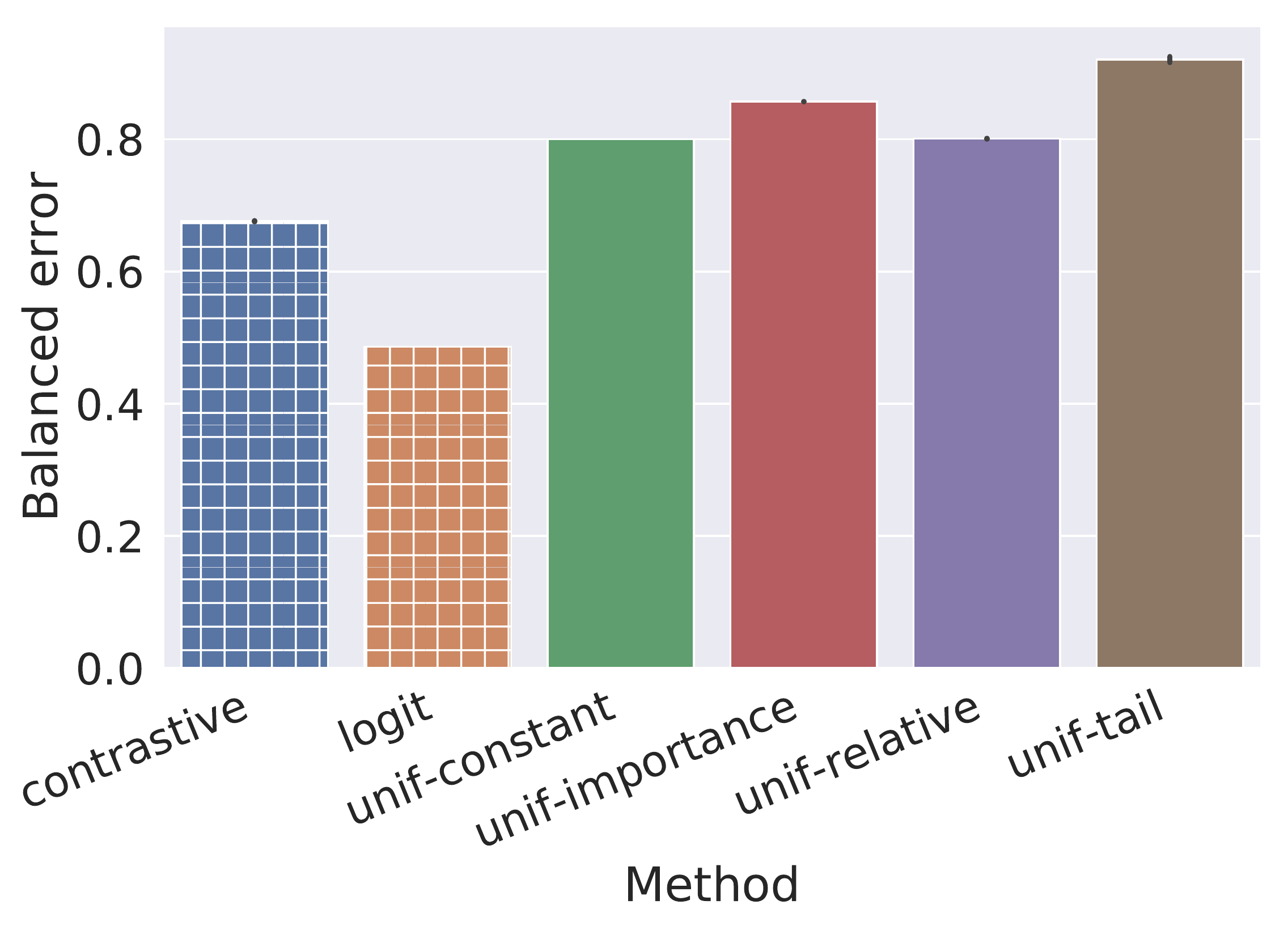}}
    }

    \resizebox{\linewidth}{!}{
    \subcaptionbox{CIFAR-100-LT Step.}{ \includegraphics[scale=0.25,valign=t]{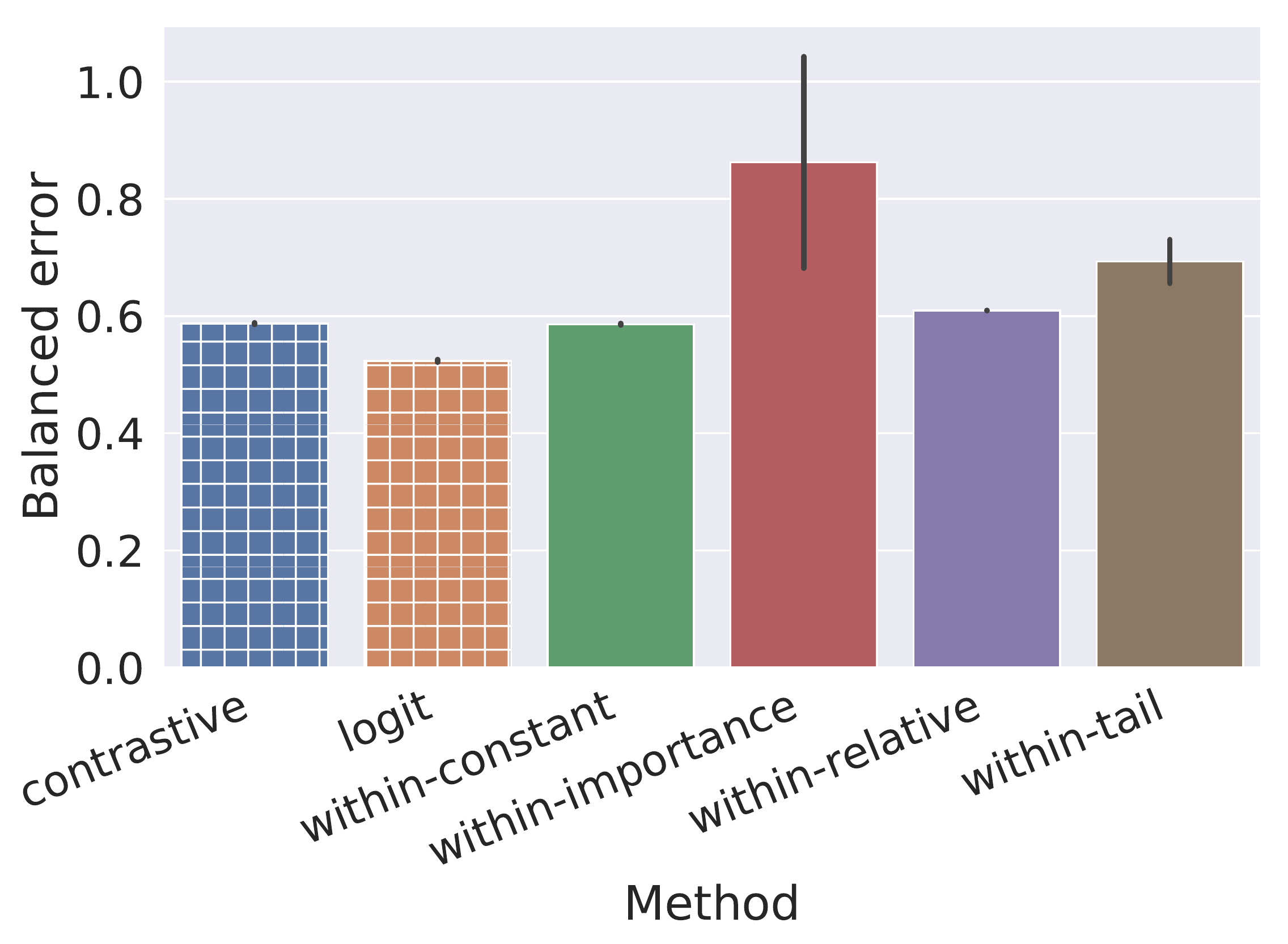}}
    \qquad
    \subcaptionbox{CIFAR-100-LT Exp.}{
    \includegraphics[scale=0.25,valign=t]{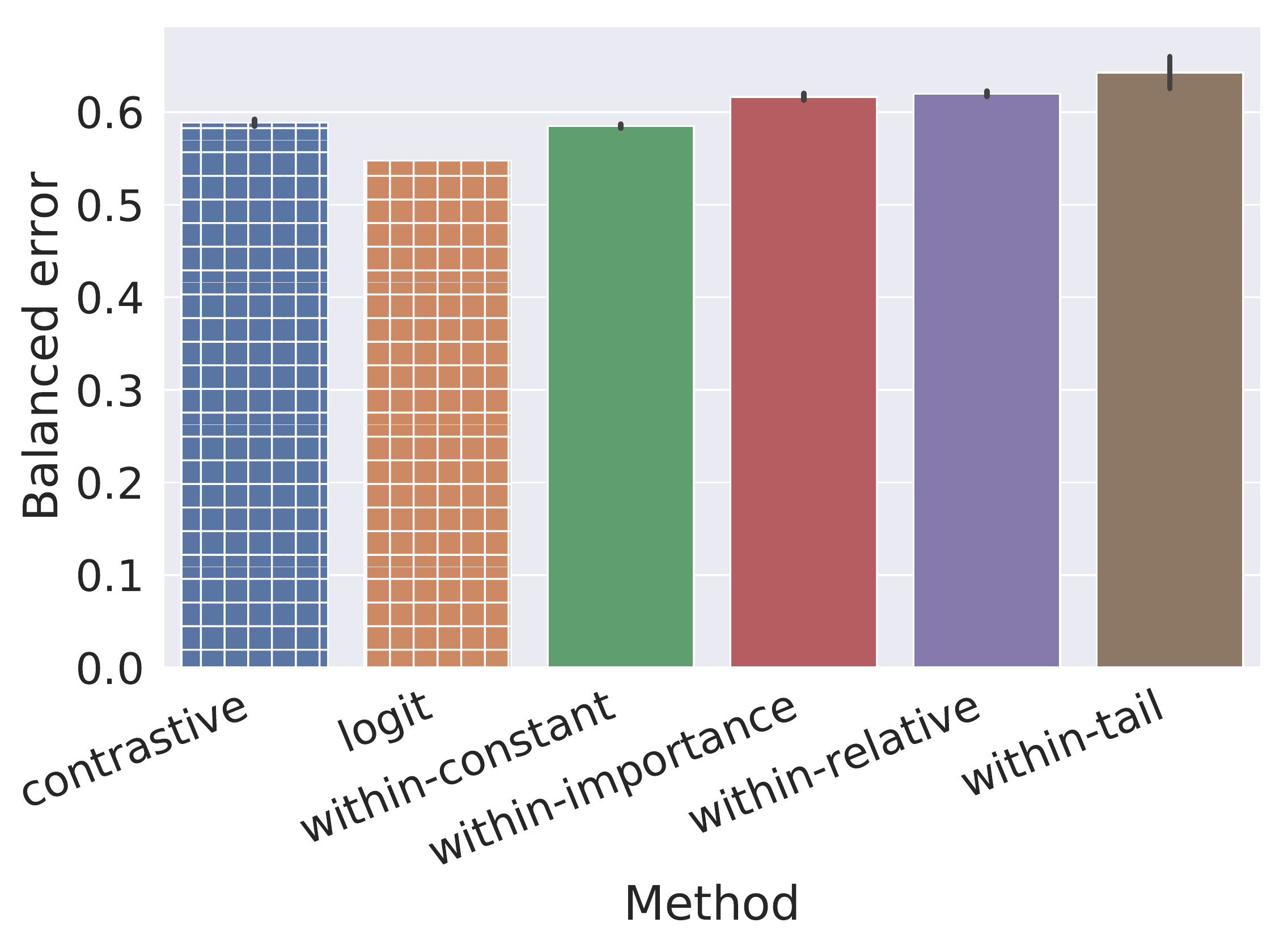}}
    \qquad
    \subcaptionbox{ImageNet-LT.}{
    \includegraphics[scale=0.25,valign=t]{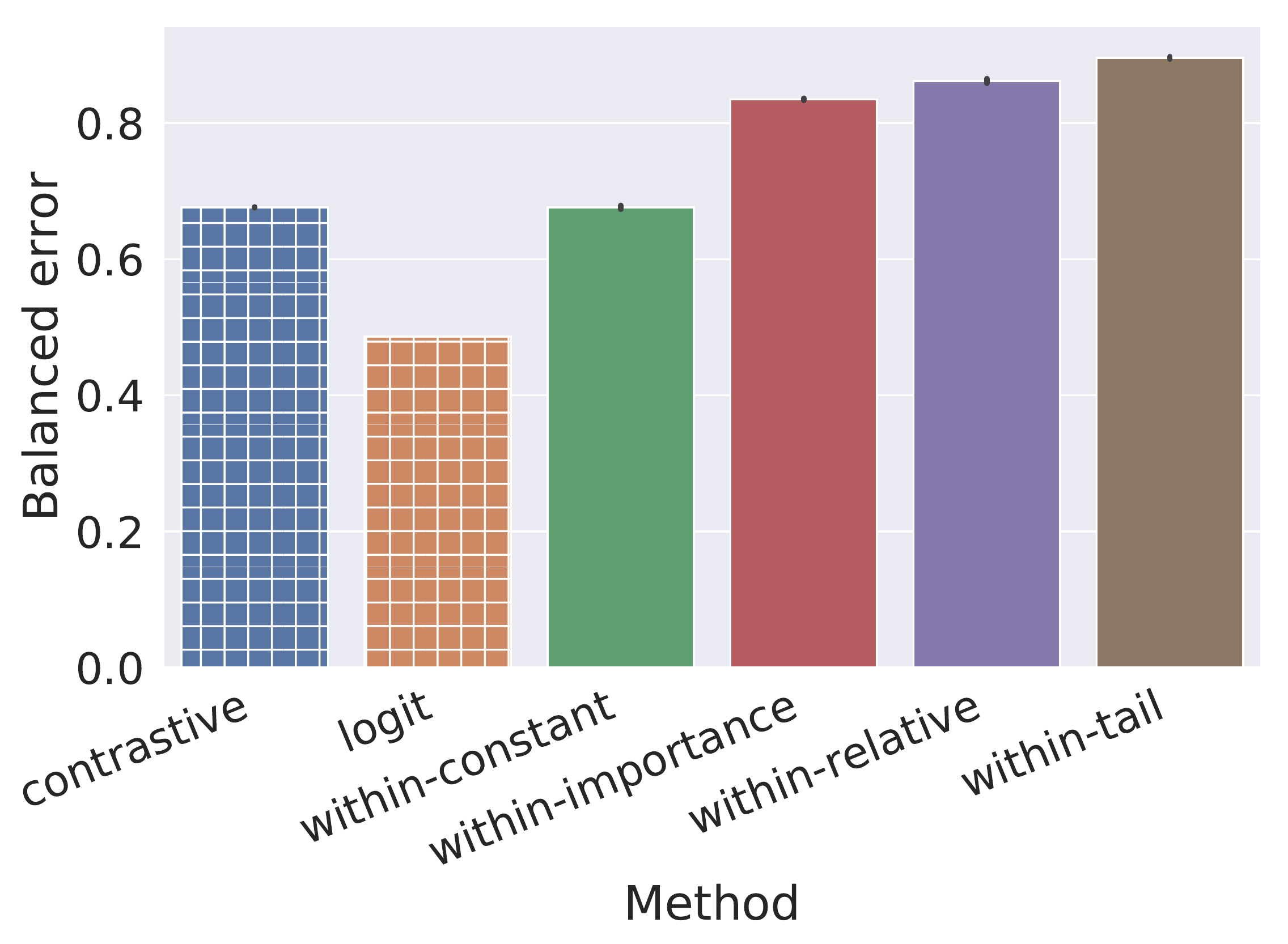}}
    }

    \caption{Balanced error
    on long-tail learning benchmarks
    using the contrastive loss.
    We present results for within-batch (${\tt within}$) and uniform
    (${\tt unif}$)
    negative sampling ,
    using the constant weight (${\tt const}$), importance weighting (${\tt importance}$), and relative weighting (${\tt relative}$) schemes from Table~\ref{tbl:summary-decoupled}.}
    \label{fig:lt_benchmarks_ber_contrastiv_e_unif}
\end{figure*}

\subsection{Results with varying number of sampled negatives}

We present results where the number of sampled negatives varies from $\{ 32, 64, 128, 256 \}$ on ImageNet-LT, using the softmax cross-entropy.
Figure~\ref{fig:lt_benchmarks_nneg} shows that with fewer sampled negatives, performance tends to slightly degrade, as expected.
However, even with a modest number of negatives, the general trends seen in the body are reflected.

\begin{figure*}[!hpt]
    \centering

    \resizebox{\linewidth}{!}{
    \subcaptionbox{ImageNet-LT $m = 32$.}{
    \includegraphics[scale=0.25,valign=t]{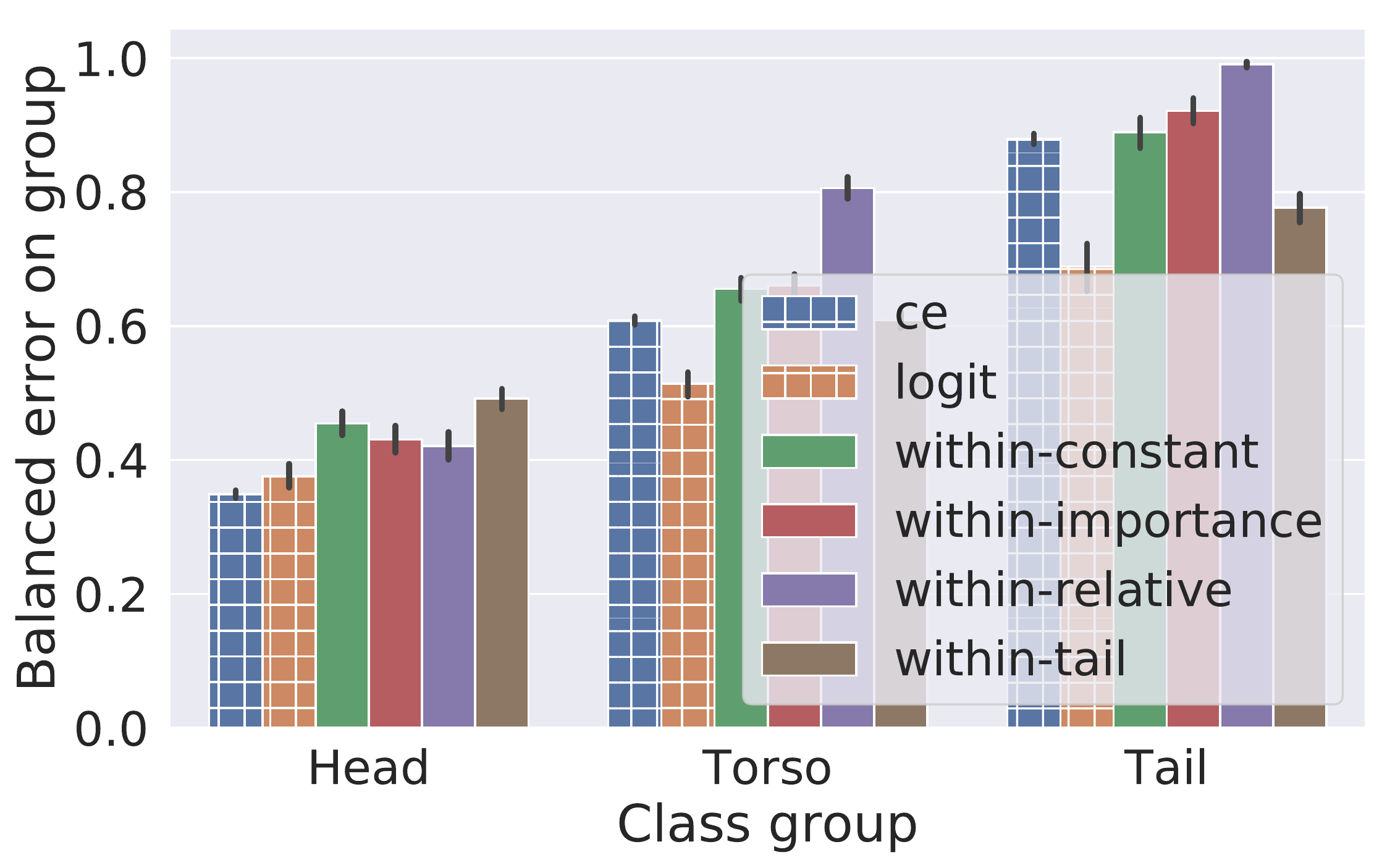}
    }
    \qquad
    \subcaptionbox{ImageNet-LT $m = 64$.}{
    \includegraphics[scale=0.25,valign=t]{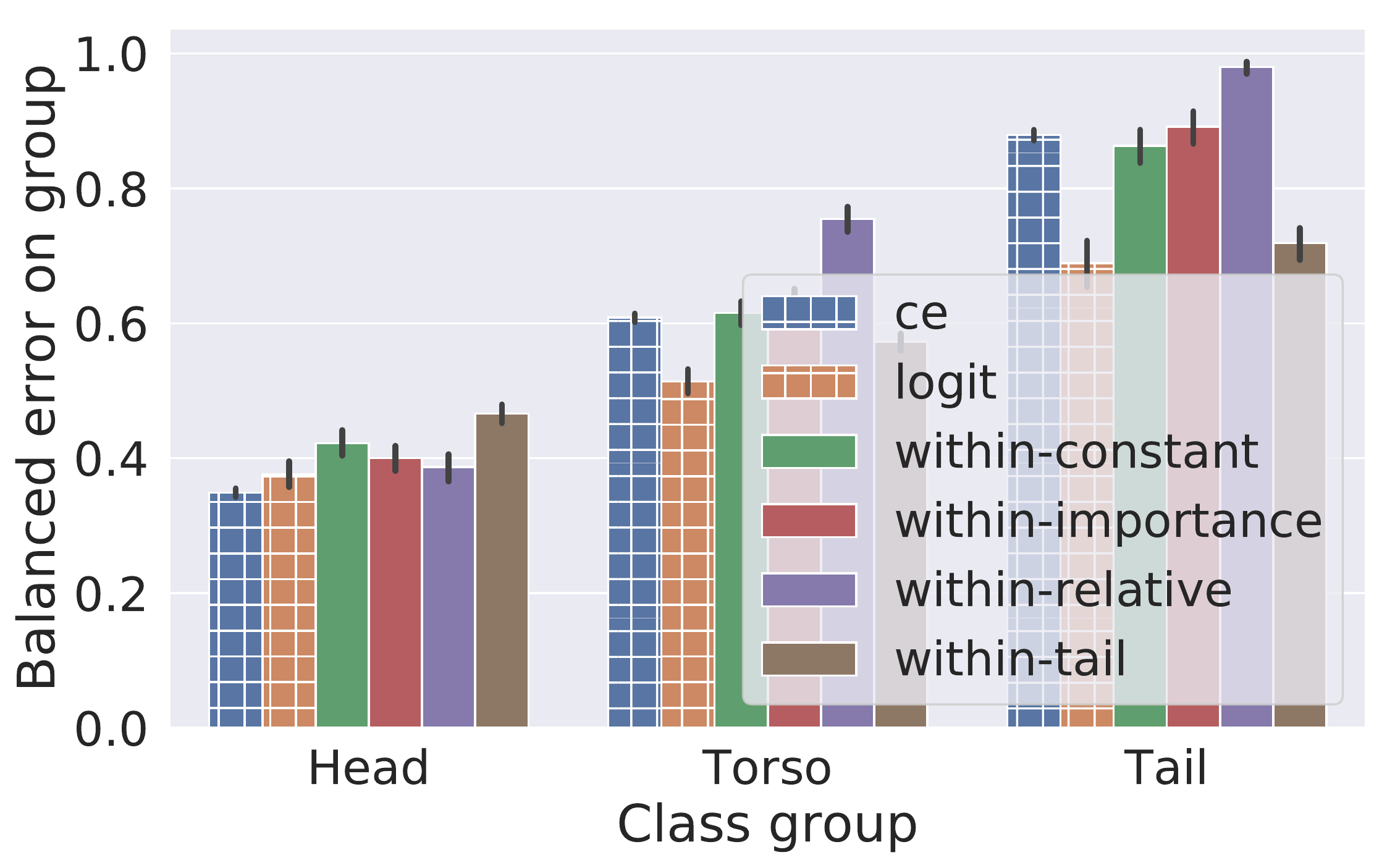}
    }
    \qquad
    \subcaptionbox{ImageNet-LT $m = 128$.}{
    \includegraphics[scale=0.25,valign=t]{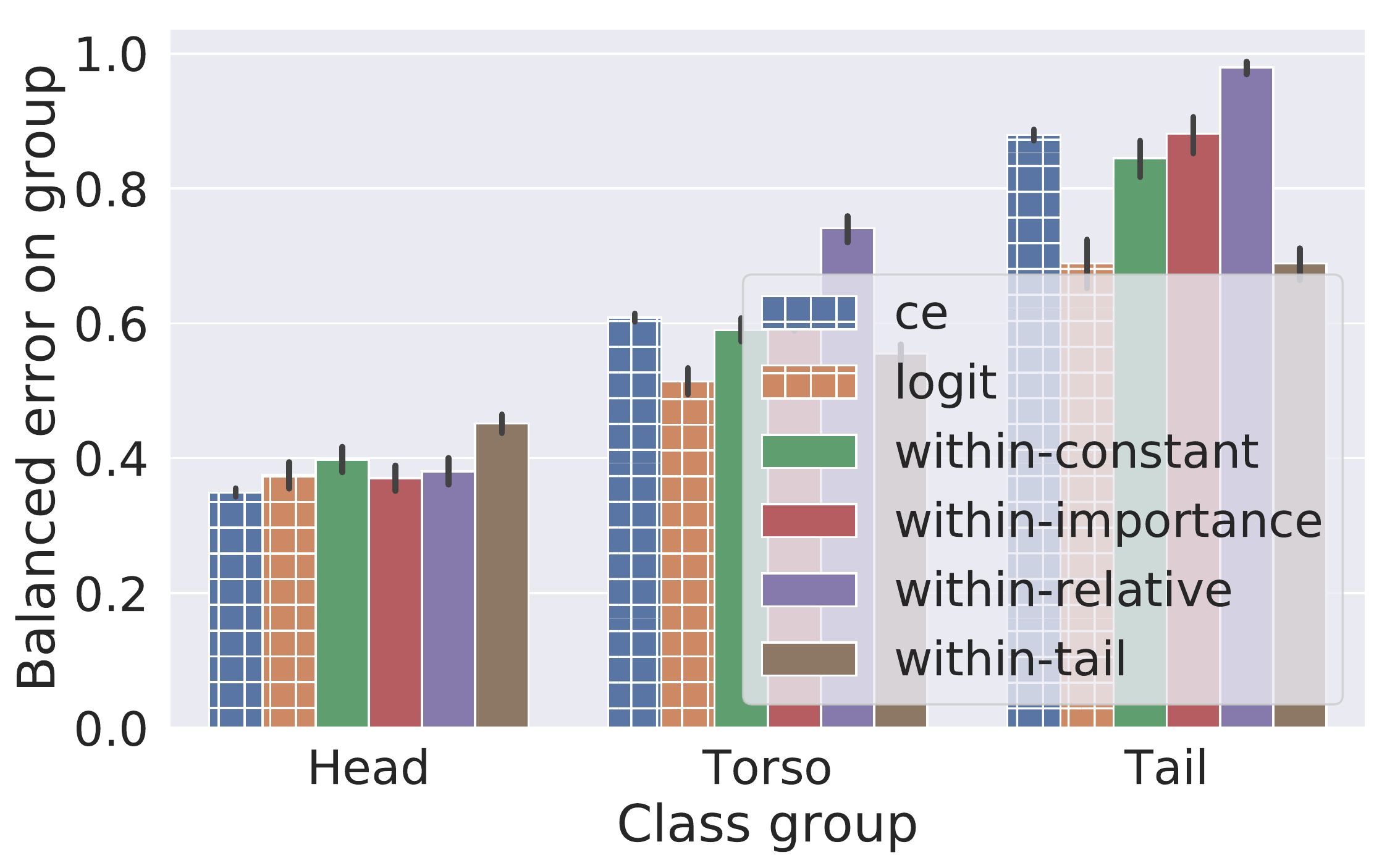}
    }
    \qquad
    \subcaptionbox{ImageNet-LT $m = 256$.}{
    \includegraphics[scale=0.25,valign=t]{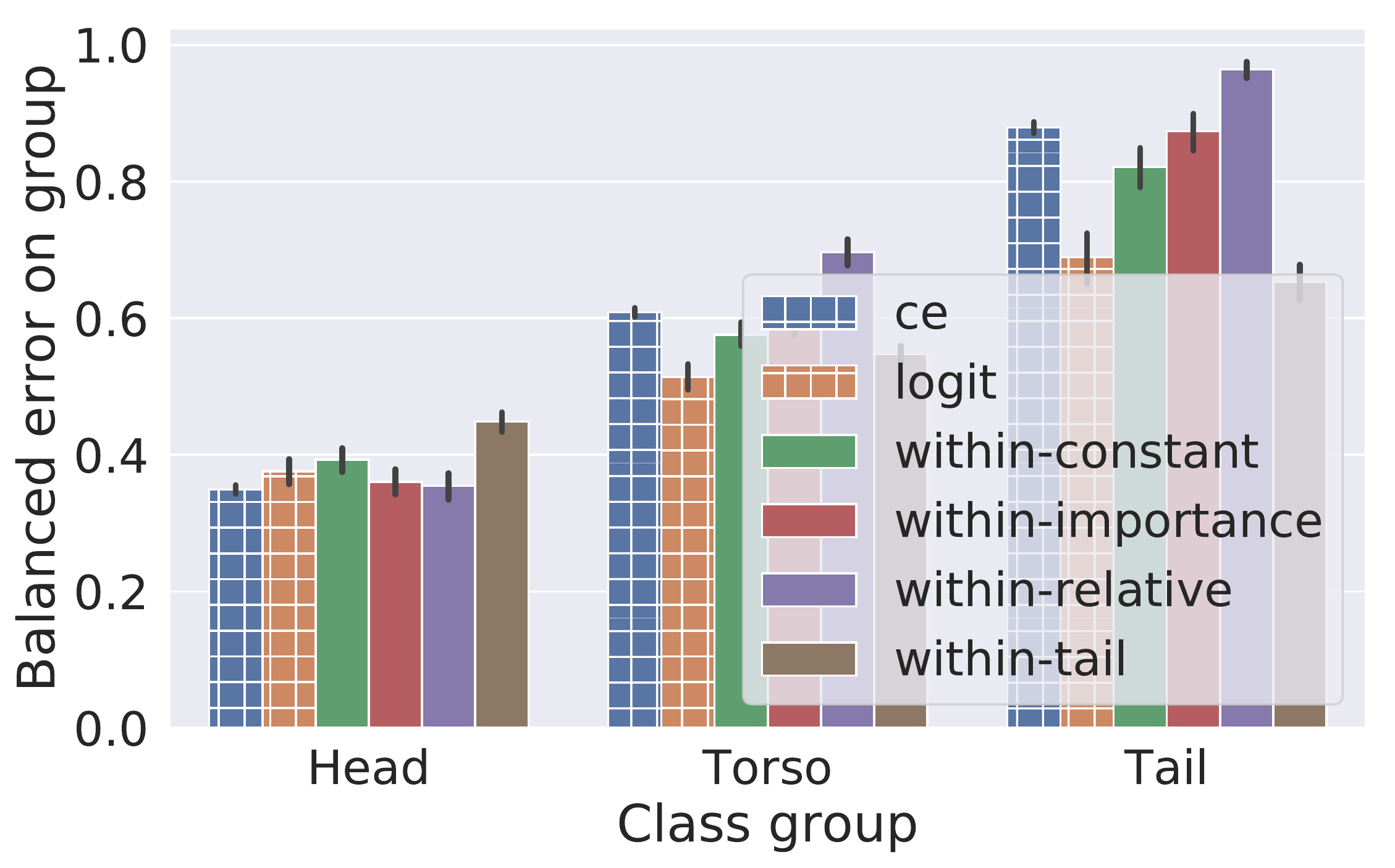}
    }
    }

    \resizebox{\linewidth}{!}{
    \subcaptionbox{ImageNet-LT $m = 32$.}{
    \includegraphics[scale=0.25,valign=t]{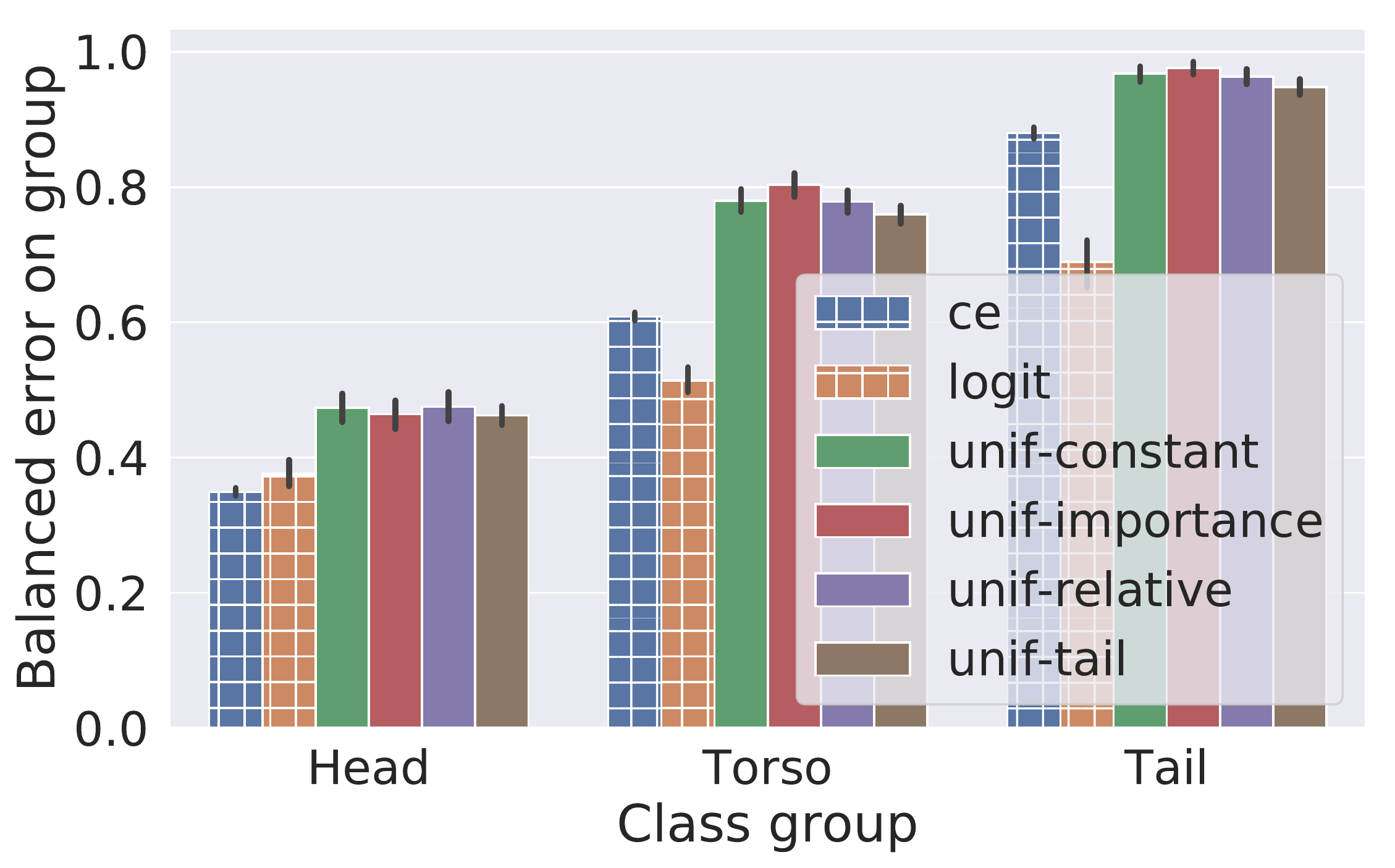}
    }
    \qquad
    \subcaptionbox{ImageNet-LT $m = 64$.}{
    \includegraphics[scale=0.25,valign=t]{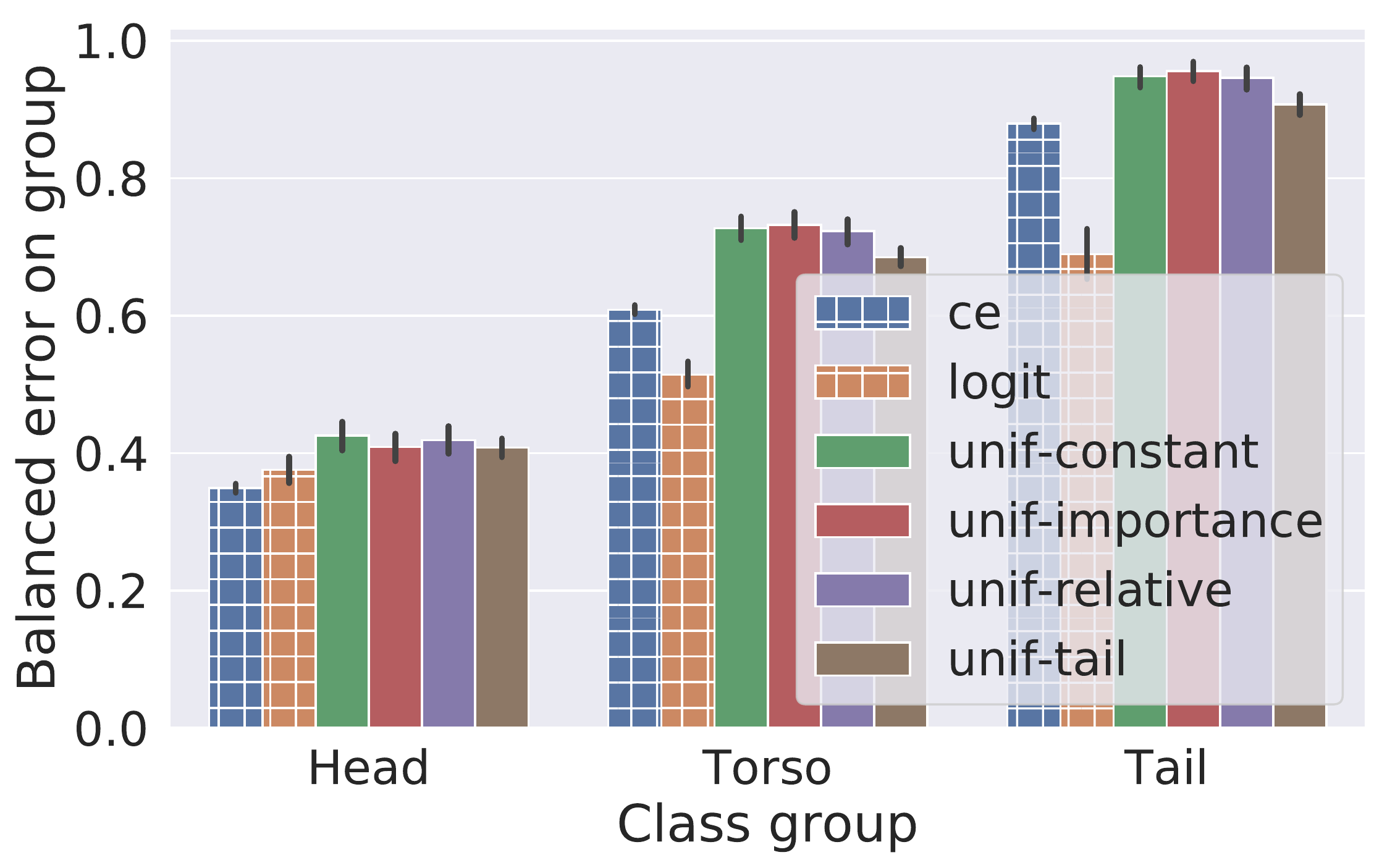}
    }
    \qquad
    \subcaptionbox{ImageNet-LT $m = 128$.}{
    \includegraphics[scale=0.25,valign=t]{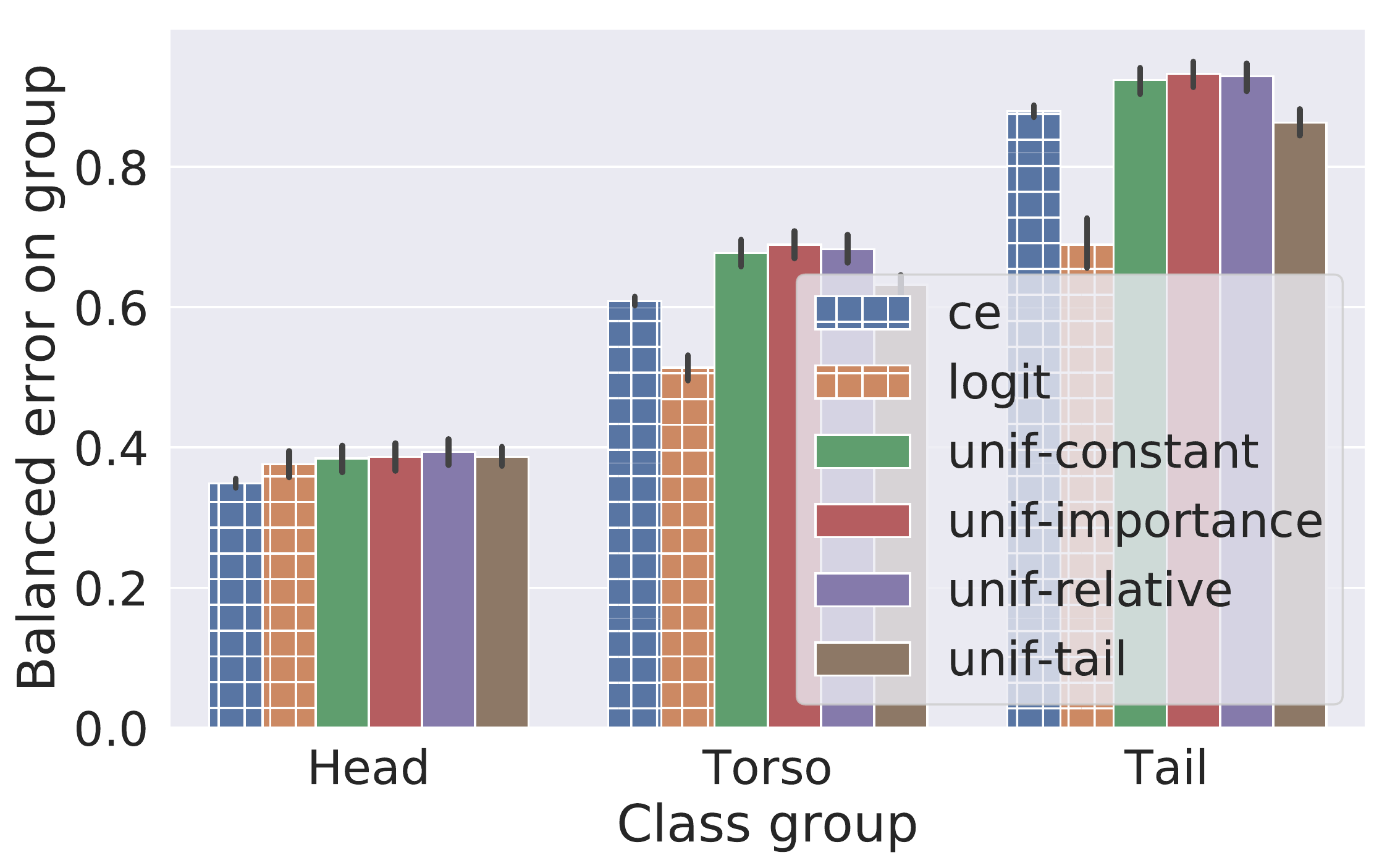}
    }
    \qquad
    \subcaptionbox{ImageNet-LT $m = 256$.}{
    \includegraphics[scale=0.25,valign=t]{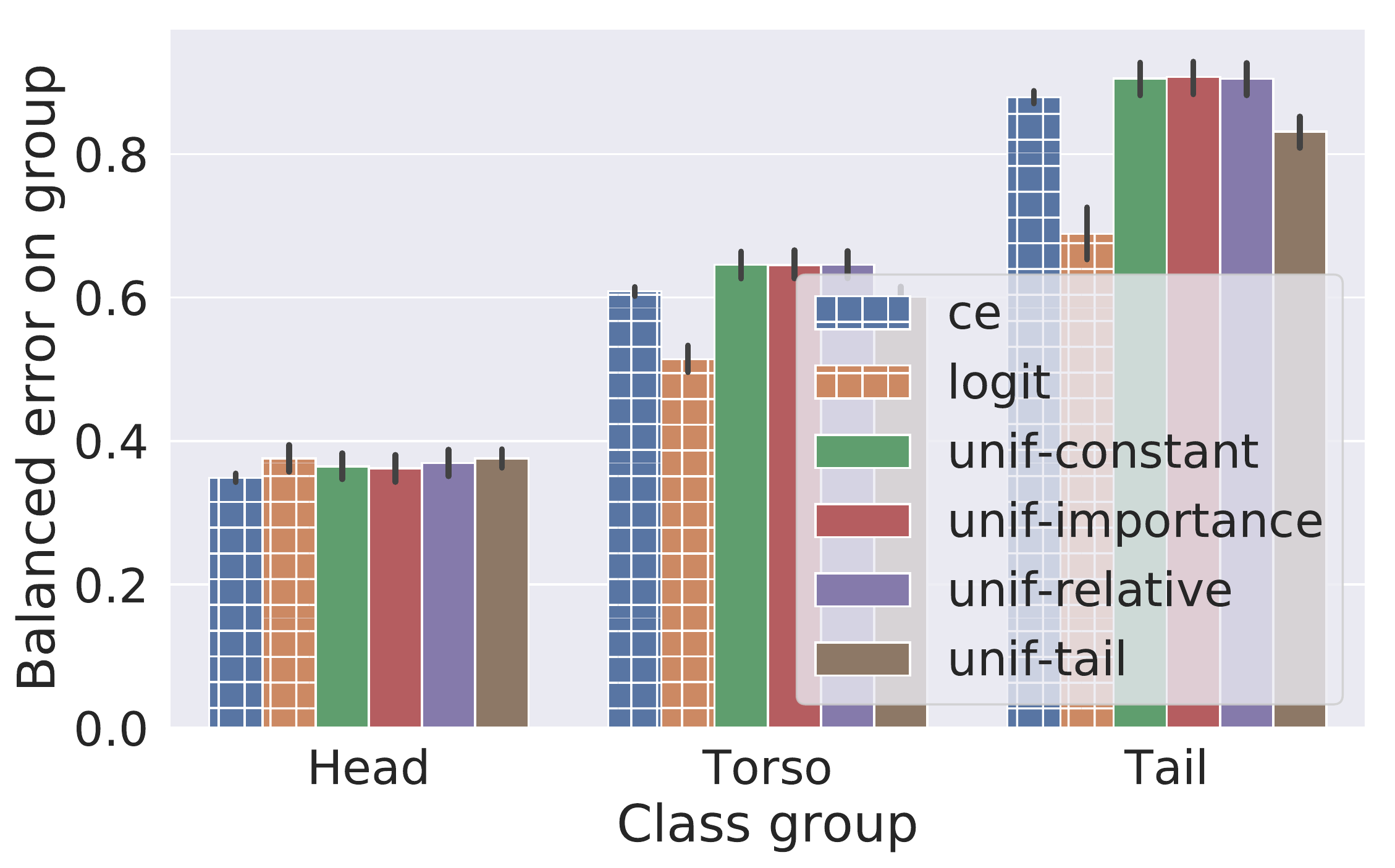}
    }
    }

    \caption{Results on head, torso and tail labels on ImageNet-LT, using varying number of sampled negatives.}
    \label{fig:lt_benchmarks_nneg}
\end{figure*}

\subsection{Results on iNaturalist 2018}
See Figure~\ref{fig:inat18_ce}.
\begin{figure*}[!pt]
    \centering

    \resizebox{\linewidth}{!}{
    \subcaptionbox{CE, Uniform}{
      \includegraphics[scale=0.25,valign=t]{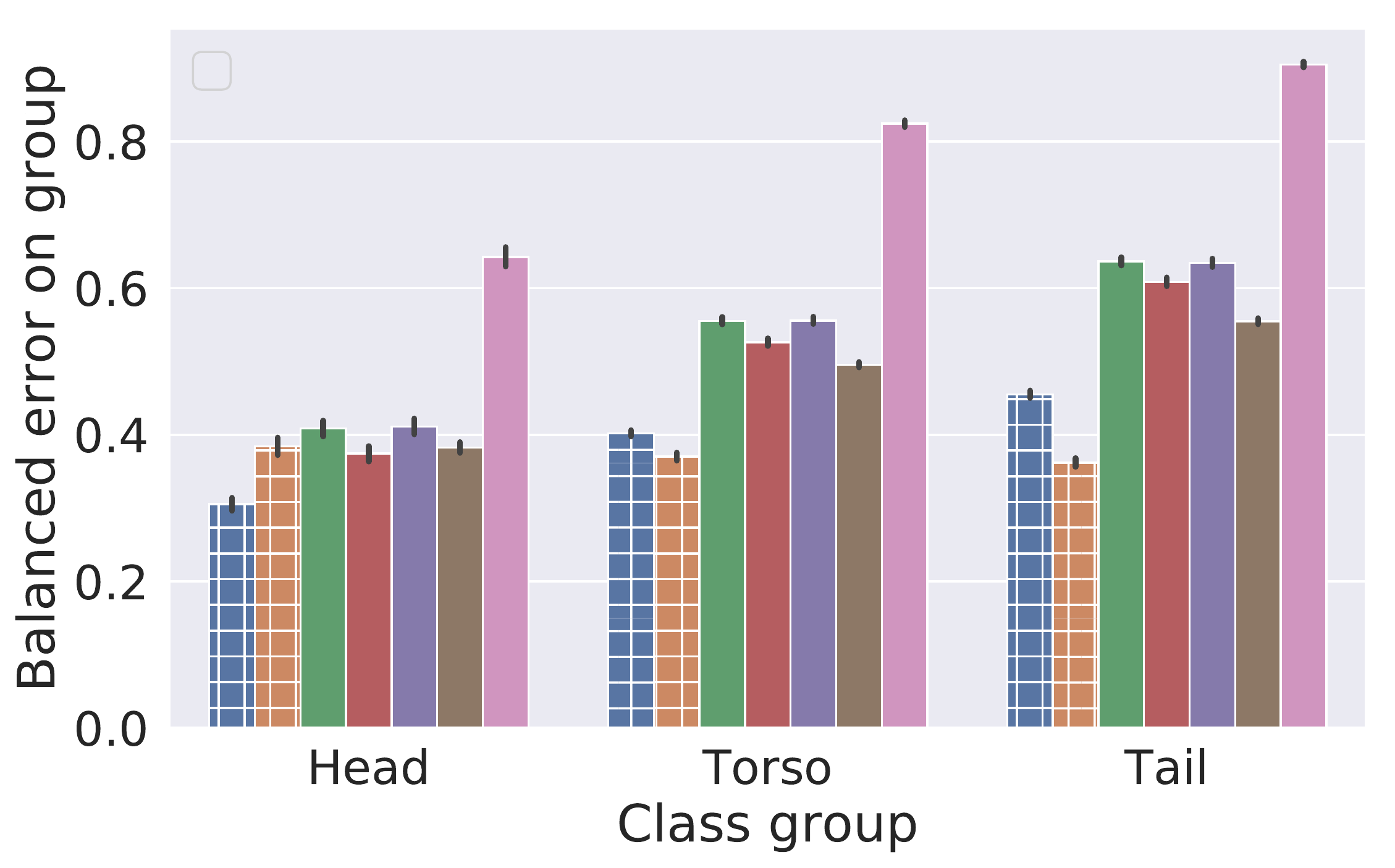}}

    \qquad
    \subcaptionbox{CE, within batch}{
      \includegraphics[scale=0.25,valign=t]{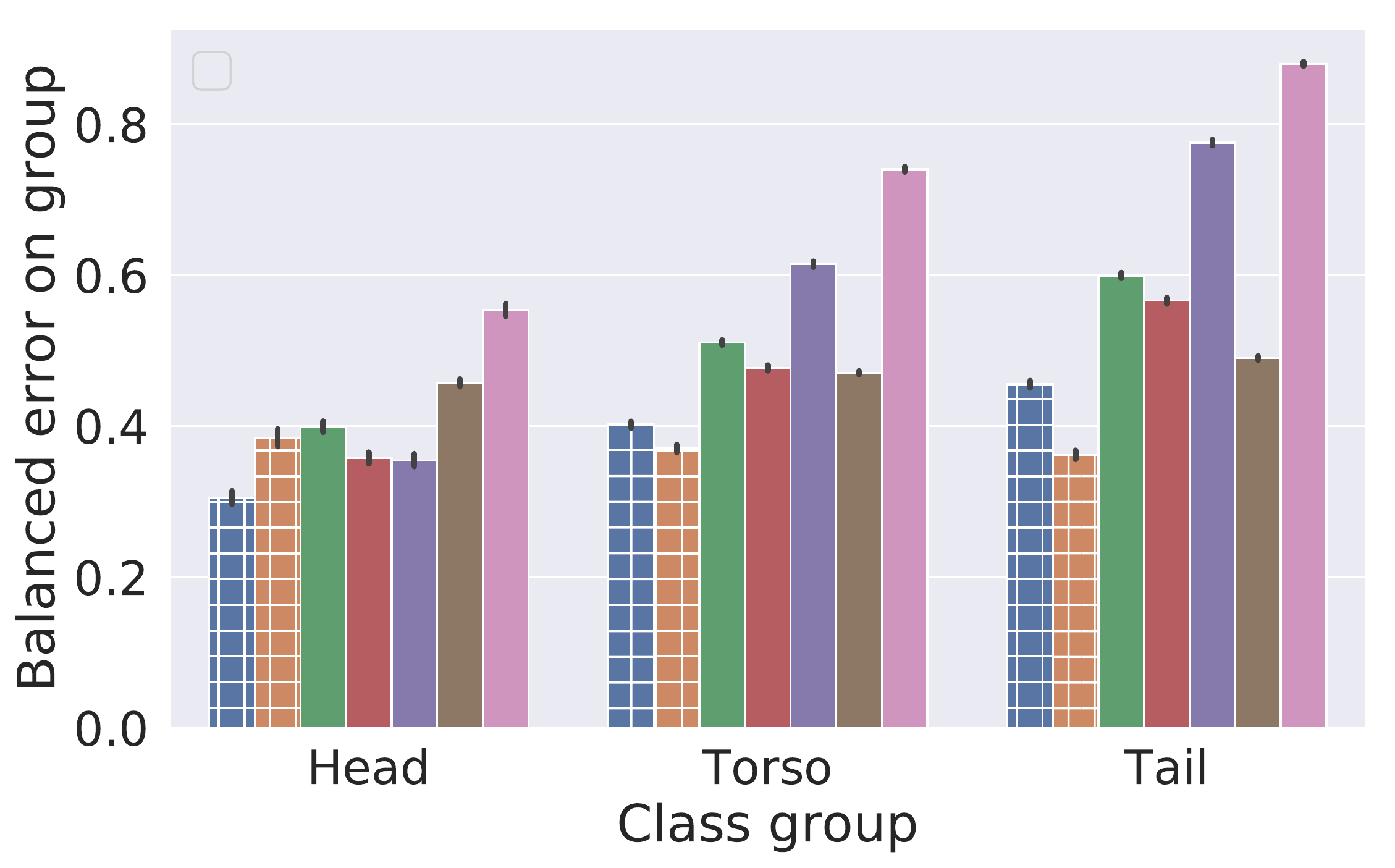}}
    }

    \resizebox{\linewidth}{!}{
    \subcaptionbox{CE. Uniform. Balanced error.}{ \includegraphics[scale=0.25,valign=t]{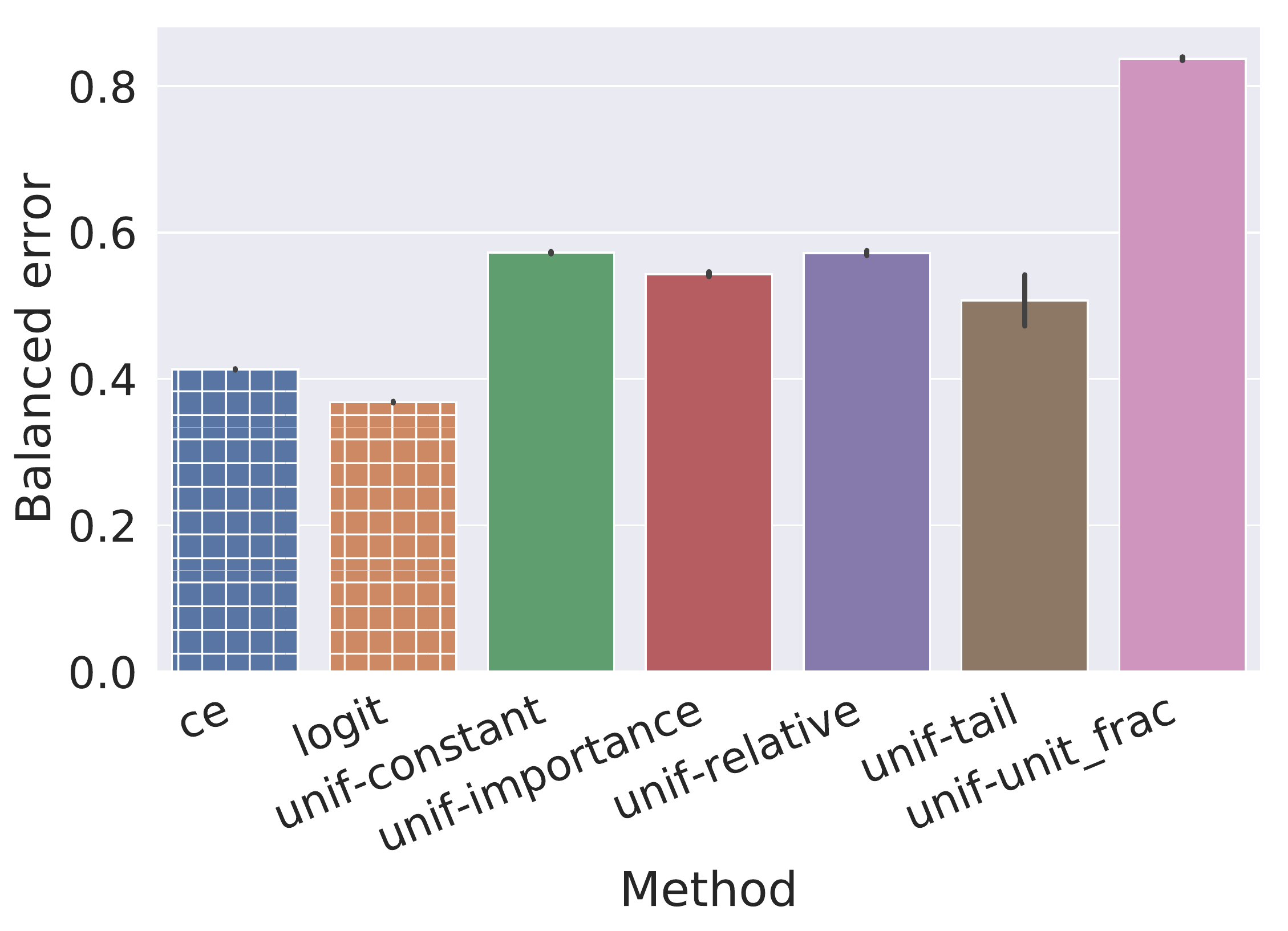}}
    \qquad
    \subcaptionbox{CE, within batch. Balanced error.}{
    \includegraphics[scale=0.25,valign=t]{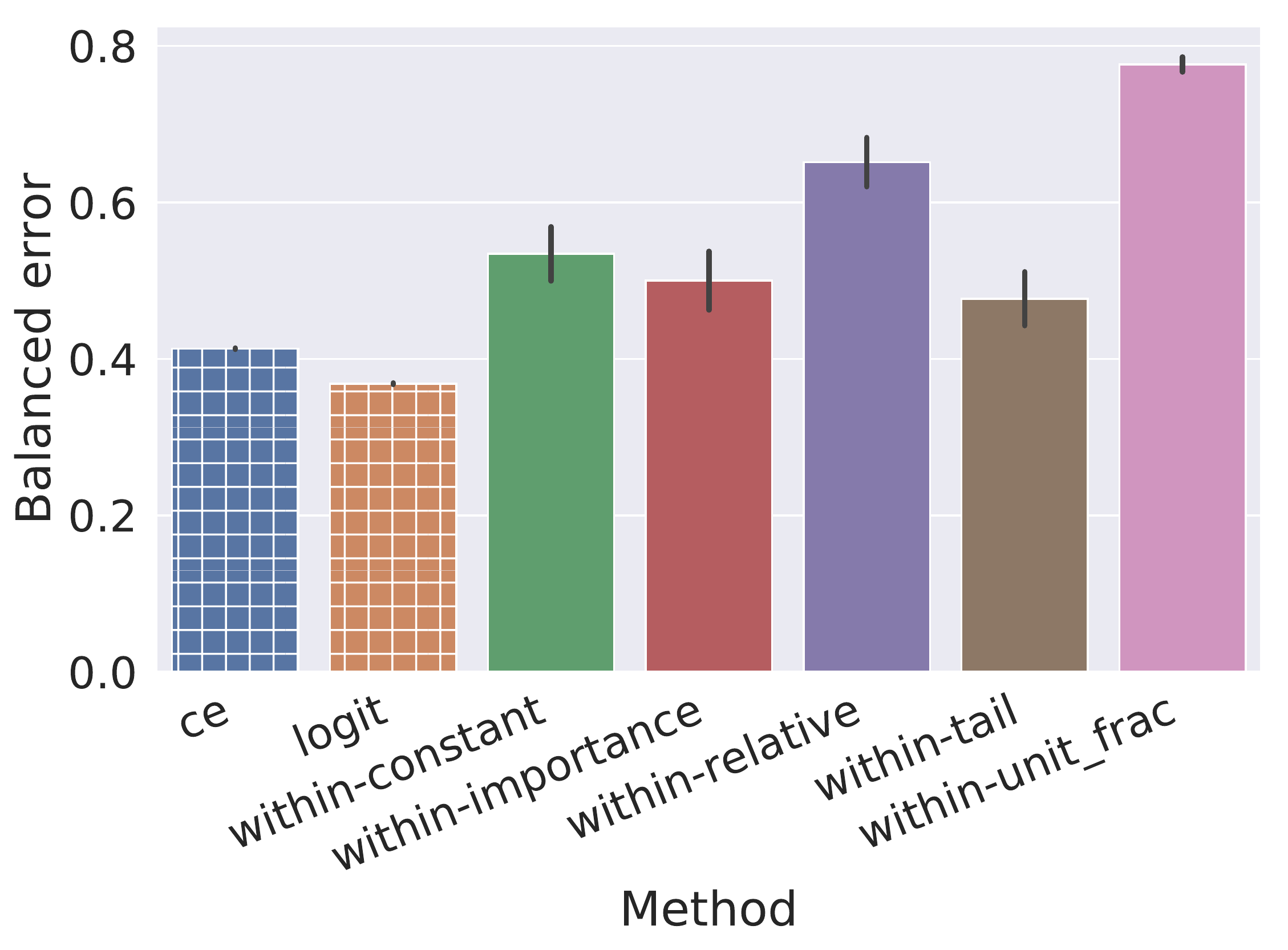}}
    }

    \caption{\textbf{(a)}, \textbf{(b)}: Results on head, torso, and tail labels on iNaturalist 2018 using uniform (${\tt unif}$) (in \textbf{(a)}) and within-batch (${\tt within}$)  (in \textbf{(b)}) negative sampling.
    \textbf{(c)}, \textbf{(d)}:
      Balanced error on iNaturalist 2018.
    We present results for uniform (${\tt unif}$) and within-batch (${\tt within}$) and     negative sampling,
    using the constant weight (${\tt const}$), importance weighting (${\tt importance}$), and relative weighting (${\tt relative}$) schemes from Table~\ref{tbl:summary-decoupled}.}
    \label{fig:inat18_ce}
\end{figure*}

\section{Retrieval datasets}
\label{appen:dataset-details}

\begin{table*}[ht!]
    \centering
    \setlength{\tabcolsep}{4pt}
    \scalebox{0.82}{
    \begin{tabular}{lllllll}
    \toprule
        {\bf Dataset}                  & {\bf \#Features} &  {\bf \#Labels} & {\bf \#Train Points} &{\bf \#Test Points}  & {\bf Average \#I/L} & {\bf Average \#L/I} \\
        \toprule
        \delicioussmall & 500 & 983 & 12920 & 3185 & 311.61 & 19.03 \\
        \amazonsmall  & 203,882     & 13,330    & 1,186,239 & 306,782 & 448.57 & 5.04\\
        \wikilshtc  & 1,617,899    & 325,056   & 1,778,351 & 587,084 & 17.46 & 3.19\\
        \bottomrule
    \end{tabular}
    }
	\caption{Summary of the extreme classification datasets used in this paper~\citep{V18}. \#I/L is the number of instances per label, and \#L/I is the number of labels per instance.}\label{tb:datasets}
    \vspace{-\baselineskip}
\end{table*}

\section{Additional results: Retrieval datasets}
\label{sec:additional-retrieval}

\begin{figure*}[ht!]
    \centering
    \small
    \resizebox{0.9\linewidth}{!}{
    \subcaptionbox{$\recall$~1.\label{fig:am-r1-u}}{
    \includegraphics[scale=0.33,valign=t]{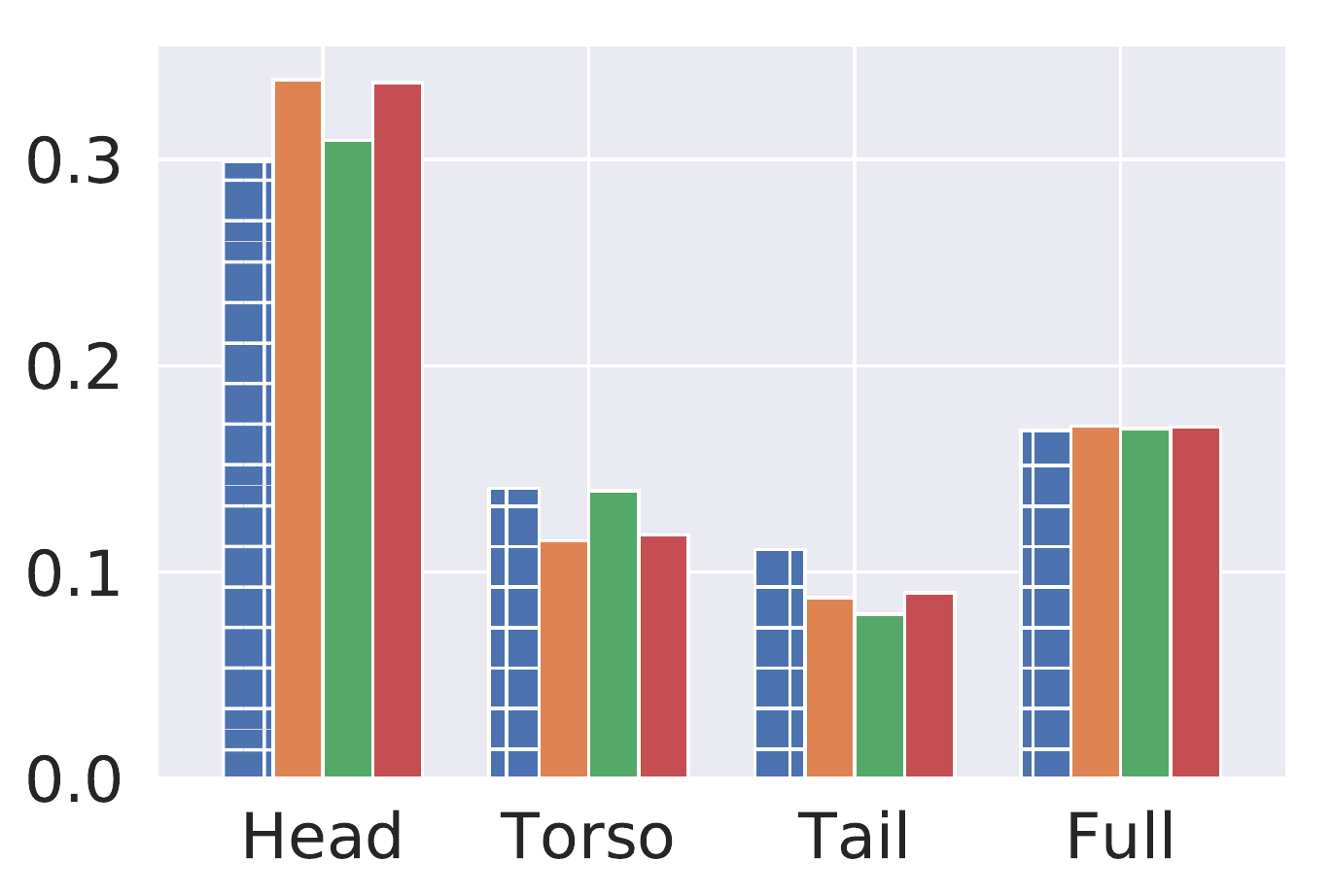}
    }
    \qquad
    \subcaptionbox{$\recall$ 10.\label{fig:am-r10-u}}{
    \includegraphics[scale=0.33,valign=t]{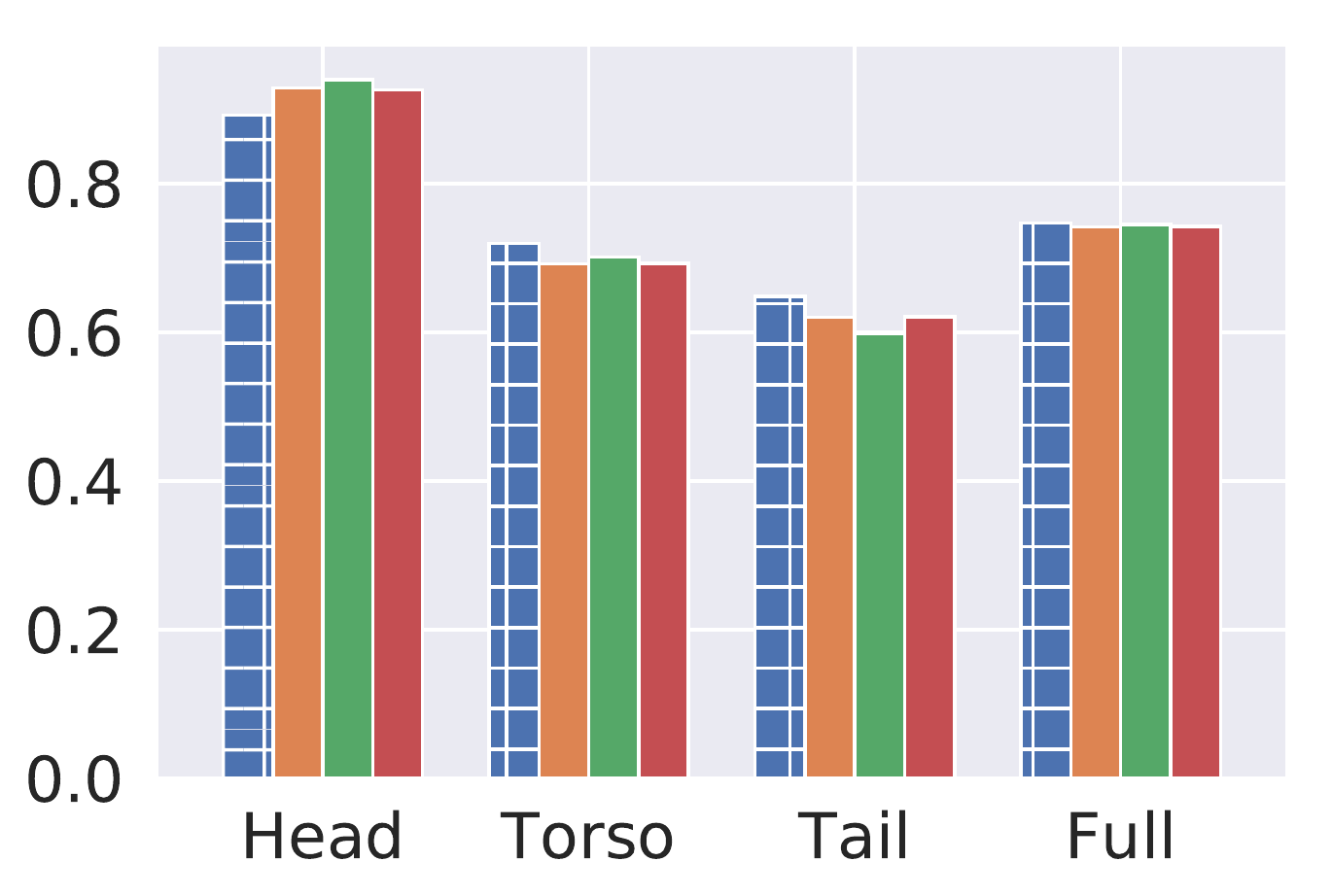}
    }
    \qquad
    \subcaptionbox{$\recall$ 50.\label{fig:am-r50-u}}{
    \includegraphics[scale=0.33,valign=t]{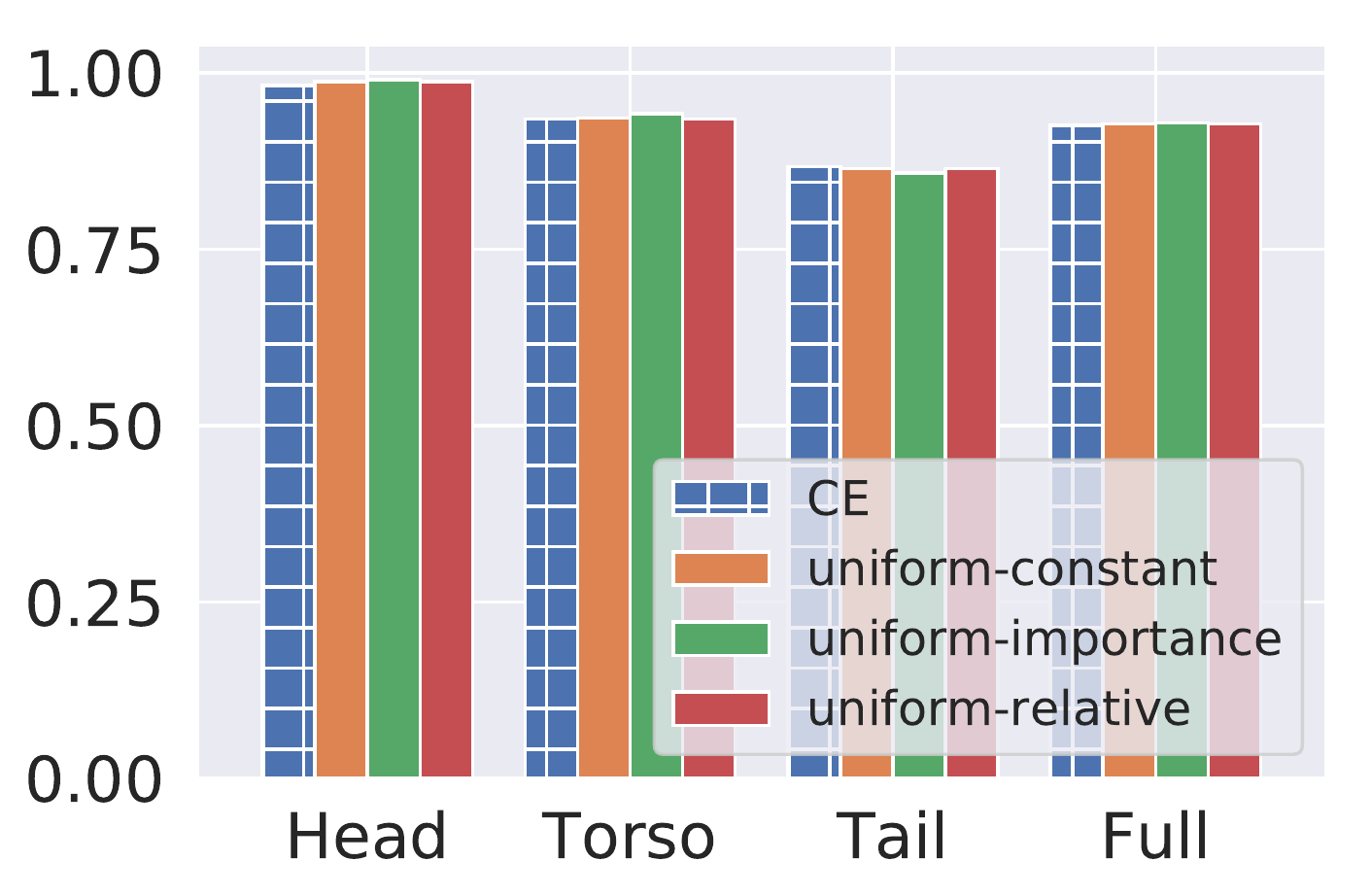}
    }
    }

    \resizebox{0.9\linewidth}{!}{
    \subcaptionbox{$\recall$ 1.\label{fig:wiki-r1-u}}{
    \includegraphics[scale=0.33,valign=t]{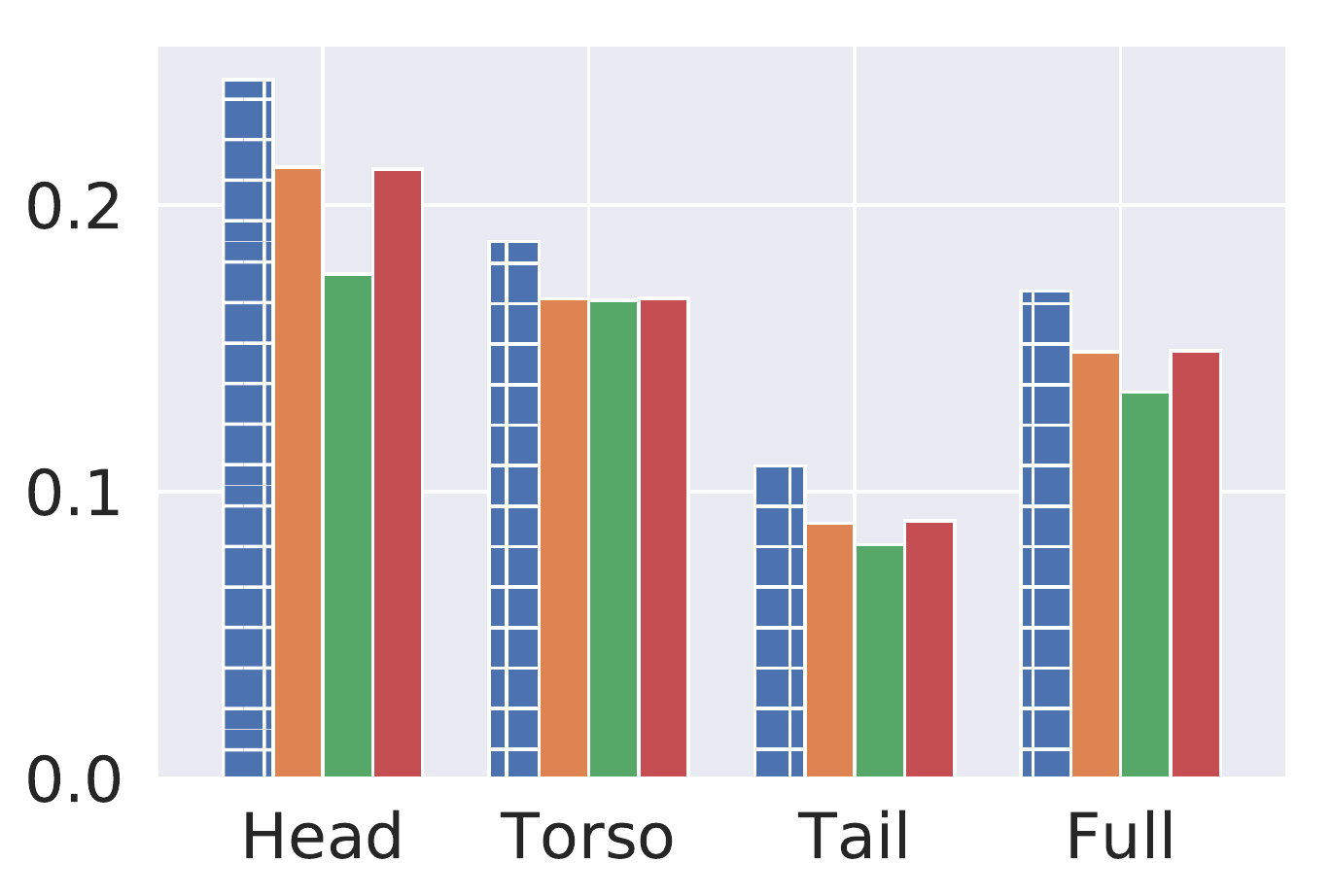}
    }
    \qquad
    \subcaptionbox{$\recall$ 10.\label{fig:wiki-r10-u}}{
    \includegraphics[scale=0.33,valign=t]{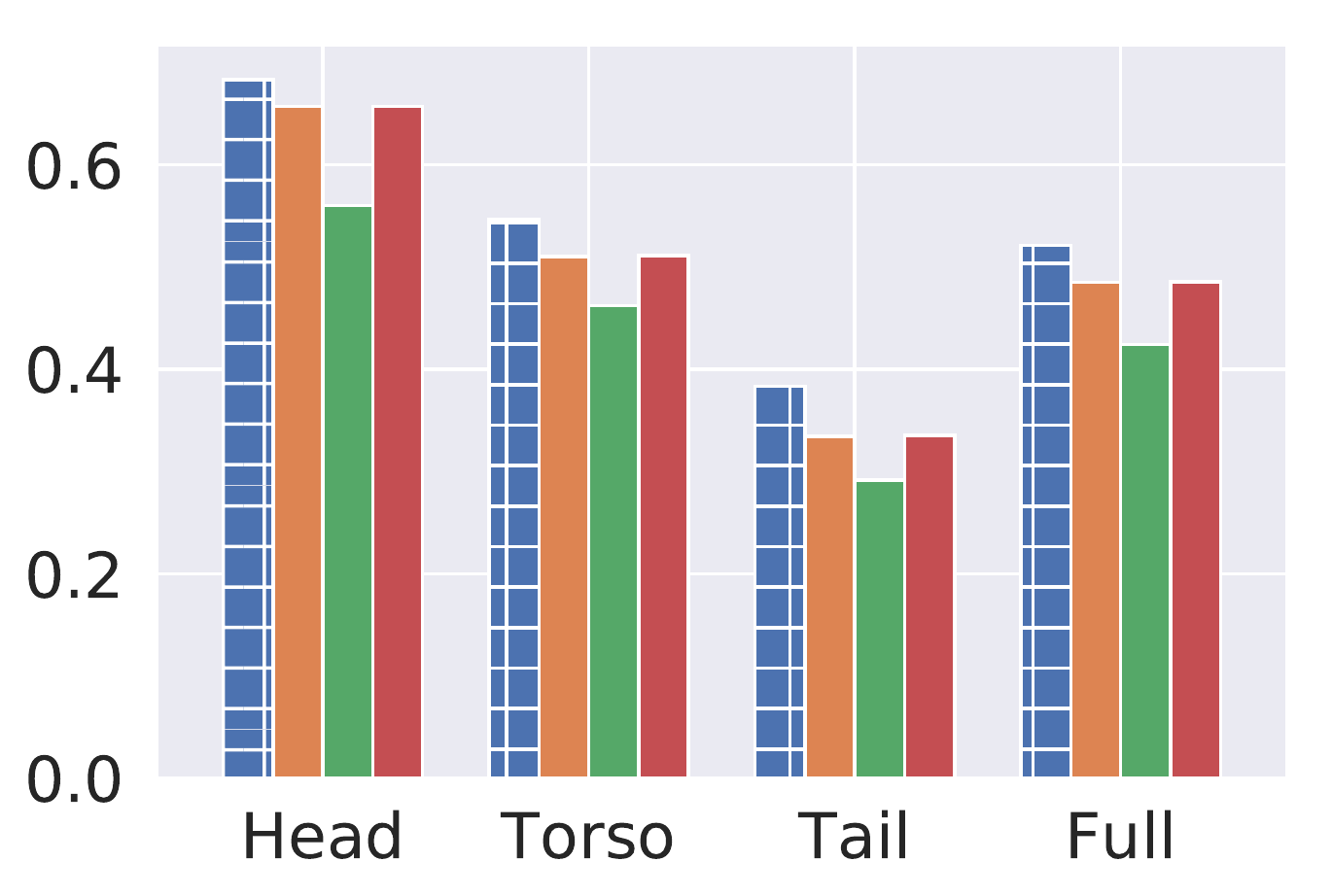}
    }
    \qquad
    \subcaptionbox{$\recall$ 50.\label{fig:wiki-r50-u}}{
    \includegraphics[scale=0.33,valign=t]{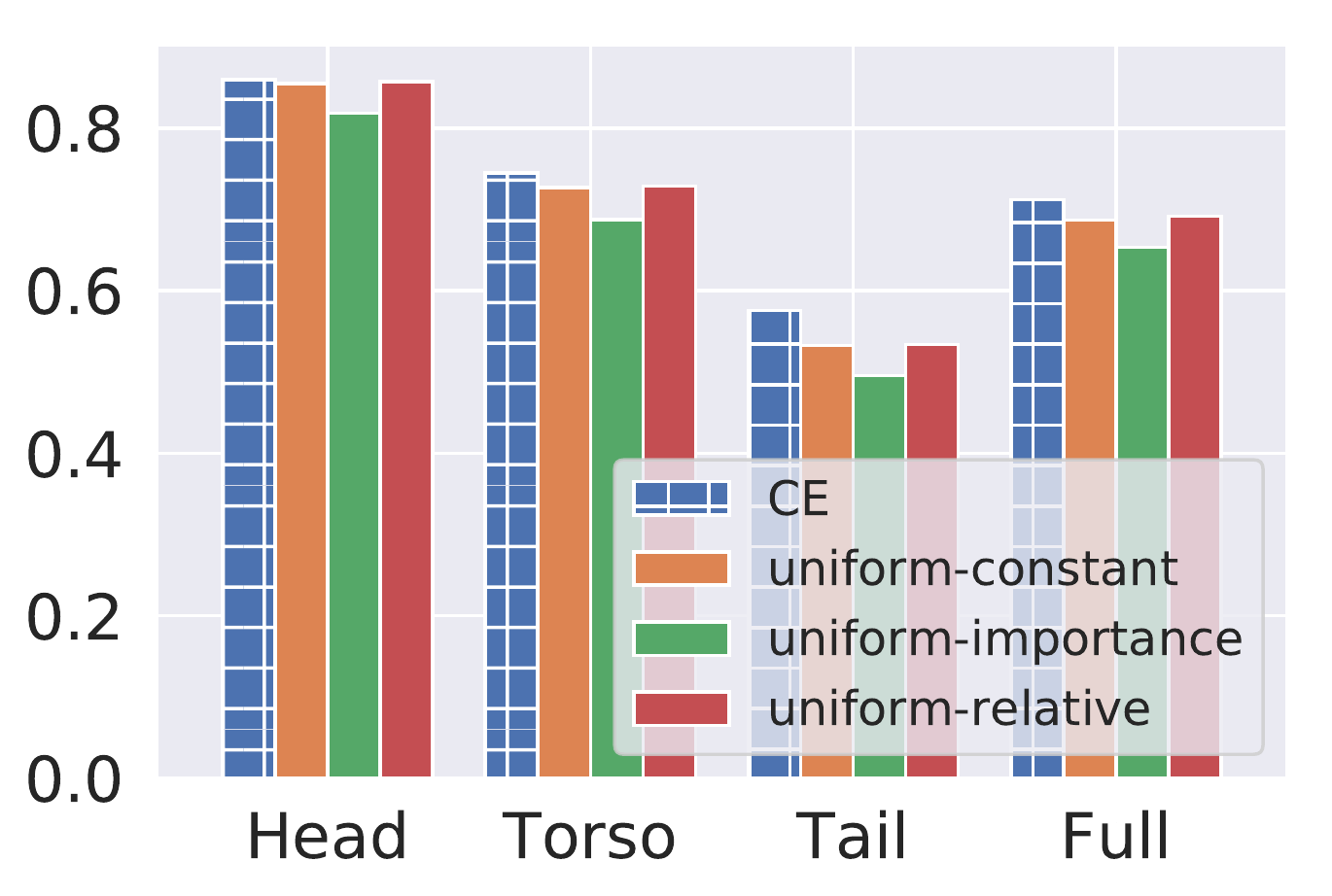}
    }
    }

    \resizebox{0.9\linewidth}{!}{
    \subcaptionbox{$\recall$ 1.\label{fig:deli-r1-u}}{
    \includegraphics[scale=0.33,valign=t]{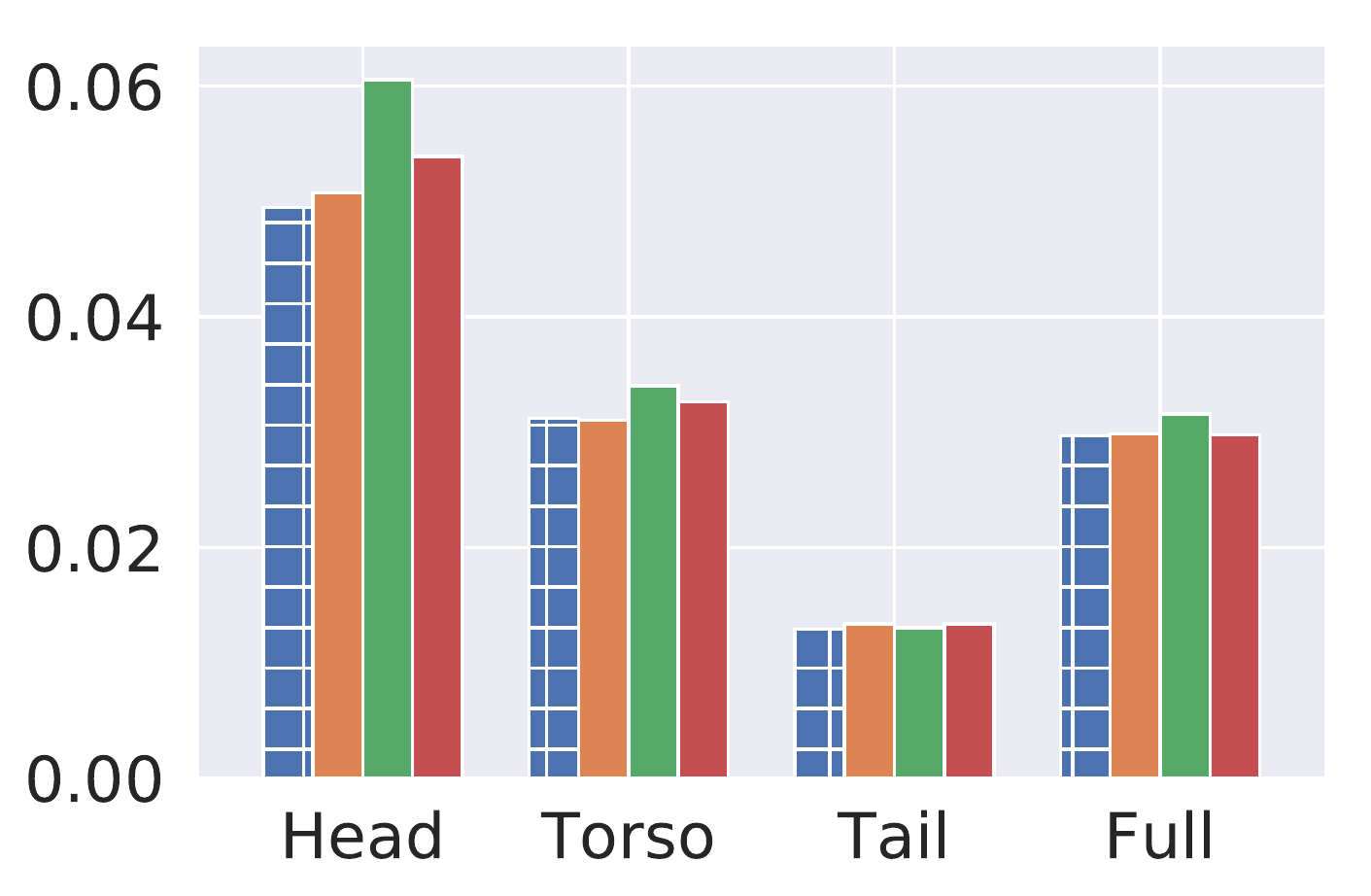}
    }
    \qquad
    \subcaptionbox{$\recall$ 10.\label{fig:deli-r10-u}}{
    \includegraphics[scale=0.33,valign=t]{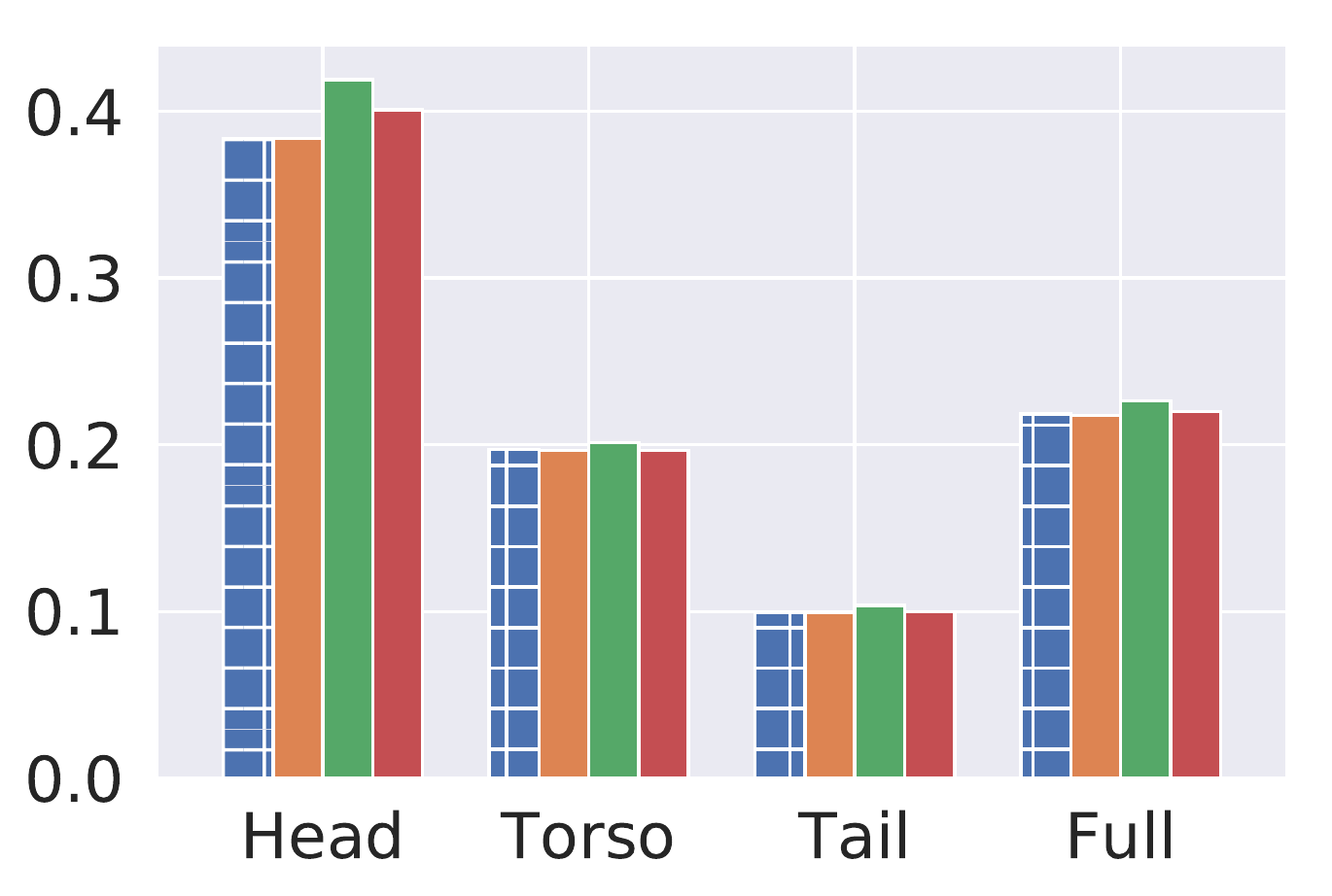}
    }
    \qquad
    \subcaptionbox{$\recall$ 50.\label{fig:deli-r50-u}}{
    \includegraphics[scale=0.33,valign=t]{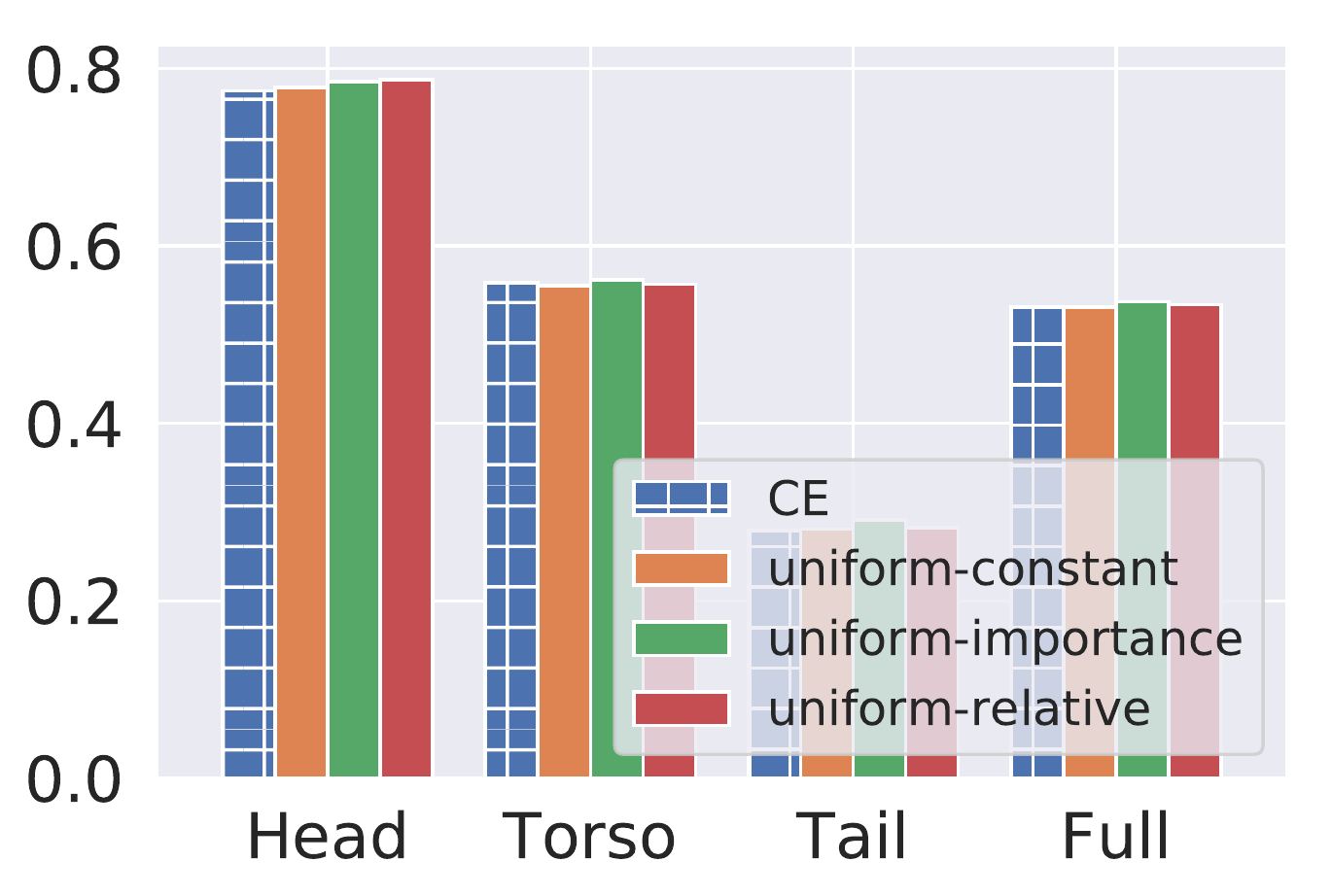}
    }
    }
    \caption{Performance of uniform negative sampling based cross-entropy loss (cf.~\eqref{eqn:weighted-sampled-softmax}) on \amazonsmall~(Figure~\ref{fig:am-r1-u} - \ref{fig:am-r50-u}), \wikilshtc~(Fig.~\ref{fig:wiki-r1-u} - \ref{fig:wiki-r50-u}), and \delicioussmall~(Fig.~\ref{fig:deli-r1-u} - \ref{fig:deli-r50-u}). These experiments utilize $m=256$ negative for \amazonsmall{} and \wikilshtc{}, and $m = 64$ negatives for \delicioussmall{}. We report the performance on three subpopulations (Head, Torso, and Tail) and the entire test set (Full), as measured by $\recall{k}$ for $k$ = 1, 10, and 50. We combine uniform sampling with constant, importance, and relative weighting schemes. For reference, we include the results of standard softmax cross-entropy loss (${\tt ce}$). Note that for the retrieval datasets, the uniform sampling aligns with ${\tt ce}$ as it consistently focuses on Head, Torso, and Tail in that order. This is in contrast with the within-batch sampling (cf.~Figure~\ref{fig:rb}).}
    \label{fig:rb-unif}
\end{figure*}

\begin{figure*}[ht!]
    \centering
    \small

    \resizebox{0.9\linewidth}{!}{
    \subcaptionbox{\amazonsmall.\label{fig:am-p-w}}{
    \includegraphics[scale=0.33,valign=t]{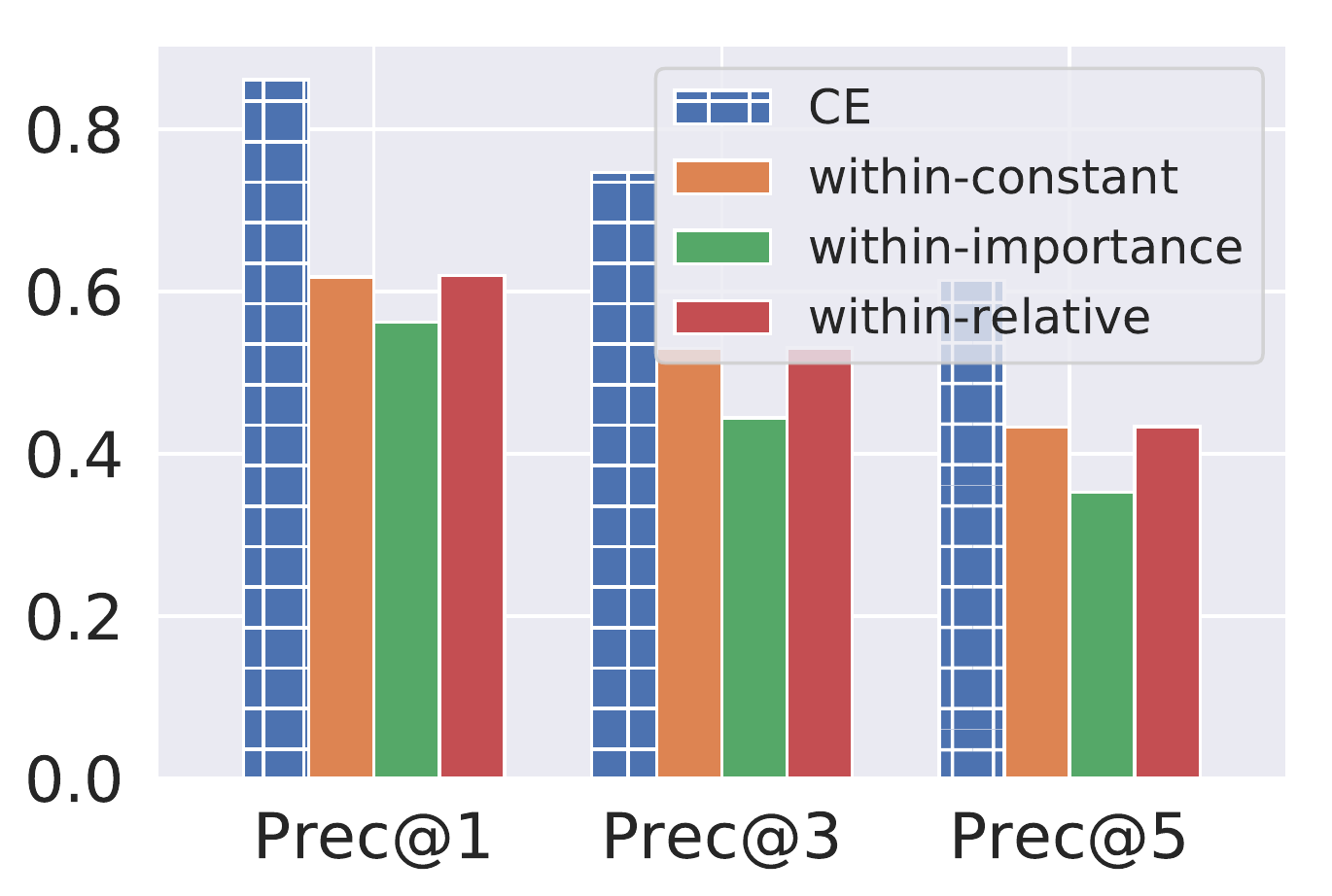}
    }
    \qquad
    \subcaptionbox{\wikilshtc.\label{fig:wiki-p-w}}{
    \includegraphics[scale=0.33,valign=t]{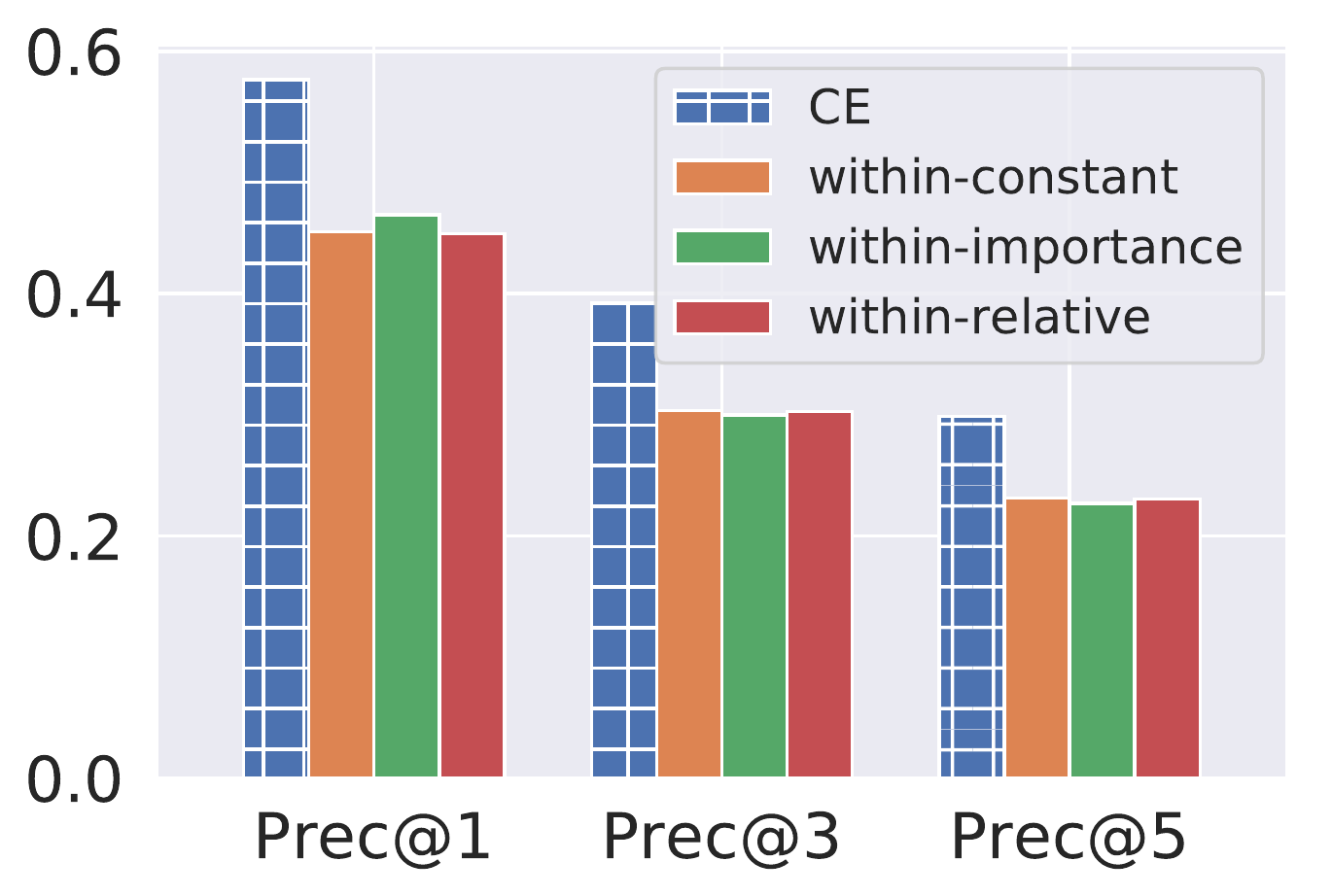}
    }
    \qquad
    \subcaptionbox{\delicioussmall.\label{fig:deli-p-w}}{
    \includegraphics[scale=0.33,valign=t]{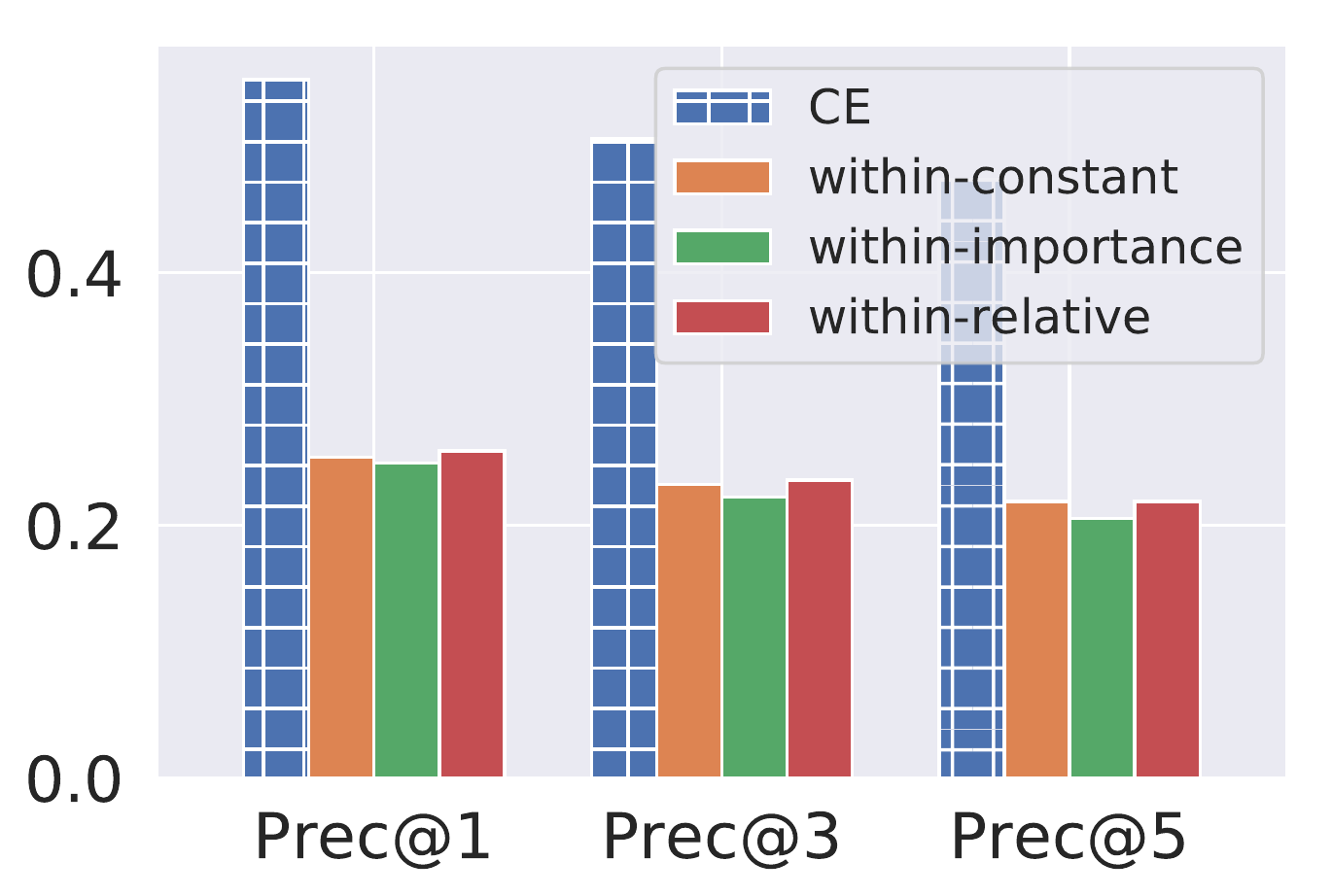}
    }
    }
    \resizebox{0.9\linewidth}{!}{
    \subcaptionbox{\amazonsmall.\label{fig:am-p-u}}{
    \includegraphics[scale=0.33,valign=t]{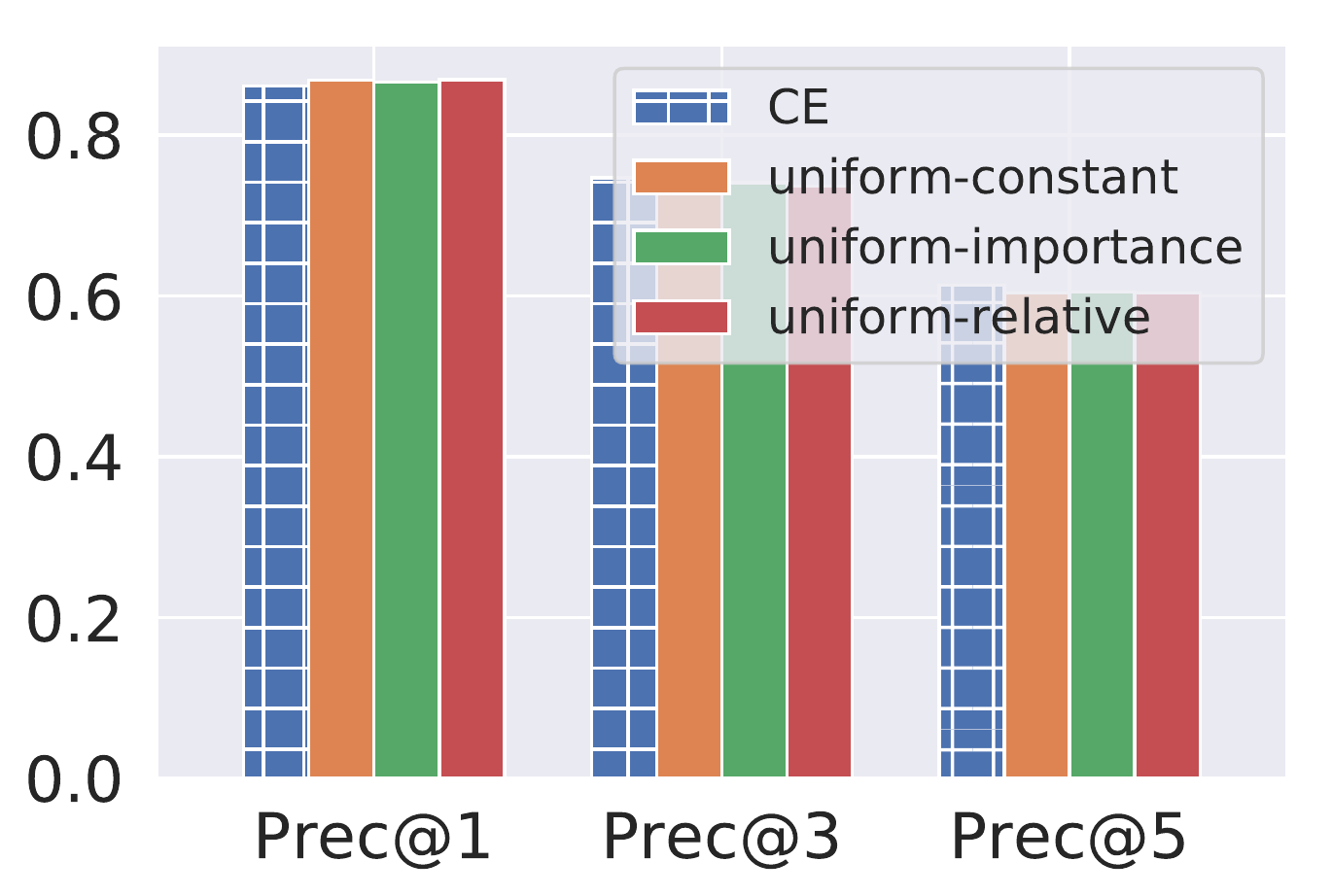}
    }
    \qquad
    \subcaptionbox{\wikilshtc.\label{fig:wiki-p-u}}{
    \includegraphics[scale=0.33,valign=t]{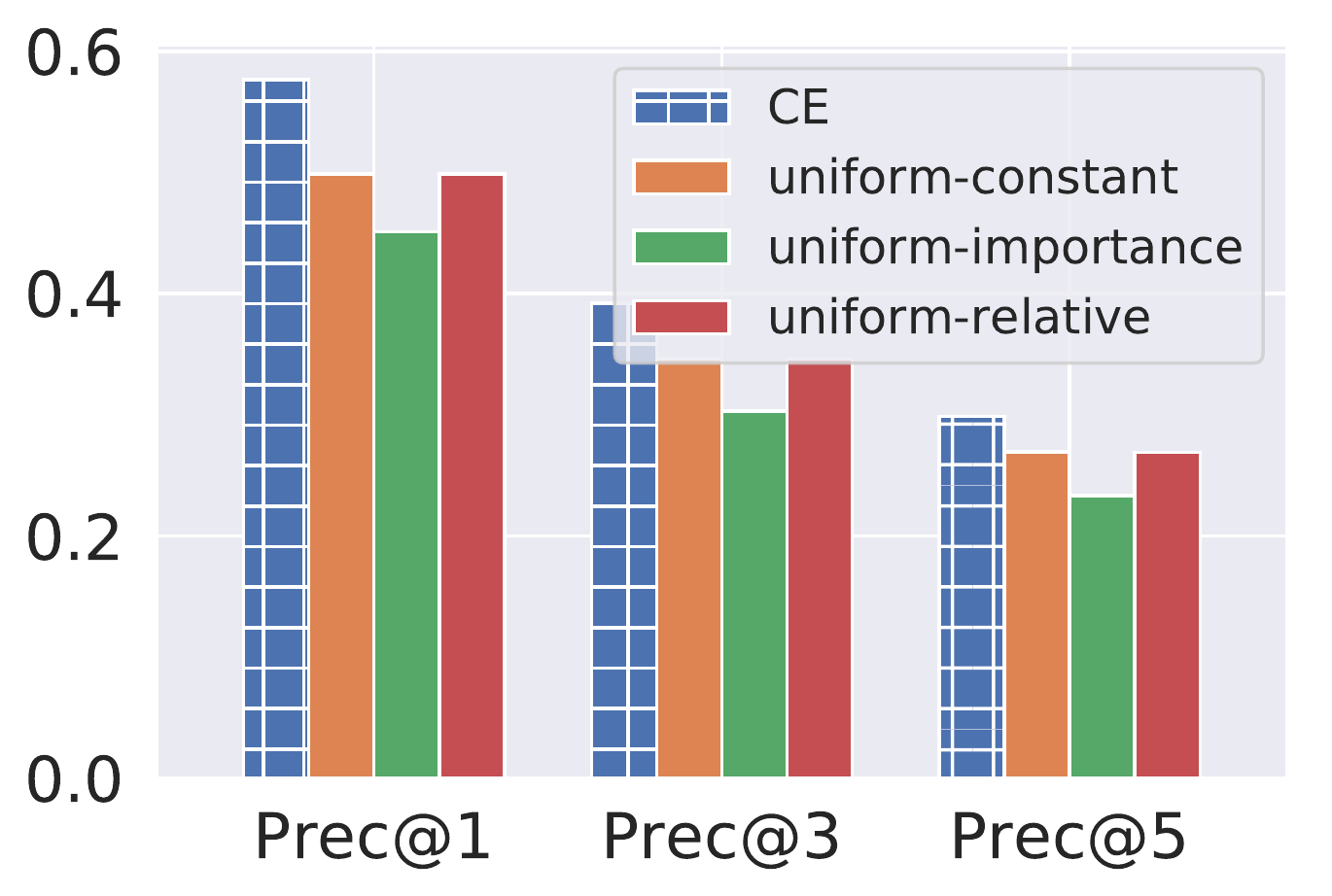}
    }
    \qquad
    \subcaptionbox{\delicioussmall.\label{fig:deli-p-u}}{
    \includegraphics[scale=0.33,valign=t]{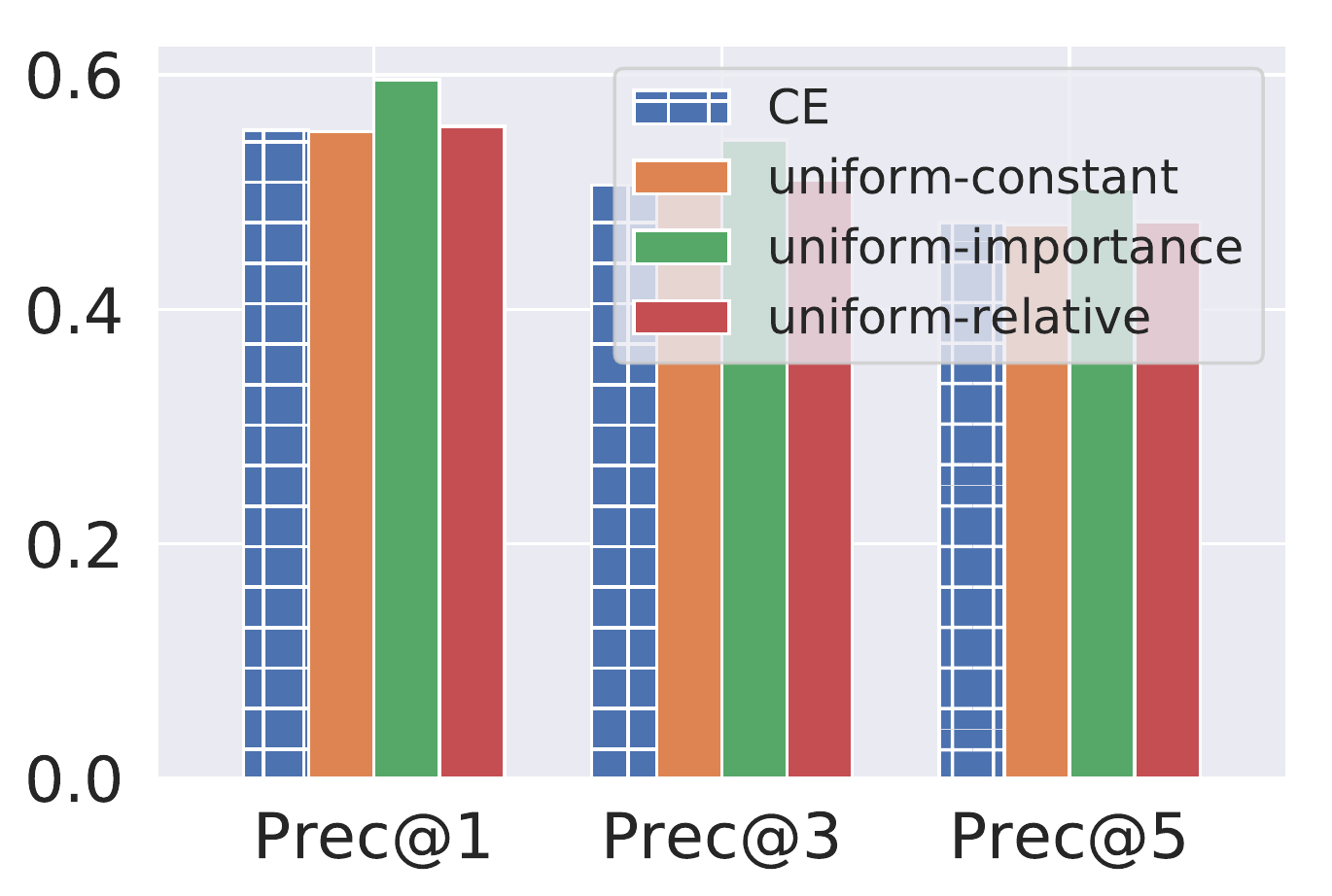}
    }
    }

    \caption{Performance of within-batch and uniform sampling on the entire original (multilabel) test sets of \amazonsmall, \wikilshtc, and \delicioussmall, as measured by $\precision{k}$ for $k$ = 1, 3, and 5. These experiments utilize $m=256$ negative for \amazonsmall{} and \wikilshtc{}, and $m = 64$ negatives for \delicioussmall{}. For reference, we include the results of standard softmax cross-entropy loss (${\tt ce}$). Note that these results are included here for completeness, as they are often reported in the literature as the performance measure. Since these results do not breakdown the performance on different subpopulations, they do not highlight the impact of sampling distribution and weighting schemes on various subpopulations.}
    \label{fig:rb-unif-v2}
\end{figure*}

\end{document}